\def\BibTeX{{\rm B\kern-.05em{\sc i\kern-.025em b}\kern-.08em
    T\kern-.1667em\lower.7ex\hbox{E}\kern-.125emX}}
\newacronym{rl}{RL}{Reinforcement Learning}
\newacronym{drl}{DRL}{Deep Reinforcement Learning}
\newacronym{hrl}{HRL}{Hierarchical Reinforcement Learning}
\newacronym{saferl}{SafeRL}{Safe Reinforcement Learning}
\newacronym{avi}{AVI}{Approximate Value-Iteration}
\newacronym{api}{API}{Approximate Policy-Iteration}
\newacronym[plural=MDPs, firstplural=Markov Decision Processes (MDPs)]{mdp}{MDP}{Markov Decision Process}
\newacronym{cmdp}{CMDP}{Constrained Markov Decision Processes}
\newacronym{safeexp}{SafeExp}{Safe Exploration}
\newacronym{kl}{KL}{Kullback-Leibler Divergence}
\newacronym{gae}{GAE}{Generalized Advantage Estimation}
\newacronym{papi}{PAPI}{Projections for Approximate Policy Iteration}
\newacronym{her}{HER}{Hindsight Experience Replay}
\newacronym{ham}{HAM}{Hierarchy of Abstract Machines}
\newacronym{mom}{MOM}{Measure of Manipulability}
\newacronym{bo}{BO}{Bayesian Optimization}
\newacronym{hebo}{HEBO}{Heteroscedastic Evolutionary Bayesian Optimisation}
\newacronym{ucb}{UCB}{Upper Confidence Bound}
\newacronym{pi}{PI}{Probability of Improvement}
\newacronym{ei}{EI}{Expected Improvement}
\newacronym{nl}{NLP}{Nonlinear Programming}
\newacronym{lp}{LP}{Linear Programming}
\newacronym{qp}{QP}{Quadratic Programming}
\newacronym{aqp}{AQP}{Anchored Quadratic Programming}
\newacronym{ode}{ODE}{Ordinary Differential Equation}
\newacronym{atacom}{ATACOM}{Acting on the TAngent Space of the COnstraint Manifold}
\newacronym{ivp}{IVP}{Initial Value Problem}
\newacronym{rref}{RREF}{Reduced Row Echlon Form}
\newacronym{rcef}{RCEF}{Reduced Column Echlon Form}
\newacronym{cpo}{CPO}{Constrained Policy Optimization}
\newacronym{trpo}{TRPO}{Trust Region Policy Optimization}
\newacronym{rmp}{RMP}{Riemannian Motion Policies}
\newacronym{dnn}{DNN}{Deep Neural Networks}
\newacronym{sdf}{SDF}{Signed Distance Function}
\newacronym{redsdf}{ReDSDF}{Regularized Deep Signed Distance Fields}
\newacronym{apf}{APF}{Artificial Potential Fields}
\newacronym{hri}{HRI}{Human-Robot Interaction}
\newacronym{poi}{PoI}{Point of Interest}
\newacronym{mpc}{MPC}{Model Predictive Control}
\newacronym{dcs}{DCS}{Directly Controllable State}
\newacronym{dus}{DUS}{Directly Uncontrollable State}
\newacronym{cbf}{CBF}{Control Barrier Function}
\newacronym{iss}{ISS}{Input-to-State Stable}
\def\Int{\mathrm{Int}}
\def\Bound{\mathrm{\partial}}
\def\EV{\mathbb{E}}
\def\RR{\mathbb{R}}
\def\ME{\mathcal{E}}
\def\MS{\mathcal{S}}
\def\MM{\mathcal{M}}
\def\MC{\mathcal{C}}
\def\MD{\mathcal{D}}
\def\TanS{\mathrm{T}}
\def\vzero{{\bm{0}}}
\def\vmu{{\bm{\mu}}}
\def\vepsilon{\bm{\epsilon}}
\def\vvepsilon{\bm{\varepsilon}}
\def\vpsi{{\bm{\psi}}}
\def\va{{\bm{a}}}
\def\vb{{\bm{b}}}
\def\vn{{\bm{n}}}
\def\vp{{\bm{p}}}
\def\vq{{\bm{q}}}
\def\vs{{\bm{s}}}
\def\vu{{\bm{u}}}
\def\vv{{\bm{v}}}
\def\vx{{\bm{x}}}
\def\vy{{\bm{y}}}
\def\mA{{\bm{A}}}
\def\mB{{\bm{B}}}
\def\mG{{\bm{G}}}
\def\mJ{{\bm{J}}}
\def\mQ{{\bm{Q}}}
\def\mT{{\bm{T}}}
\def\mU{{\bm{U}}}
\def\mV{{\bm{V}}}
\def\mX{{\bm{X}}}
\def\mSigma{{\bm{\Sigma}}}
\newtheorem{theorem}{Theorem}
\newtheorem{proposition}{Proposition}
\newtheorem{definition}{Definition}
\newtheorem{assumption}{Assumption}
\newtheorem{lemma}{Lemma}
\theoremstyle{remark}
\newtheorem{remark}{Remark}
\newtheorem{example}{Example}
\newcommand{\reviseRone}[1]{\textcolor{black}{#1}}
\newcommand{\reviseRtwo}[1]{\textcolor{black}{#1}}
\newcommand{\reviseRthree}[1]{\textcolor{black}{#1}}
\newcounter{reviewer}
\newcounter{point}[reviewer]
\renewcommand{\thepoint}{P\,\thereviewer.\arabic{point}} 
\newcommand{\shortreply}[2][]{\medskip \noindent \begin{sf}\textbf{Reply}:\  #2
	\ifthenelse{\equal{#1}{}}{}{ \hfill \footnotesize (#1)}%
	\medskip \end{sf}}
\def\StateN{s}
\def\StateVar{\bm{\StateN}}
\def\StateControllableN{q}
\def\StateControllableVar{\bm{\StateControllableN}}
\def\StateUncontrollableN{z}
\def\StateUncontrollableVar{\bm{\StateUncontrollableN}}
\def\Slack{\mu}
\def\SlackVar{\bm{\Slack}}
\def\StateSpace{\MS}
\def\ControlSpace{\mathcal{U}}
\def\AugmentedSpace{\MD}
\def\SafeSet{\MC}
\def\ConstrManifold{\MM}
\def\SingularSet{\mathcal{Y}}
\def\StateSpaceControllable{\mathcal{Q}}
\def\StateSpaceUncontrollable{\mathcal{Z}}
\def\DimState{S}
\def\DimConstr{K}
\def\DimConstrEquality{L}
\def\DimAugmentedState{N}
\def\DimControl{U}
\def\Constr{k}
\def\ConstrVec{\bm{\Constr}}
\def\ConstrEq{c}
\def\ConstrEquality{l}
\def\ConstrEqVec{\bm{\ConstrEq}}
\def\JacConstr{J_\Constr}
\def\JacConstrMat{\mJ_\Constr}
\def\JacConstrEquality{J_\ConstrEquality}
\def\JacConstrEqualityMat{\mJ_\ConstrEquality}
\def\JacConstrEq{J_\ConstrEq}
\def\JacConstrEqMat{\mJ_\ConstrEq}
\def\JacInputG{J_G}
\def\JacInputGMat{\mJ_G}
\def\JacInput{J_u}
\def\JacInputMat{\mJ_u}
\def\JacConstrControllable{J_\StateControllableN}
\def\JacConstrUncontrollable{J_\StateUncontrollableN}
\def\JacConstrControllableMat{\mJ_\StateControllableN}
\def\JacConstrUncontrollableMat{\mJ_\StateUncontrollableN}
\def\BasisVec{\vb}
\def\BasisTangentInput{B_u}
\def\BasisTangentInputMat{\mB_u}
\def\Dynf{f}
\def\DynfVec{\bm{\Dynf}}
\def\DynG{G}
\def\DynGMat{\bm{\DynG}}
\def\ControlInput{\vu_s}
\def\ControlSlack{\vu_\mu}
\def\ControlInputControllable{\vu_q}
\def\DynSlack{\alpha}
\def\DynSlackFun{A}
\def\DynSlackMat{\bm{\DynSlackFun}}
\def\DriftFun{\psi}
\def\DriftVec{\vpsi}
\begin{document}
\title{Safe Reinforcement Learning on the Constraint Manifold: Theory and Applications}
\author{
Puze Liu \IEEEmembership{Member, IEEE}, 
Haitham Bou-Ammar, 
Jan Peters \IEEEmembership{Fellow, IEEE}, 
Davide Tateo \IEEEmembership{Member, IEEE}
\thanks{P. Liu and D. Tateo are the Intelligent Autonomous Systems Group at the Technical University of Darmstadt, Germany. Correspondence to: puze@robot-learning.de}
\thanks{H. Bou-Ammar is with Huawei R\&D London, United Kingdom}
\thanks{P. Liu and J. Peters are also with the Department of Systems AI for Robot Learning, German Research Center for AI} 
\thanks{J. Peters is also with the Hessian Centre for Artificial Intelligence and the Centre of
Cognitive Science}
}

\IEEEspecialpapernotice{
\textnormal{
\textcolor{magenta}{This work has been submitted to the IEEE for possible publication.\\
Copyright may be transferred without notice, after which this version may no longer be accessible.}}
\vspace{-2em}
}

% \markboth{IEEE Transactions on Robotics}%
% \markboth{Submitted to IEEE Transactions on Robotics}%
% {P. Liu, H. Bou-Ammar, \MakeLowercase{\textit{(et al.)}}: 
% Safe Reinforcement Learning on the Constraint Manifold: Theory and Applications}

\maketitle

\begin{abstract}
    Integrating learning-based techniques, especially reinforcement learning, into robotics is promising for solving complex problems in unstructured environments. 
    However, most existing approaches are trained in well-tuned simulators and subsequently deployed on real robots without online fine-tuning. 
    \reviseRone{In this setting, extensive engineering is required to mitigate the sim-to-real gap, which can be challenging for complex systems.
    Instead, learning with real-world interaction data offers a promising alternative: it not only eliminates the need for a fine-tuned simulator but also applies to a broader range of tasks where accurate modeling is unfeasible.}
    One major problem for on-robot reinforcement learning is ensuring safety, as uncontrolled exploration can cause catastrophic damage to the robot or the environment. Indeed, safety specifications, often represented as constraints, can be complex and non-linear, making safety challenging to guarantee in learning systems.
    In this paper, we show how we can impose complex safety constraints on learning-based robotics systems in a principled manner, both from theoretical and practical points of view. 
    Our approach is based on the concept of the Constraint Manifold, representing the set of safe robot configurations. 
    Exploiting differential geometry techniques, i.e., the tangent space, we can construct a safe action space, allowing learning agents to sample arbitrary actions while ensuring safety. 
    We demonstrate the method's effectiveness in a real-world Robot Air Hockey task, showing that our method can handle high-dimensional tasks with complex constraints. 
    Videos of the real robot experiments are available  \href{https://puzeliu.github.io/TRO-ATACOM}{publicly}\footnote{https://puzeliu.github.io/TRO-ATACOM}.
\end{abstract}

\begin{IEEEkeywords}
Safe Reinforcement Learning, Constraint Manifold, Safe Exploration
\end{IEEEkeywords}

\section{Introduction}
\IEEEPARstart{D}{eploying} robots in real-world environments to solve various tasks is a challenging objective. To achieve this objective, we need to solve many open problems in robotics, including perception, long-term planning, reactive motion generation, and interaction with humans.
Unfortunately, while extremely successful in controlled environments, classical robotics techniques struggle to deal with the complexity of the real world. 
To deal with these issues, researchers have started to incorporate machine learning approaches in robotic systems, allowing robots to achieve control performance and reactiveness to disturbances that are on par with or even outperform the best classical approaches existing, e.g., robot parkour~\cite{zhuang2023robot_parkour, hoeller2024anymal_parkour}, in-hand manipulation~\cite{chen2023visual}, and drone racing~\cite{song2023reaching}.
\begin{figure}
    \centering
    \includegraphics[width=\linewidth]{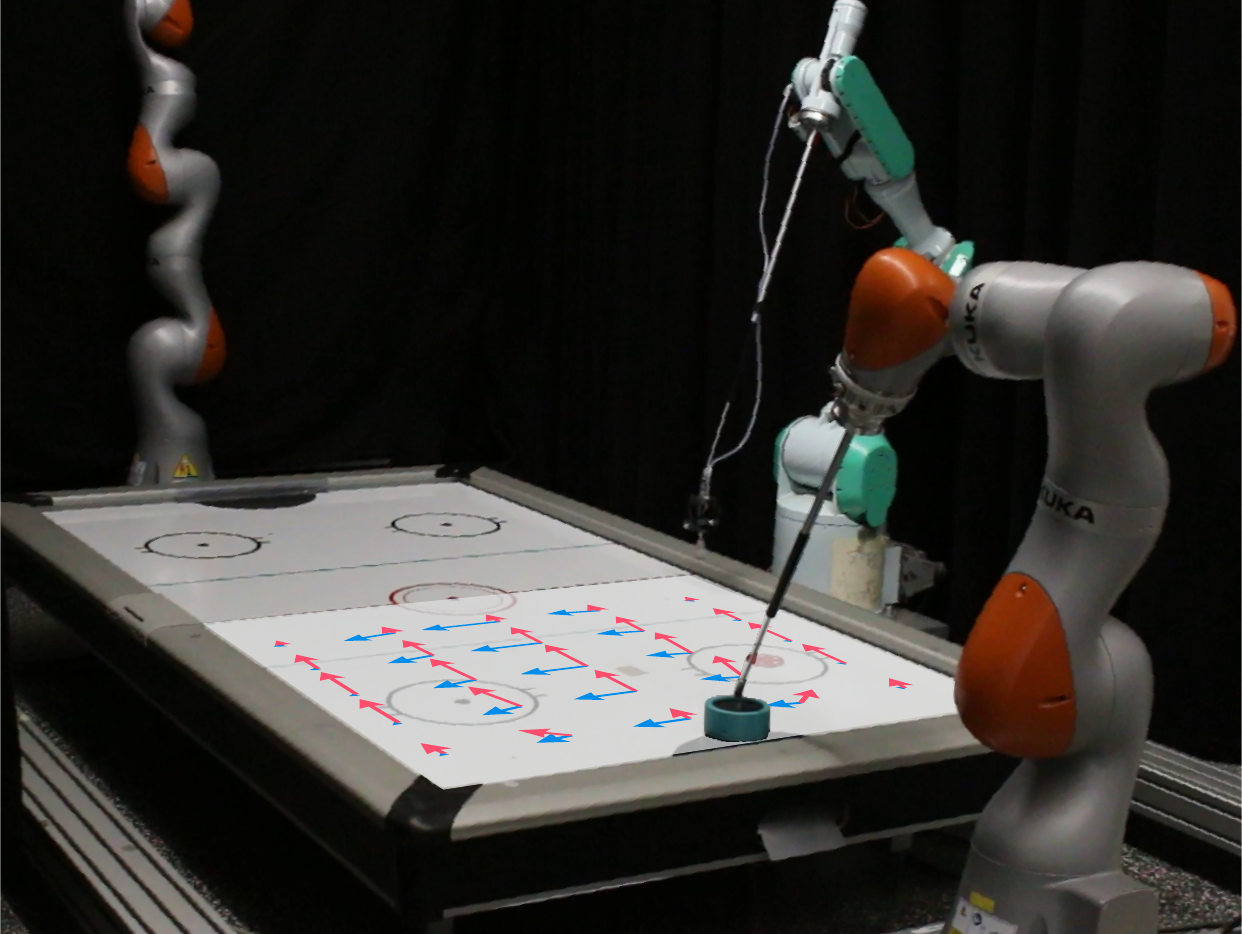}
    \caption{The robot air hockey task. The objective is to strike the puck to the opponent's goal. The vector field shows the velocity of the end-effector at different locations when a positive unit action is applied in the first two dimensions of the safety action space using \gls{atacom}. The blue (resp. red) arrow corresponds to a unit action applied in the first (resp. second) dimension.}
    \label{fig:real_robot_air_hockey}
    % \vspace{-1em}
\end{figure}
Nevertheless, the learning process is often only performed before deployment, as most existing approaches rely on offline data and simulators, not allowing online adaptation while interacting in the real world.
Indeed, allowing the robot control policy to change and improve during its deployment is essential to deal with dynamic environments, actuator wear and tear, the simulator-reality gap, malfunctioning sensors, and unexpected environmental conditions~\cite{ibarz2021train}.

Theoretically, this online adaptation is feasible using the \gls{rl} framework, which describes an optimization technique for the policy under a reward function. However, many key problems are preventing the usage of online \gls{rl} techniques in the real world. On the one hand, \gls{rl} method often requires an unreasonable large amount of data to learn the desired policy and to adapt to environmental variations. On the other hand, the learning process can arbitrarily change the policy and require explorative actions, possibly causing the robotic system to take dangerous actions that may harm people or cause damage to the environment or to the robot itself.
While the effort of the \gls{rl}, in general, and the Robot Learning community, in particular, has primarily focused on improving the efficiency of the learning algorithm, the importance of acting under safety constraints keeps increasing.
Complying with safety constraints is fundamental for allowing robots to operate in the real world, but, unfortunately, imposing safety constraints on a learning system is not as straightforward as in a classical robotic system.

Among many techniques to enforce safety constraints on learning agents, the most popular one is to exploit the \gls{saferl} framework. This framework is based on the \gls{cmdp} formalism~\cite{altman1999constrained}, and the objective is to ensure the learned policy fulfills the safety constraints while maximizing the reward function. By construction, these methods do not ensure the fulfillment of safety constraints during the learning process; they only converge to a final safe policy. Therefore, these algorithms are unsuitable for online adaptation of the learning agents in the real world, where any safety constraint violation should be avoided during the training process.
\reviseRone{
To address this challenge, researchers have developed Safe Exploration approaches that ensure safety throughout the entire learning process~\cite{hans2008safe, moldovan2012safe} by integrating prior knowledge, such as robot dynamics~\cite{shao2021reachability}, pre-defined constraints~\cite{ames2019control}, or demonstrations~\cite{robey2020learning}, to derive effective safety constraints. Once these constraints are established, safety is typically enforced using a safety filter. This filter can involve solving a constrained optimization problem, such as a quadratic program (QP), or substituting unsafe actions with pre-defined backup actions. Alternatively, \gls{atacom} provides a novel approach that constructs a safe action space, eliminating the need for solving a constrained optimization iteratively or designing  a backup policy.
% Unfortunately, performing safe exploration without relying on domain knowledge is impossible. 
% Pioneer researchers have exploited different types of prior domain knowledge to obtain a safe exploration policy, such as pre-defined safety constraints, previous data demonstrating the safe and unsafe regions, dynamics models, or pre-defined safety fallback controllers.  
% Choosing how much and what domain knowledge is necessary is challenging, as it is a trade-off between safety, performance, and design time.
}

In this paper, we extend and analyze in depth the method, \gls{atacom}, which was proposed in~\cite{liu2022robot} and extended in~\cite{liu2023safe}. \gls{atacom} is a simple but effective approach for achieving safe exploration 
\reviseRone{exploiting the knowledge of the robot dynamics and constraints.} 
\gls{atacom} is based on the concept of \textit{Constraint Manifold}~\cite{boumal2023introduction, berenson2009manipulation}, which converts the constraint optimization to an optimization problem on the manifold. We build a safe action space on the tangent space of the constraint manifold, leveraging the known dynamics of the robotic system and tools from differential geometry. Using this approach, any action sampled from the safe action space will keep the system on the constraint manifold, ensuring safety. 
In this paper, we extend the \gls{atacom} method with following contributions:
\begin{enumerate}[wide, labelindent=0pt]
    \item \reviseRthree{We provide a new formulation of the method by constructing a dynamic model for the slack variable. Despite the mathematical equivalence, the new formulation provides an interpretation from the dynamical system perspective, allowing us to conduct a theoretical analysis using LaSalle’s principle and input-to-state stability.}
    \item \reviseRthree{We analyze the \gls{atacom} controller with the existence of disturbances, showing that \gls{atacom} controller ensures bounded constraint violations with bounded disturbances.}
    \item \reviseRthree{We present a novel method for constructing a smoothly varying tangent basis, effectively addressing the issue of discontinuous bases.}
    \item We show how to extend the proposed method to different environment settings, including partially controllable systems, high-order dynamics, and equality constraints. 
    \item We comprehensively evaluate individual techniques introduced in the method using simple low-dimensional tasks to gain a better understanding.
    \item \reviseRthree{For the first time, we demonstrate that ATACOM can perform online fine-tuning of the policy in a contact-rich, dynamic task (Robot Air Hockey) safely, whereas former works on \gls{atacom} only deployed pre-trained policies on real robots.}
\end{enumerate}
The paper is organized as follows: Section~\ref{sec:related_work} provides a short overview of the related work in the field of safe reinforcement learning. Then, we introduce some key ideas from differential geometry and Lasalle's principle in Section~\ref{sec:preliminaries}. In Section~\ref{sec:atacom}, we focus on the theoretical analysis of the method, showing that \gls{atacom} is guaranteed to be safe under mild conditions. Next, we introduced practical techniques for the method implementation in Section~\ref{sec:practical_implementation}. Furthermore, extensions to different environment settings, experiments, and conclusions can be found in Section~\ref{sec:extensions}, \ref{sec:experiments}, and \ref{sec:conclusions}, respectively.
\section{Related Work}
\label{sec:related_work}
In the last decades, \gls{cmdp}~\cite{altman1998constrained,altman1999constrained} 
have garnered significant attention from \gls{rl} researchers for addressing constrained control problems. Various forms of constraints have been explored within this framework. One prominent constraint is the expected cumulative cost, where the objective is to maximize the expected return while ensuring that the expected cost remains below a specified threshold~\cite{achiam2017constrained, chow2018lyapunov, tessler2019reward, liu2020ipo, stooke2020responsive, ding2021provably, ammar2015safe, cowen2022samba, yu2019convergent}. This constraint has been adapted into different variants, including risk-sensitive constraint \cite{borkar2014risk, ying2022towards, kim2022efficient, yang2023safety} and the probabilistic constraint~\cite{wagener2021safe, peng2021separated, pfrommer2022safe}. 
\reviseRone{Several optimization techniques have been applied to update policies under constraints, such as the trust-region method~\cite{achiam2017constrained, kim2022efficient}, the interior point method~\cite{liu2020ipo}, Lagrangian relaxation~\cite{altman1998constrained, tessler2019reward, stooke2020responsive, ding2021provably, borkar2014risk, ying2022towards, yang2023safety}, and Lyapunov function-based methods~\cite{chow2018lyapunov, chow2019lyapunov, sikchi2021lyapunov}. While these approaches aim to derive a safe policy by the end of training, they often violate constraints during the learning process, making them unsuitable for direct application on real robots or for online fine-tuning tasks.}

To address these shortcomings, \gls{safeexp} focuses on satisfying state-dependent constraints at every time step, and it presents two significant challenges: (1) constructing safe constraints~\cite{wachi2024survey} and (2) obtaining safe action. 
\reviseRone{When constructing constraints, they should account for the safety of both the current state and future trajectories. 
To address the first challenge, methods such as control barrier functions~\cite{ames2019control,xiao2022high_order, taylor2020learning, cheng2019end} and reachability analysis~\cite{fisac2018general, selim2022safe,zheng2024safe, shao2021reachability, kochdumper2023provably} utilize prior knowledge of the system dynamics or learned dynamics model to construct the safety constraint. To alleviate the engineering effort of manually designing constraints, learning-based techniques have been applied to learn a safety constraint~\cite{dawson2023safe, lavanakul2024safety}. In this paper, we assume the presence of state-dependent constraints and focus on the challenge of obtaining safe actions.} 
% Chance constraint approaches analyze the safety problem from a probabilistic perspective, i.e., the probability of encountering an unsafe state is below a threshold~\cite{ono2015chance, wang2023enforcing, chen2024probabilistic}. Temporal logic constraints~\cite{alshiekh2018safe, hasanbeig2020cautious} provide a symbolic expression for the notion of safety. 

\reviseRone{To obtain a safe actions, the most common choice  is to add a safety layer on top of the policy, which will correct the unsafe action to a safe one~\cite{fisac2019bridging, pham2018optlayer,dalal2018safe}. }
Nevertheless, determining an action that satisfies the safety constraint or corrects the constraint violation is almost impossible without incorporating prior knowledge. 
Exploiting the dynamics model to find the safe action has been applied to safe-critical control and learning tasks~\cite{ames2019control, emam2022safe}. Alternatively, defining a task-specific backup policy enables safe exploration in low-dimensional tasks, such as navigation and pendulum~\cite{hans2008safe, garcia2012safe, berkenkamp2017safe}.  Learning dynamic models using offline datasets and deriving safe actions using the learned models are applicable to tasks where dynamic models are unavailable or inaccurate~\cite{koller2018learning, hewing2020learning, cheng2019end, dalal2018safe}. 
\reviseRone{Furthermore, exploiting the regularity of the Gaussian Process enables the agent to determine the next state to be explored without the requirement of known constraints~\cite{sui2015safe, berkenkamp2023bayesian, wachi2018safe}. An analytical model is then used to derive a policy to reach that state. However, this approach is limited to simple systems, as safely reaching the target state for high-dimensional systems remains challenging.}
\reviseRone{Our approach also exploits the knowledge of the dynamics model and the constraints to construct a safe action space.}
% Nevertheless, we could also learn the full or residual dynamics model offline under the control affine dynamics assumption. 
\reviseRone{Unlike existing approaches, \gls{atacom} does not require an initial safe policy, a backup policy, or solving a constrained optimization problem. Instead, \gls{atacom} constructs a safe action space where all sampled actions are guaranteed to be safe, making it compatible with any \gls{rl} algorithm.}

Safe learning for robotics, as the core intersection between the \gls{saferl} and robotics~\cite{brunke2022safe}, has been raised for various types of application scenarios, varying from manipulation \cite{sukhija2022scalable,martinez2015safe} and navigation \cite{weerakoon2022terp,bajcsy2019efficient}, to locomotion~\cite{marco2021robot} and \gls{hri}~\cite{pang2021towards,pandya2022safe, thumm2022provably, schepp2022sara}. Previous work of \gls{atacom} has demonstrated the effectiveness of the method in manipulation, navigation, and \gls{hri} tasks~\cite{liu2022robot,liu2023safe}. In this paper, we conduct a rigorous theoretical analysis, provide clear illustrations of individual components using simple low-dimensional environments, and ultimately validate the method through its application in a high-dimensional, real-world robotic task.

\reviseRone{\section{Preliminaries and Problem Statement}}
\label{sec:preliminaries}
% \subsection{Differential Geometry}
\paragraph{\textbf{Differential Geometry}}
We briefly recall some concepts from differential geometry relevant to our approach. For a more comprehensive study of related topics, see \cite{lee2012smooth, boumal2023introduction}. Let $\mathcal{M}$ be a \textsl{Differentiable Manifold}, and $\TanS_p\MM$ denote the \textsl{Tangent Space} of the manifold $\MM$ at $p\in \MM$. The dimension of the manifold $\mathrm{dim}{\MM} = n$ if each point has a neighborhood that is homeomorphic (bijective and continuous) to an open subset of $\RR^n$. The \textsl{Tangent Bundle} $\TanS\MM = \sqcup_{p\in\MM} \TanS_p \MM$ is the disjoint union of all tangent space in $\MM$.
% , and let $\pi: \TanS\MM \rightarrow \MM$ be the \textsl{Canonical Projection Map}
Suppose two smooth manifold $\MM$ and $\mathcal{N}$, given a smooth map $\Phi: \MM \rightarrow \mathcal{N}$, the \textsl{Rank} of $\Phi$ at $p\in \MM$ is the rank of the linear map $\mathrm{D}\Phi_p:\TanS_p{\MM}\rightarrow \TanS_{\Phi(p)}{\mathcal{N}}$. An \textsl{Embedded Submanifold} is a subset $\mathcal{S}\in \mathcal{M}$, the \textsl{Codimension} is the difference $\mathrm{dim}\MM - \mathrm{dim}\mathcal{S}$. Next, we present the \textsl{Constant-Rank Level Set Theorem}~\cite{lee2012smooth}, used later in section~\ref{sec:constraint_manifold} to define the constraint manifold.

\begin{theorem}[Constant-Rank Level Set Theorem]
Let $\MM$ and $\mathcal{N}$ be smooth manifolds, and let $\Phi: \MM \rightarrow \mathcal{N}$ be a smooth map with constant rank $r$, then each level set of $\Phi$ is a properly embedded submanifold of codimension $r$ in $\MM$.
\end{theorem}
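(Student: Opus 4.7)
The plan is to reduce the statement to a local question and then invoke the Rank Theorem (sometimes called the Constant Rank Theorem) to obtain canonical coordinates in which $\Phi$ looks like a projection onto the first $r$ coordinates. This canonical form is the workhorse of the proof: once we have it, each level set is locally cut out by fixing the first $r$ coordinates to constants, which is precisely a \emph{slice chart} in the sense of the Local Slice Criterion for embedded submanifolds.

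Concretely, fix $q\in\mathcal{N}$ and let $S=\Phi^{-1}(q)$; we may assume $S$ is non-empty. Pick any $p\in S$. By the Rank Theorem applied at $p$, there exist smooth charts $(U,\varphi)$ around $p$ and $(V,\psi)$ around $\Phi(p)=q$, both centered so that $\varphi(p)=0$ and $\psi(q)=0$, with coordinate representation $\psi\circ\Phi\circ\varphi^{-1}(x^1,\ldots,x^m)=(x^1,\ldots,x^r,0,\ldots,0)$, where $m=\dim\MM$. In these coordinates, a point of $U$ lies in $S$ if and only if its first $r$ coordinates vanish, so $\varphi(S\cap U)=\{x\in\varphi(U): x^1=\cdots=x^r=0\}$. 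This is exactly an $r$-slice chart for $S$, and since $p$ was arbitrary, the Local Slice Criterion then yields that $S$ is an embedded submanifold of $\MM$ of codimension $r$, with the smooth structure inherited from the slice charts fitting together automatically.

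To upgrade ``embedded'' to ``properly embedded'', I would use the fact that the inclusion of an embedded submanifold is a proper map precisely when the submanifold is closed in the ambient manifold. Since smooth manifolds are Hausdorff, $\{q\}\subset\mathcal{N}$ is closed, and continuity of $\Phi$ makes $S=\Phi^{-1}(q)$ closed in $\MM$; properness follows at once.

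I expect the main obstacle to be the Rank Theorem itself, whose proof requires a careful double application of the inverse function theorem---first straightening the domain so that $\Phi$ factors through a submersion onto its image, then straightening the codomain so the image coordinates align with the first $r$ axes. This is nontrivial but standard (see e.g.\ \cite{lee2012smooth}) and may be invoked directly; once it is available, all remaining steps---the local slice verification, the gluing into a global smooth structure via the Local Slice Criterion, and the closedness argument for properness---are routine.
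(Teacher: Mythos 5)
Your proof is correct and is the standard argument: the Rank Theorem gives local canonical coordinates in which the level set is an $r$-codimensional slice, the Local Slice Criterion upgrades this to an embedded submanifold, and closedness of $\Phi^{-1}(q)$ gives properness. The paper states this theorem as a preliminary imported from \cite{lee2012smooth} without proof, and your argument is essentially the one given there (Lee's Theorem 5.12), so there is nothing to reconcile.
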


Let $f:\ME \rightarrow \RR^k$ be a smooth function. $\ME$ is a Euclidean space of dimension $d>k$ with inner product $\langle \cdot, \cdot \rangle$ and induced norm $\| \cdot \|$. If $\mathrm{D}f(\vx)$ has full rank $k$ for all $x \in \MM$, The set 
\begin{equation}
    \MM = \left\{ \vx \in \ME: f(\vx)=\vzero \right\}
\end{equation}
is a (smooth) \textsl{Embedded Submanifold} of $\ME$ of dimension $d - k$. 

The \textsl{Tangent Space} at $\vx \in \MM$ is given by
\begin{align*}
    \TanS_x\MM 
    % &= \mathrm{ker}\;\mathrm{D}f(\vx) \\
    &= \left\{ \vv \in \ME: \langle \mathrm{grad}f_i(\vx), \vv \rangle= 0, i\in \{1, \dots, k\} \right\}
\end{align*}
The dimension of the tangent space is $\mathrm{dim}{\MM} = d-k$.

% A \textbf{Retraction} on manifold $\MM$ at a point $\vx \in \MM$ is a smooth map $\Retraction_x : \TanS_x\MM \rightarrow \MM$ such that
% \begin{equation}
%     \Retraction_x(0) = \vx;\; \mathrm{D}\Retraction_x(0)[\vv] = \vv
% \end{equation}

\paragraph{\textbf{LaSalle's Invariance Principle}}
We present the \textsl{LaSalle's Invariance Principle}~\cite{khalil2002control}, used later in the discussion to prove the safety. Consider the autonomous system 
\begin{equation}
    \dot{\vx} = f(\vx)
    \label{dynamic_system}
\end{equation}
where $f:D\rightarrow \RR^n$ is a locally Lipschitz map from a domain $D\subset \RR^n$ into $\RR^n$. We have the following definitions: 
\begin{definition}
A set $\MC \subset \RR^n$ is said to be 
\begin{itemize}
    \item an \textbf{invariant set} with respect to \eqref{dynamic_system} if: $\vx(0)\in \MC \Rightarrow \vx(t) \in \MC, \forall t\in \RR$
    \item a \textbf{positively invariant set} with respect to \eqref{dynamic_system} if: $\vx(0)\in \MC \Rightarrow \vx(t) \in \MC, \forall t\geq 0$
\end{itemize}
\end{definition}

\begin{theorem}[LaSalle’s Invariance Principle]
Let $\Omega \subset D \subset \RR^n$ be a compact positively invariant set concerning the dynamic system~\eqref{dynamic_system}. Let $V:D\rightarrow \RR$ be a continuously differentiable function such that $\dot{V}\leq 0$ in $\Omega$. Let $E := \{\vx \in \Omega: \dot{V}(\vx) =0\}$. Let $M$ be the largest invariant set in $E$. Then, every solution starting in $\Omega$ approaches $M$ as $t\rightarrow \infty$.
\end{theorem}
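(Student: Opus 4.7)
The plan is to follow the standard Lyapunov/omega-limit-set argument, adapted to the statement as given. First I would fix an arbitrary trajectory $\vx(t)$ starting in $\Omega$. Since $\Omega$ is positively invariant, $\vx(t) \in \Omega$ for all $t \geq 0$, and since $V$ is continuous on the compact set $\Omega$ it is bounded below there. Combined with $\dot{V}(\vx(t)) \leq 0$, this makes $t \mapsto V(\vx(t))$ monotonically nonincreasing and bounded below, hence convergent to some limit $c \in \RR$.

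Next I would introduce the positive limit set $L^+$ of the trajectory, namely $L^+ = \{\, p \in \RR^n : \exists\, t_n \to \infty \text{ with } \vx(t_n) \to p \,\}$. The key topological facts I would invoke (and briefly justify from compactness and continuity of the flow): $L^+$ is nonempty because $\Omega$ is compact; $L^+ \subset \Omega$ because $\Omega$ is closed; $L^+$ is invariant under the flow of $\dot{\vx}=f(\vx)$; and $\mathrm{dist}(\vx(t), L^+) \to 0$ as $t \to \infty$. These are standard consequences of the trajectory living in a compact set with a locally Lipschitz right-hand side.

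The crucial step is to show $L^+ \subset E$. For any $p \in L^+$, pick $t_n \to \infty$ with $\vx(t_n) \to p$; by continuity of $V$ and the fact that $V(\vx(t)) \to c$, we get $V(p) = c$. Hence $V \equiv c$ on $L^+$. Since $L^+$ is invariant, the solution $\vy(t)$ starting at any $p \in L^+$ stays in $L^+$, so $V(\vy(t)) \equiv c$ and therefore $\dot{V}(\vy(t)) \equiv 0$; evaluating at $t=0$ yields $\dot{V}(p) = 0$, i.e., $p \in E$. Thus $L^+$ is an invariant subset of $E$, so by the definition of $M$ as the largest such set, $L^+ \subset M$. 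Combined with $\mathrm{dist}(\vx(t), L^+) \to 0$, this yields $\mathrm{dist}(\vx(t), M) \to 0$, which is exactly the claimed convergence.

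The main obstacle I anticipate is not the Lyapunov manipulation itself but the careful handling of the topological properties of $L^+$, particularly the invariance of $L^+$ under the flow, which relies on continuous dependence of solutions on initial conditions and on extracting a diagonal subsequence when translating limit points along the flow. Everything else is a direct bookkeeping argument from the hypotheses that $\Omega$ is compact and positively invariant and that $V$ is continuously differentiable with $\dot{V} \leq 0$.
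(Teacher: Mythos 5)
Your argument is correct: it is the standard omega-limit-set proof of LaSalle's principle (monotone convergence of $V$ along the trajectory, the positive limit set $L^+$ being nonempty, compact, invariant and approached by the trajectory, $V$ constant on $L^+$ forcing $\dot{V}=0$ there, hence $L^+\subset E$ and, by invariance, $L^+\subset M$). Note, however, that the paper does not prove this statement at all --- it quotes it as a background result from Khalil's textbook --- so there is no in-paper proof to compare against; your write-up essentially reproduces the classical textbook argument, with the genuinely technical content correctly identified as the invariance and attractivity properties of $L^+$, which you defer to standard lemmas rather than prove.
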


\paragraph{\textbf{Problem Statement}}
% The safety problem has been formulated in various types of constraints in the field of robotics and reinforcement learning research \cite{brunke2022safe}.
% In this paper, we consider the problem of inequality constraints
% \begin{equation}
%     f_i(\vs) \leq 0, \; i \in \{1, \dots, N\},\label{eq:constraints}
% \end{equation}
% and $g=[g_1, \dots, g_F]^{\intercal}:\mathcal{S}\rightarrow \RR^G$.
In \gls{saferl}, we model the environment as a \gls{cmdp}. A \gls{cmdp} is defined by the tuple $\langle \mathcal{S}, \mathcal{A}, P, \gamma, R, \mu_0, \mathcal{K} \rangle$ with the state space $\mathcal{S}$, the action space $\mathcal{A}$, the state transition probability kernel $P:\mathcal{S}\times \mathcal{A} \times \mathcal{S} \rightarrow \RR^{+}$, the discount factor $\gamma \in (0, 1]$, the reward function $R: \mathcal{S} \times \mathcal{A}\rightarrow \RR$, the initial state distribution $\mu_0: \mathcal{S} \rightarrow \RR^{+}$, and the set of constraint functions $\mathcal{K}:=\lbrace k_i:\mathcal{S}\rightarrow \RR, i \in \{1, \dots, K\}\rbrace$. 
In this paper, we approach the safety problem in \gls{rl} from the point of view of \textsl{Safe Exploration}. The objective is to prevent constraint violations throughout the whole learning process, considering the following constrained optimization problem:
\begin{align}
    \pi^{*} = \max_{\pi} & \quad \EV_{\tau \sim \pi} \left[\sum_{t=1}^{T}\gamma^t r(\vs_t, \va_t)\right], \label{eq:safe_exploration}\\ 
    \mathrm{ s.t.} & \quad k_i(\vs_t) \leq 0, \; \forall i \in \{1, \dots, K\}, \nonumber
\end{align}
with the trajectory $\tau = [\vs_0, \va_0, \dots, \vs_T]$, the state $\vs_t \in \mathcal{S} \subset \RR^S $, and the action $\va_t \in \mathcal{A}\subset \RR^A$ sampled from a policy $\pi$. The inequality constraints $k_i:\mathcal{S}\rightarrow \RR$ specify the safety requirements at each time step among all trajectories.

Modern \gls{rl} algorithms solve this optimization by evaluating a sequence of policies $\lbrace\pi_0, \dots \pi_j \rbrace$ in the environment, where the learning algorithm produces a new policy $\pi_{j+1}$ using $\mathcal{D}_j$, i.e. the dataset of environment interactions using the previous policies $\lbrace\pi_0, \dots,\pi_j\rbrace$. In this scenario, Safe Exploration requires that every state $\vs$ from every trajectory $\tau$ sampled from each policy $\pi_j$ is safe, i.e., $k_i(\vs) \leq 0, \forall i$. 
This requires that the optimization algorithm selects each policy $\pi_j$ from $\Pi_\text{safe}$, i.e., the space of policies respecting the safety constraints. 
Most safe exploration approaches design a space $\tilde{\Pi}\subseteq\Pi_\text{safe}$ and solve the optimization problem in \eqref{eq:safe_exploration} generating $\pi_j\in\tilde{\Pi}$.
% \begin{align*}
%     &\forall k\in\lbrace1, \dots, K \rbrace,\, \, \forall \tau \sim (\pi_k, P, \iota),\, \, \forall s\in \tau\, \, \forall i, \\
%     &k_i(\vs) \leq 0 \and 
% \end{align*}

% \todo{Add Problem Formulation of Planning?}
% \dave{I think this formulation is wrong. What we are formulating right now is not safe exploration, is the CMDP standard formulation with gamma 0 and as many constraints as states. Our formulation should include the learning process}
% \dave{also there is something wrong with the definition of the constraints. If we want to consider equality and inequality constraints, it should be done both in the description of the CMDP and afterward}

In the rest of this work, we focus on the control affine system that is local Lipschitz continuous, described as follows:
\begin{equation}
    \dot{\StateVar} = \Dynf(\StateVar) + \DynG(\StateVar) \ControlInput
    \label{eq:nonlinear_affine_system}
\end{equation}
where $\StateVar \in \StateSpace \subset \RR^\DimState$ denotes the state of the system, $\ControlInput \in \ControlSpace \subset \RR^\DimControl$ is the $\DimControl$-dimensional control input, $\Dynf:\MS\rightarrow \RR^\DimState$ and $\DynG:\MS\rightarrow \RR^{\DimState \times \DimControl}$ are two Lipschitz mappings.  We make the following assumption 
% Furthermore, we assume that every constraint function $k_i(\cdot)$ is of class $C^1$.

\begin{assumption}
The state space $\StateSpace$ of interest is compact and positively invariant with respect to \eqref{eq:nonlinear_affine_system}.
\label{asm:compactness}
\end{assumption}
\begin{figure*}[t]
    \centering
    \includegraphics[width=\textwidth]{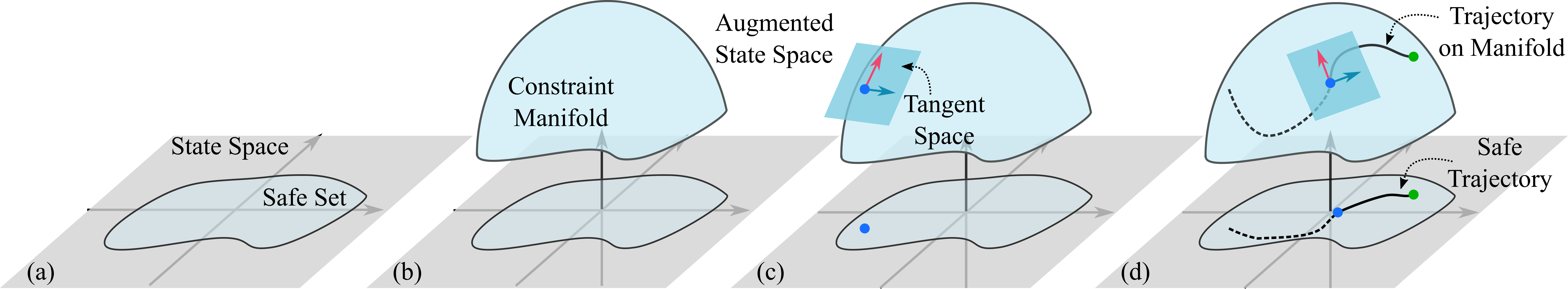}
    \caption{Conceptual illustration of \gls{atacom}. (a) The safe set is defined by the constraint in the original state space. (b) Construct the constraint manifold in the augmented state space. (c) Determine the tangent space for each point on the constraint manifold. (d) The trajectory moves on the tangent space resulting in a safe trajectory projected in the original state space.}
    \label{fig:atacom_illustration}
\end{figure*}

\section{Methods}
\label{sec:atacom}
\glsreset{atacom}
In this section, we introduce the technical details of our proposed approach \reviseRone{\gls{atacom}} and discuss under which conditions the designed controller is safe. Briefly, our approach tries to construct a safe action space tangent to the \textit{Constraint Manifold}. Combining the basis of the tangent space with a coordinate determined by the controller, that can be hand-crafted or an arbitrary \gls{rl} method, we can guarantee the safety of the overall control action. The conceptual illustration of the approach is shown in Fig.~\ref{fig:atacom_illustration}. Our approach is particularly suitable for \gls{rl} as the actions are sampled randomly from a safe space, and the policy is safe by construction. 

As it is fundamental for our approach, we will first rigorously define the \textit{Constraint Manifold} and the \textit{Tangent Space} at each point on the manifold. Then, we introduce an augmented state dynamic model that incorporates the dynamics of the slack variable. Next, we derive a safe controller that moves on the constraint manifold equipped with augmented dynamics. We then theoretically analyze the conditions of the controller to guarantee safety. \reviseRone{At the end of this section, we extend our safety proof from a Lipschitz continuous system to switched systems, as actions drawn from a stochastic policy in the RL settings break the assumption of the Lipschitz continuity.}

\subsection{Definition of the Constraint Manifold}
\label{sec:constraint_manifold}

% Let $(\cdot)_{[i, :]}, (\cdot)_{[:,i]}$ denote the $i$-th row and the $i$-th column of the matrix respectively. The vector field $f$ is the \textit{drift vector field} and $G_{[:, 1]}, ..., G_{[:, U]}$ are \textit{control vector fields}.

We assume that our safety problem is completely defined by a set of $K$ inequality constraints
\begin{equation}
    \Constr(\StateVar) \leq \vzero
    \label{eq:inequality_constraint}
\end{equation}
with $\Constr: \StateSpace \subset \RR^\DimState \rightarrow \RR^\DimConstr$. Throughout this paper, we make the following assumption
\begin{assumption}
    \label{asm:diff_constr}
    Constraint function $\Constr$ is of class $C^1$. 
\end{assumption} %
While this assumption is a limitation of the proposed approach, it is easy to approximate the vast majority of constraints required by a robotic system using differentiable approximations, such as neural networks.

Given the constraint definition, we define the safe set as the sublevel set $\SafeSet \coloneqq \{\StateVar \in \StateSpace \subset \RR^\DimState: \Constr(\StateVar) \leq \vzero \}$. 
We proceed now by introducing a set of slack variables $\SlackVar \in [0, +\infty)^\DimConstr$ and rewriting the inequality constraints as  equality ones:
\begin{equation}
    \ConstrEq(\StateVar, \SlackVar) \coloneqq \Constr(\StateVar) + \SlackVar
    \label{eq:equality_slack}
\end{equation}
We refer to the Jacobian of $\ConstrEq$ as $\JacConstrEq(\StateVar, \SlackVar) = \begin{bmatrix} J_k(\StateVar) & \mathbb{I}_\DimConstr \end{bmatrix}$, with a $\DimConstr$ dimensional identity matrix $\mathbb{I}_\DimConstr$. Using the equality constraint formulation, we define the safe set as
\begin{equation}
    \ConstrManifold \coloneqq \left\{ (\StateVar, \SlackVar) \in \AugmentedSpace: \ConstrEq(\StateVar, \SlackVar) = \vzero \right\}
    \label{eq:constraint_manifold}
\end{equation}
where $\AugmentedSpace \coloneqq \StateSpace \times [0, +\infty)^\DimConstr \subset \RR^\DimAugmentedState$ is an \textit{Augmented State Space} of dimension $\DimAugmentedState = \DimState + \DimConstr$. 
\begin{remark}
The safe set $\SafeSet$ is a projection of the set $\ConstrManifold$ on the original state space. For any point $(\StateVar, \SlackVar) \in \ConstrManifold$, the projected point is in the safe set $\StateVar \in \SafeSet$.
\label{remark:safeset_projection}
\end{remark}

\begin{proposition}
The safe set $\ConstrManifold$~\eqref{eq:constraint_manifold} is a $S$-dimensional submanifold embedded in $\RR^\DimAugmentedState$.
\end{proposition}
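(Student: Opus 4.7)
The plan is to apply the Constant-Rank Level Set Theorem stated earlier in Section~\ref{sec:preliminaries} to the map $\ConstrEq:\AugmentedSpace\to\RR^\DimConstr$ defined in \eqref{eq:equality_slack}. The three things I need to check are: (i) $\ConstrEq$ is smooth enough to invoke the theorem, (ii) $\ConstrEq$ has constant rank on $\AugmentedSpace$, and (iii) the codimension produced by the theorem gives the claimed dimension $\DimState$.

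First I would observe that, by Assumption~\ref{asm:diff_constr}, $\Constr$ is $C^1$, and since $\ConstrEq(\StateVar,\SlackVar)=\Constr(\StateVar)+\SlackVar$ is the sum of $\Constr$ and the identity map on the slack coordinates, $\ConstrEq$ inherits $C^1$ smoothness on the augmented state space $\AugmentedSpace=\StateSpace\times[0,+\infty)^\DimConstr$. This gives the regularity required by the level-set argument.

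Next, the rank check is immediate from the block structure of the Jacobian
\[
\JacConstrEq(\StateVar,\SlackVar)=\begin{bmatrix} \JacConstrMat(\StateVar) & \mathbb{I}_\DimConstr \end{bmatrix}\in\RR^{\DimConstr\times\DimAugmentedState}.
\]
Because the right block is the $\DimConstr\times\DimConstr$ identity, the $\DimConstr$ rows of $\JacConstrEq$ are linearly independent at every point $(\StateVar,\SlackVar)\in\AugmentedSpace$. Hence $\mathrm{rank}\,\mathrm{D}\ConstrEq_{(\StateVar,\SlackVar)}=\DimConstr$ everywhere, i.e., $\ConstrEq$ has constant rank $r=\DimConstr$.

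Finally, invoking the Constant-Rank Level Set Theorem, the zero level set $\ConstrManifold=\ConstrEq^{-1}(\vzero)$ is a properly embedded submanifold of $\AugmentedSpace\subset\RR^\DimAugmentedState$ of codimension $\DimConstr$, so its dimension is $\DimAugmentedState-\DimConstr=(\DimState+\DimConstr)-\DimConstr=\DimState$, as claimed. There is no real obstacle here; the only mild subtlety is that $\AugmentedSpace$ has a boundary coming from $\SlackVar\geq\vzero$, so strictly speaking one argues on the interior and notes that points on the slack boundary correspond to active constraints, which does not affect the submanifold dimension statement because the Jacobian retains full row rank there as well.
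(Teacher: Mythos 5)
Your proposal is correct and follows essentially the same route as the paper: both compute the Jacobian $\JacConstrEq(\StateVar,\SlackVar)=\begin{bmatrix}\JacConstr(\StateVar) & \mathbb{I}_\DimConstr\end{bmatrix}$, note that the identity block forces constant rank $\DimConstr$, and invoke the Constant-Rank Level Set Theorem to conclude the codimension is $\DimConstr$ and hence the dimension is $\DimAugmentedState-\DimConstr=\DimState$. Your additional remarks on $C^1$ regularity and on the boundary of the slack domain $[0,+\infty)^\DimConstr$ are welcome clarifications but do not change the argument.
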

\begin{proof}
    The Jacobian of $\ConstrEq(\StateVar, \SlackVar)$ is
    $\JacConstrEq(\StateVar, \SlackVar) = \begin{bmatrix} \JacConstr(\StateVar) & \mathbb{I}_\DimConstr \end{bmatrix}$. $\JacConstrEq(\StateVar, \SlackVar)$ has a constant rank $\DimConstr$. The Constant Rank Level-Set Theorem shows that the level set $\ConstrManifold$ is an embedded submanifold of codimension $K$. The dimension of the submanifold is $\DimAugmentedState - \DimConstr = \DimState$.
\end{proof}

We call the safe set defined in Eq.~\eqref{eq:constraint_manifold} the \textit{Constraint Manifold}.
The \textit{Interior of the Constraint Manifold} is $\Int\ConstrManifold \coloneqq \{ (\StateVar, \SlackVar) \in \ConstrManifold: \Constr_i(\StateVar) < 0, \forall i\}$ and the \textit{Boundary} is $\Bound\ConstrManifold \coloneqq \{ (\StateVar, \SlackVar) \in \ConstrManifold: \Constr_i(\StateVar) = 0, \exists i \}$. 
The \textit{Tangent Space} of the constraint manifold at the point $(\StateVar, \SlackVar) \in \ConstrManifold$ is a linear subspace $ \TanS_{(\StateN, \Slack)}\ConstrManifold \coloneqq \left\{ \vv \in \RR^{\DimAugmentedState} : \langle \JacConstrEq(\StateVar, \SlackVar), \vv \rangle = 0 \right\} = \mathrm{ker}\JacConstrEq(\StateVar, \SlackVar)$. The dimension of the tangent space is $S = N - K$. We can construct the \textit{Basis} of the tangent space $\TanS_{(\StateN, \Slack)}{\ConstrManifold}$ in matricial form as $B(\StateVar, \SlackVar) \coloneqq \left[ \BasisVec_{(\StateN, \Slack)}^1 \; \cdots \; \BasisVec_{(\StateN, \Slack)}^{S}\right] \in \RR^{N \times S}$. Each column  $\BasisVec_{(\StateN, \Slack)}^i$ is a basis vector of the tangent space satisfying $\langle \JacConstrEq(\StateVar, \SlackVar), \vb_{(\StateN, \Slack)}^{i} \rangle = 0$.

\begin{example}
Consider the inequality constraint \mbox{$k(s) \coloneqq s \leq 0$}. The constraint manifold $\ConstrManifold = \{(\StateN,\Slack)\in \RR \times [0, \infty): c(\StateN,\Slack)\coloneqq s+\Slack =0\}$ is a 1-dimensional submanifold embedded in $\RR^2$. 
% The constraint manifold is equivalent to the Euclidean product \mbox{$\{ s \in \RR: s \leq 0\} \times [0, \infty)$}. 
The constraint Jacobian is $\JacConstrEq(\StateN, \Slack) = [1 \quad 1]$. We can choose the tangent space basis as $B(\StateN, \Slack) = \frac{1}{\sqrt{2}}[1, -1] ^\intercal$. 
\label{exp:simple_linear}
\end{example}

\subsection{Augmented State Dynamics}
We introduce a controlled system for the slack variable where the dynamics for each dimension $i$ is
\begin{equation}
    \dot{\Slack}_i = \DynSlack_i(\Slack_i) u_{\Slack, i}, \quad i\in \{1, \dots, K\} 
    \label{eq:slack_dynamics}
\end{equation}
where $\DynSlack_i:[0, \mu_\eta)\rightarrow [0, +\infty)$ is a class $\mathcal{K}$ function\footnote{class $\mathcal{K}$ function: (1) continuous; (2)strictly increasing; (3)$\alpha(0)=0$} being locally Lipschitz continuous. $\mu_\eta$ is a sufficient big value bounding the maximum constraint violation. $\ControlSlack \in [\ControlSlack^{-}, \ControlSlack^{+}]\subset \RR^{\DimConstr}$ denotes the vitual control input and $\ControlSlack^{-} < \vzero < \ControlSlack^{+}$. Since the dynamics of each dimension are independent, we develop the following lemma for the 1-dimensional system without loss of generality.

\begin{lemma}
Consider the dynamics $\dot{\Slack}= \DynSlack(\Slack){u_\Slack}$ with $\alpha$ being of class $\mathcal{K}$ and locally Lipschitz for all $\mu \in [0, \mu_\eta)$, \reviseRone{where $\mu_\eta < \infty$ is a sufficient big value that upper bounds $\mu$}. For every initial state $\Slack(0)>0$, there exists $\epsilon > 0$ such that $\Slack(t) \geq \epsilon$, $\forall t \geq 0$, $\forall u_\Slack \in [u^{-}_\Slack, u^{+}_\Slack],\, u_\Slack^{-} < 0 < u_\Slack^{+}$.
\label{lemma:slack_nondegenerate}
\end{lemma}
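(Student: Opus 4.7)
The plan is to exploit the defining property $\alpha(0)=0$ of a class-$\mathcal{K}$ function, combined with local Lipschitzness of the closed-loop dynamics, to prevent the trajectory from ever reaching $\mu=0$, and then upgrade this qualitative fact to a quantitative positive lower bound through a comparison argument.

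First, I would note that $\mu\equiv 0$ is itself a solution of $\dot\mu=\alpha(\mu)\,u_\mu$ for \emph{every} admissible control signal, because $\alpha(0)u_\mu=0$. Since $\alpha$ is locally Lipschitz on $[0,\mu_\eta)$ and $u_\mu$ is bounded, the right-hand side $\alpha(\mu)u_\mu$ is locally Lipschitz in $\mu$ uniformly in the admissible $u_\mu$, so Picard--Lindel\"of delivers a unique solution to the IVP on its maximal interval of existence. If some trajectory with $\mu(0)>0$ were to hit zero at a first time $t^{*}>0$, running time backwards from $(t^{*},0)$ would yield two distinct solutions of the IVP starting there (the trivial zero trajectory and the restriction of our trajectory), contradicting uniqueness. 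Together with continuity and the intermediate value theorem this forces $\mu(t)>0$ for all $t\geq 0$.

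Second, I would turn qualitative positivity into a quantitative lower bound via Gr\"onwall's inequality. Because $\alpha$ is locally Lipschitz with $\alpha(0)=0$, there exists $L>0$ such that $\alpha(\mu)\leq L\mu$ on the reachable sub-interval of $[0,\mu_\eta)$, obtained by a standard finite-cover argument on the compact interval of interest (the reachable slack values stay bounded by the compactness of the augmented state space under Assumption~\ref{asm:compactness}). The worst case for shrinking $\mu$ is the constant extremal control $u_\mu(t)\equiv u_\mu^{-}<0$; multiplying $\alpha(\mu)\leq L\mu$ by the negative quantity $u_\mu^{-}$ flips the inequality and yields
\begin{equation*}
\dot\mu \;=\; \alpha(\mu)\,u_\mu \;\geq\; L\mu\,u_\mu^{-} \;=\; -L\lvert u_\mu^{-}\rvert\,\mu .
\end{equation*}
Gr\"onwall then implies $\mu(t)\geq \mu(0)\exp\!\bigl(-L\lvert u_\mu^{-}\rvert\,t\bigr)$, and this lower bound is independent of the particular admissible $u_\mu(\cdot)$, furnishing the desired $\epsilon$.

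The step that I expect to need the most care is pinning down the uniform Lipschitz constant $L$: $\alpha$ is only assumed \emph{locally} Lipschitz, so $L$ has to be extracted by a compact-exhaustion argument, leveraging the boundedness of the reachable slack region to rule out blow-up of the local constants near the right endpoint of $[0,\mu_\eta)$. A secondary subtlety is the quantifier ordering in the conclusion: the exponential floor above is strictly positive on any compact horizon but decays as $t\to\infty$, so the strictly time-uniform reading of $\epsilon>0$ rests on the uniqueness-driven fact $\mu(t)>0$ and the closed-loop analysis of the subsequent sections, which actively prevents asymptotic approach to zero. Once these two points are handled, the two-step structure --- uniqueness for positivity, then worst-case Gr\"onwall for the quantitative bound --- yields the lemma in the form needed by the safety proofs that follow.
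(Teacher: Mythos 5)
Your proposal is correct and follows essentially the same route as the paper: the quantitative core is the identical comparison bound $\dot{\mu} \geq L u_\mu^{-}\mu$ obtained from local Lipschitzness and $\alpha(0)=0$, yielding the exponential floor $\mu(0)e^{Lu_\mu^{-}t}$. Your preliminary Picard--Lindel\"of uniqueness step is redundant once that bound is in hand, and the quantifier subtlety you flag (the floor decays as $t\to\infty$, so the argument really shows only that $\mu$ cannot reach zero in finite time) is equally present in the paper's own proof, which sets $\epsilon=\mu(0)e^{L't}$ with a time-dependent right-hand side.
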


\begin{proof}
$\DynSlack$ is $L$-Lipschitz continuous and $\DynSlack(0) = 0$, we have $|\DynSlack(\Slack)| \leq L \Slack$ for all $\Slack \in [0, \mu_\eta)$,  For all $u_\Slack \in [u^{-}_\Slack, u^{+}_\Slack]$, the following inequality holds 
\begin{align*}
\dot{\Slack} &=\DynSlack(\Slack)u_\Slack \geq \inf_{u_\Slack}[ \DynSlack(\Slack)u_\Slack ] &  \\
&= \DynSlack(\Slack) u_\Slack^{-} & (\alpha(\mu) > 0)\\
& \geq L u_\Slack^{-} \Slack = L'\Slack & (u_\Slack^{-} < 0)
\end{align*}
% $$\dot{\Slack}=\DynSlack(\Slack)u_\Slack \geq \inf_{u_\Slack}[ \DynSlack(\Slack)u_\Slack ] \geq -L'\Slack$$
% where $L' = L \cdot \sup(|u_\Slack|)$. 
where $L'=Lu_\Slack^{-}$. The lower bound of the trajectory starting from the state $\Slack(0)>0$ can be determined as
$\Slack(t) \geq \epsilon = \Slack(0) e^{(L't)} > 0, \forall t \geq 0$. 
\end{proof}

Lemma~\ref{lemma:slack_nondegenerate} shows that the slack variable with the dynamics basis of the \eqref{eq:slack_dynamics} will not reach zero in finite time if the initial value is not zero.  

We construct the \textit{Augmented Dynamic Model} as follows
\begin{align}
    \begin{bmatrix} \dot{\StateVar} \\ \dot{\SlackVar} \end{bmatrix} &= 
    \begin{bmatrix} \Dynf(\StateVar) \\ 0 \end{bmatrix} + 
    \begin{bmatrix} \DynG(\StateVar) & 0 \\ 0 & \DynSlackFun(\SlackVar) \end{bmatrix} \begin{bmatrix}
    \ControlInput \\ \ControlSlack
    \end{bmatrix} 
    \label{eq:augmented_systems}
\end{align}
where $\DynSlackFun(\SlackVar): \RR^\DimConstr \rightarrow \RR^{\DimConstr \times \DimConstr} $ denotes a diagonal matrix with entry $\DynSlackFun_{ii} = \DynSlack_i(\Slack_i)$ and $\ControlSlack = [u_{\Slack, 1}, \dots, u_{\Slack, \DimConstr}]^\intercal$. 

\subsection{Safe Controller on the Tangent Space of the Constraint Manifold}
\label{sec:atacom_basic}

In this section, we introduce a provably safe controller for the system~\eqref{eq:augmented_systems} with respect to the constraint manifold~\eqref{eq:constraint_manifold}. As stated in Remark~\ref{remark:safeset_projection}, the safe set $\SafeSet(\Constr)$ is a projection of $\ConstrManifold$. The original state will stay inside the safe set if the augmented state stays on the constraint manifold. We can design a controller that drives the augmented state in the direction tangent to the constraint manifold. The velocity of the augmented state should stay in the tangent space $\begin{bmatrix}\dot{\StateVar} \quad \dot{\SlackVar}\end{bmatrix}^{\intercal} \in \TanS_{(\StateN, \Slack)}\ConstrManifold $. 
To ensure this property, the following equality must hold
\begin{equation}
\dot{\ConstrEq}(\StateVar, \SlackVar) = \JacConstrEq(\StateVar, \SlackVar) \begin{bmatrix}\dot{\StateVar} \\ \dot{\SlackVar}\end{bmatrix} =\vzero 
\label{eq:tangent_velocity}
\end{equation}
with $\JacConstrEq(\StateVar, \SlackVar) = \begin{bmatrix} \JacConstr(\StateVar) & \mathbb{I}_{\DimConstr} \end{bmatrix}$. 
Substituting the dynamics~\eqref{eq:augmented_systems} into equality~\eqref{eq:tangent_velocity}, we obtain
\begin{equation}
    \DriftFun(\StateVar) + J_u(\StateVar, \SlackVar) \begin{bmatrix} \vu_\StateN \\ \vu_\Slack \end{bmatrix} = \vzero
    \label{eq:system_requirement}
\end{equation}
with $\JacInput(\StateVar, \SlackVar) = \begin{bmatrix} \JacInputG(\StateVar) & \DynSlackFun(\SlackVar) \end{bmatrix}$, $\JacInputG(\StateVar)=\JacConstr(\StateVar)\DynG(\StateVar)$ and the \textit{Constraint Drift} $\DriftFun(\StateVar) = \JacConstr(\StateVar) \Dynf(\StateVar)$ induced by the system drift $\Dynf(\StateVar)$. 

\begin{algorithm}[b]
\caption{Safe Reinforcement Learning with ATACOM}\label{alg:rl_atacom}
\begin{algorithmic}[1]
\State \reviseRtwo{\textbf{Initialize}: policy $\pi$, replay buffer $\mathcal{D}$, number of steps $N$} 
\For {\reviseRtwo{$k \leftarrow 1$ to $N$} }
\State \reviseRtwo{Sample action $\vu \sim \pi(\cdot|\StateVar)$.}
\State \reviseRtwo{Obtain safe control $\ControlInput \leftarrow \text{ATACOM}(\vs, \vu)$. \Comment{Alg.~\ref{alg:atacom}} }
\State \reviseRtwo{Execute $\ControlInput$ and obtain $\StateVar'$ and reward $r$.}
\State \reviseRtwo{Update replay buffer $\mathcal{D}\leftarrow (\StateVar, \vu, \StateVar', r)$.}
\State \reviseRtwo{Update RL agent using the replay buffer $\mathcal{D}$.}
\EndFor
\end{algorithmic}
\end{algorithm}

If the above linear system~\eqref{eq:system_requirement} is solvable, the general solution has the following form
\begin{equation*}
    \begin{bmatrix} \ControlInput \\ \ControlSlack \end{bmatrix} = -\JacInput^\dagger(\StateVar, \SlackVar) \psi(\StateVar) + \BasisTangentInput(\StateVar, \SlackVar) u(\StateVar), 
\end{equation*}
where $\JacInput^{\dagger}(\StateVar, \SlackVar)$ is the pseudo inverse of $\JacInput(\StateVar, \SlackVar)$ and $\BasisTangentInput(\StateVar, \SlackVar)$ is the tangent space basis in the matrix form such that $\JacInput(\StateVar, \SlackVar) \BasisTangentInput(\StateVar, \SlackVar) = \vzero$. 
The function $u(\StateVar):\StateSpace \rightarrow \RR^{\DimControl}$ is a task-specific feedback controller. In the rest of the paper, we use bold characters and omit the arguments of the mapping to simplify the notation. For example, we define $\JacInputMat$ and $\BasisTangentInputMat$ as $\JacInputMat \coloneqq \JacInput(\StateVar, \SlackVar)$ and $\BasisTangentInputMat \coloneqq \BasisTangentInput(\StateVar, \SlackVar)$, respectively.

We can construct the safe controller as
\begin{equation}
    \begin{bmatrix} \ControlInput \\ \ControlSlack \end{bmatrix} = -\JacInputMat^\dagger \DriftVec -  \lambda \JacInputMat^\dagger \ConstrEqVec + \BasisTangentInputMat \vu, 
    \label{eq:atacom_controller}
\end{equation} 
with a constant $\lambda > 0$. The first term on the Right Hand Side (RHS) is the \textit{Drift Compensation Term}, which corrects the drift caused by the system. The middle term is the \textit{Contraction Term} that retracts the state to the manifold. When $\Constr(\StateVar)<0$, there always exists $\SlackVar \in [0, \infty)^\DimConstr$ ensuring $\ConstrEq(\StateVar, \SlackVar)=\vzero$, so the contraction term will be zero. The last term is the \textit{Tangential Term}, which generates the vector field tangent to the Constraint Manifold. 
The \gls{atacom} controller can be treated as a generalized solution of the linear system
\begin{equation*}
    \DriftVec + \lambda \ConstrEqVec + \JacInputMat \begin{bmatrix} \ControlInput \\ \ControlSlack \end{bmatrix} = \vzero
    % \label{eq:system_requirement}
\end{equation*}

The tangent space basis $\BasisTangentInputMat$ can be solved by various approaches, such as QR/SVD decomposition. We will discuss computing a smooth varying basis in Section~\ref{sec:tangent_space_basis}. 
\reviseRtwo{Alg.~\ref{alg:rl_atacom} and Alg.~\ref{alg:atacom} illustrate how to integrate \gls{atacom} into the \gls{rl} framework. Generally speaking, \gls{atacom} constructs a safe action space that morphs all input actions into safe actions. Therefore, the \gls{rl} agent will learn the policy directly from the safe action space instead of the original one.} 

\begin{algorithm}[t]
        \caption{\gls{atacom}}\label{alg:atacom}
        \begin{algorithmic}[1]
        \Statex \textbf{Input:} $\StateVar$, $\vu$ \Comment{At each step}
        \State \reviseRone{Determine the slack variable} 
        \Statex \reviseRone{$\SlackVar \leftarrow \max(-\Constr(\StateVar), \mathrm{tol})$}
        \State \reviseRone{Compute the Jacobians and the drift }
        \Statex \reviseRone{$\JacInputGMat\leftarrow \JacConstr(\StateVar)\DynG(\StateVar)$}
        \Statex \reviseRone{$\JacInputMat(\StateVar, \SlackVar)\leftarrow\begin{bmatrix}\JacInputGMat(\StateVar) & A(\SlackVar)\end{bmatrix}, $ }
        \Statex \reviseRone{$\DriftVec(\StateVar) \leftarrow \JacConstr(\StateVar) \Dynf(\StateVar)$}
        \State \reviseRone{Compute the tangent space basis \Comment{Alg.~\ref{alg:tangent_basis}}}
        \Statex \reviseRone{$\BasisTangentInputMat\leftarrow \text{SmoothBasis}(\JacInputMat)$ }
        \State \reviseRone{Compute the constraint value} 
        \Statex \reviseRone{$\ConstrEq(\StateVar, \SlackVar)\leftarrow k(\StateVar) + \SlackVar$}
        \State \reviseRone{Compute safe control output $\ControlInput$ \Comment{Eq.~\eqref{eq:atacom_controller}}}
        \State \reviseRone{\textbf{Output:} $\ControlInput$ }
        \end{algorithmic}
\end{algorithm}

% \begin{algorithm}[t]
% \caption{\gls{atacom},  \textbf{Input:} $\StateVar$, $\vu$ \Comment{At each step}}\label{alg:atacom}
% \begin{algorithmic}[1]
% \State \reviseRone{Determine the slack variable $\SlackVar \leftarrow \max(-\Constr(\StateVar), \mathrm{tol})$}
% \State \reviseRone{Compute the Jacobians and the drift }
% \Statex \reviseRone{$\JacInputGMat\leftarrow \JacConstr(\StateVar)\DynG(\StateVar), \quad \JacInputMat(\StateVar, \SlackVar)\leftarrow\begin{bmatrix}\JacInputGMat(\StateVar) & A(\SlackVar)\end{bmatrix}, $ }
% \Statex \reviseRone{$\DriftVec(\StateVar) \leftarrow \JacConstr(\StateVar) \Dynf(\StateVar)$}
% \State \reviseRone{Compute the tangent space basis \Comment{Alg.~\ref{alg:tangent_basis}}}
% \Statex \reviseRone{$\BasisTangentInputMat\leftarrow \text{SmoothBasis}(\JacInputMat)$ }
% \State \reviseRone{Compute the constraint value $\ConstrEq(\StateVar, \SlackVar)\leftarrow k(\StateVar) + \SlackVar$}
% \State \reviseRone{Compute safe control output $\ControlInput$ \Comment{Eq.~\eqref{eq:atacom_controller}}}
% \State \reviseRone{\textbf{Output:} $\ControlInput$ }
% \end{algorithmic}
% \end{algorithm}

Next, we will prove that \gls{atacom} enforces the safety constraints under mild assumptions. 

\begin{assumption}
The set $\ConstrManifold$ defined in \eqref{eq:constraint_manifold} is non-empty.
\label{asm:non_empty}
\end{assumption}

Assumption~\ref{asm:non_empty} ensures that there exists a non-empty safe set, i.e., $\SafeSet \neq \emptyset$. Then, we make another assumption to ensure the set $\ConstrManifold$ is a manifold. From the Constant-Rank Level Set Theorem, the 0-level set $\ConstrManifold$ is a submanifold if the rank of $\JacInputMat$ is full row rank. To make proper mathematical statements, we first explain some of the notations.

Let $\iota_{\Constr=0}(\StateVar)$ be a mapping that output the index set $\{i\in \mathbb{Z} : \Constr_i(\StateVar)=0\}$ where the $i$-th element of $\Constr(\StateVar)$ is 0 (and $\iota_{\Constr \neq 0}(\StateVar)$ for $\{i\in \mathbb{Z} : \Constr_i(\StateVar) \neq 0 \}$, respectively). Let $(\cdot)_{[\iota, :]}$ be a submatrix of the matrix $(\cdot)$ where all rows of the index set $\iota$ are selected. The size of the index set is denoted as $|\iota|$.
\begin{assumption}
The rank of $(\JacInputGMat)_{[\iota_{k=0}, :]}$ equals to $|\iota_{k=0}(\StateVar)|$, for all $(\StateVar, \SlackVar) \in \partial\ConstrManifold$.
\label{asm:constant_rank}
\end{assumption}
Assumption~\ref{asm:constant_rank} ensures $\mathrm{rank}(\JacInputMat)$ is $K$ anywhere in $\ConstrManifold$. From the definition, we know that $\JacInputMat = [\JacInputGMat \quad \DynSlackMat]$ and $\DynSlackMat$ is a diagonal matrix.  For the interior points $(\StateVar, \SlackVar) \in \Int \ConstrManifold$ where $k(\StateVar)<\vzero$ and $\SlackVar>\vzero$, we know $A_{ii} > 0, \forall i$. Therefore, $\mathrm{rank}(\JacInputMat) = \mathrm{rank}(\DynSlackMat) = K$. On the boundary $\partial \ConstrManifold $ where $\Constr_i(\StateVar)=\Slack_i=0$, we have $\DynSlack_i(\Slack_i) = 0$ and $\mathrm{rank}(\DynSlackMat) = \mathrm{rank}\left((\DynSlackMat)_{[\iota_{k\neq 0}, :]}\right) = \DimConstr - |\iota_{\Constr \neq 0}| $. We know that the row vectors in $ (\JacInputMat)_{[\iota_{\Constr =0}, :]}$ are linearly independent from the row vectors in $(\JacInputMat)_{[\iota_{\Constr \neq 0}, :]}$. From Assumption~\ref{asm:constant_rank}, we have $\mathrm{rank}(\JacInputMat)=\mathrm{rank}\left((\JacInputMat)_{[\iota_{\Constr \neq 0}, :]}\right) + \mathrm{rank}\left((\JacInputMat)_{[\iota_{\Constr = 0}, :]}\right) = \mathrm{rank}\left((\DynSlackMat)_{[\iota_{\Constr \neq 0}, :]}\right) + \mathrm{rank}\left((\JacInputGMat)_{[\iota_{\Constr = 0}, :]}\right) = \DimConstr$. 

\begin{remark}
Assumption~\ref{asm:constant_rank} indicates that the number of constraints reaching their boundary is not bigger than the dimension of control input, i.e., $|\iota_{\Constr = 0}(\StateVar)| \leq \DimControl$.
\end{remark}
Since the matrix $(\JacInputGMat)_{[\iota_{k=0}, :]}$ is of dimension ${|\iota_{k=0}| \times \DimControl}$, we have the following relation $\mathrm{rank}\left((\JacInputGMat)_{[\iota_{k=0}, :]}\right) = |\iota_{k=0}(\StateVar)| \leq \min(|\iota_{k=0}(\StateVar)|, \DimControl) $. Therefore, a necessary condition is $|\iota_{\Constr = 0}(\StateVar)| \leq \DimControl$. 

\reviseRthree{
\begin{remark}
    \label{remark:existence}
    Assumption~\ref{asm:constant_rank} is a necessary condition to ensure that \eqref{eq:constraint_manifold} forms a manifold, as dictated by the constant-rank level set theorem. It is also a sufficient condition to guarantee the existence of a solution to the linear system \eqref{eq:system_requirement}. However, when considering the input constraint $\ControlInput \in \ControlSpace$, this assumption alone is not sufficient to ensure the existence of a solution.
\end{remark}}

To present the safety theorem and proof, we introduce the \textsl{Singular Set}, used in defining the (safe) region of contraction.
\begin{definition}
The \textsl{Singular Set} is  defined as
\begin{equation*}
    \SingularSet \coloneqq \{(\StateVar, \SlackVar) \in \AugmentedSpace: \JacInputMat^{\intercal} \ConstrEqVec = \vzero, \Vert \SlackVar \Vert = 0, \Vert \ConstrEq(\StateVar, \SlackVar) \Vert \neq 0 \}
\end{equation*}
\end{definition}

\begin{remark}
Notice that if $\SingularSet \neq \emptyset$, the Singular Set $\SingularSet$ and the Constraint Manifold $\ConstrManifold$ are disjoint sets, $\mathrm{dist}(\ConstrManifold, \SingularSet)>0$.
\end{remark}
% When $\SlackVar = \vzero$, we have $\JacInputMat = [\JacInputGMat \; \vzero]$ and $c(\StateVar, \SlackVar) = \Constr(\StateVar)$.
\begin{remark}
We know that the kernel of a matrix $\mX$ has the following property: $\mathrm{ker}(\mX^\dagger) = \mathrm{ker}(\mX^\intercal)$. Therefore, the Singularity Set is equivalent to $\{\StateVar \in \StateSpace: \JacInputGMat^{\intercal} \Constr(\StateVar) = \vzero, \Vert \Constr(\StateVar) \Vert \neq 0\} \times \{\SlackVar: \SlackVar = \vzero \}$ and $\{\StateVar \in \StateSpace: \JacInputGMat^{\dagger} \Constr(\StateVar) = \vzero, \Vert \Constr(\StateVar) \Vert \neq 0\} \times \{\SlackVar: \SlackVar=\vzero \}$.  The Singular set is independent of the choice of slack dynamics.
\end{remark}

\begin{figure}[t]
\centering   
    \includegraphics[width=0.49\linewidth, trim=2.3cm 1cm 2.3cm 0cm, clip]{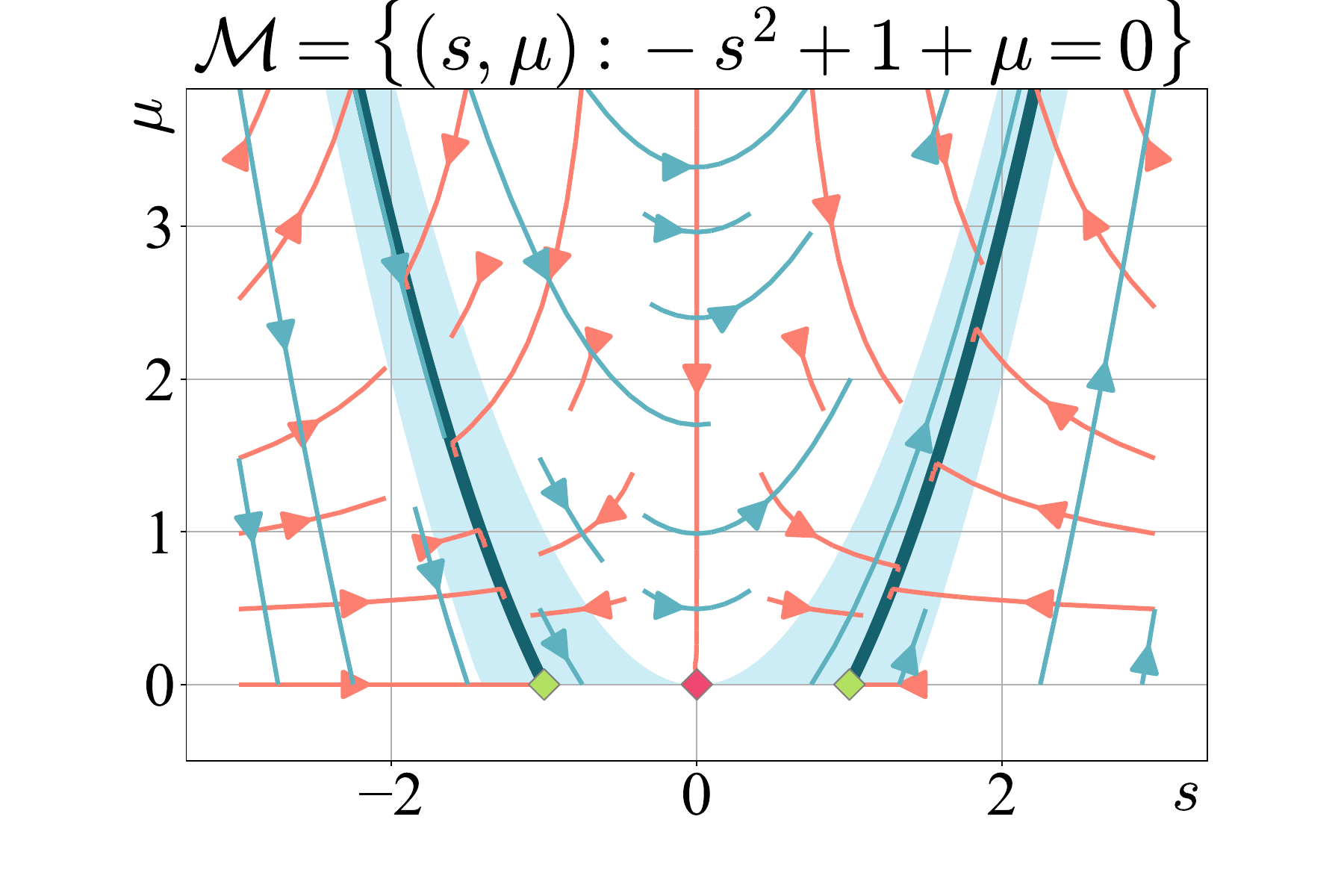}
    \includegraphics[width=0.49\linewidth, trim=2.3cm 1cm 2.3cm 0cm, clip]{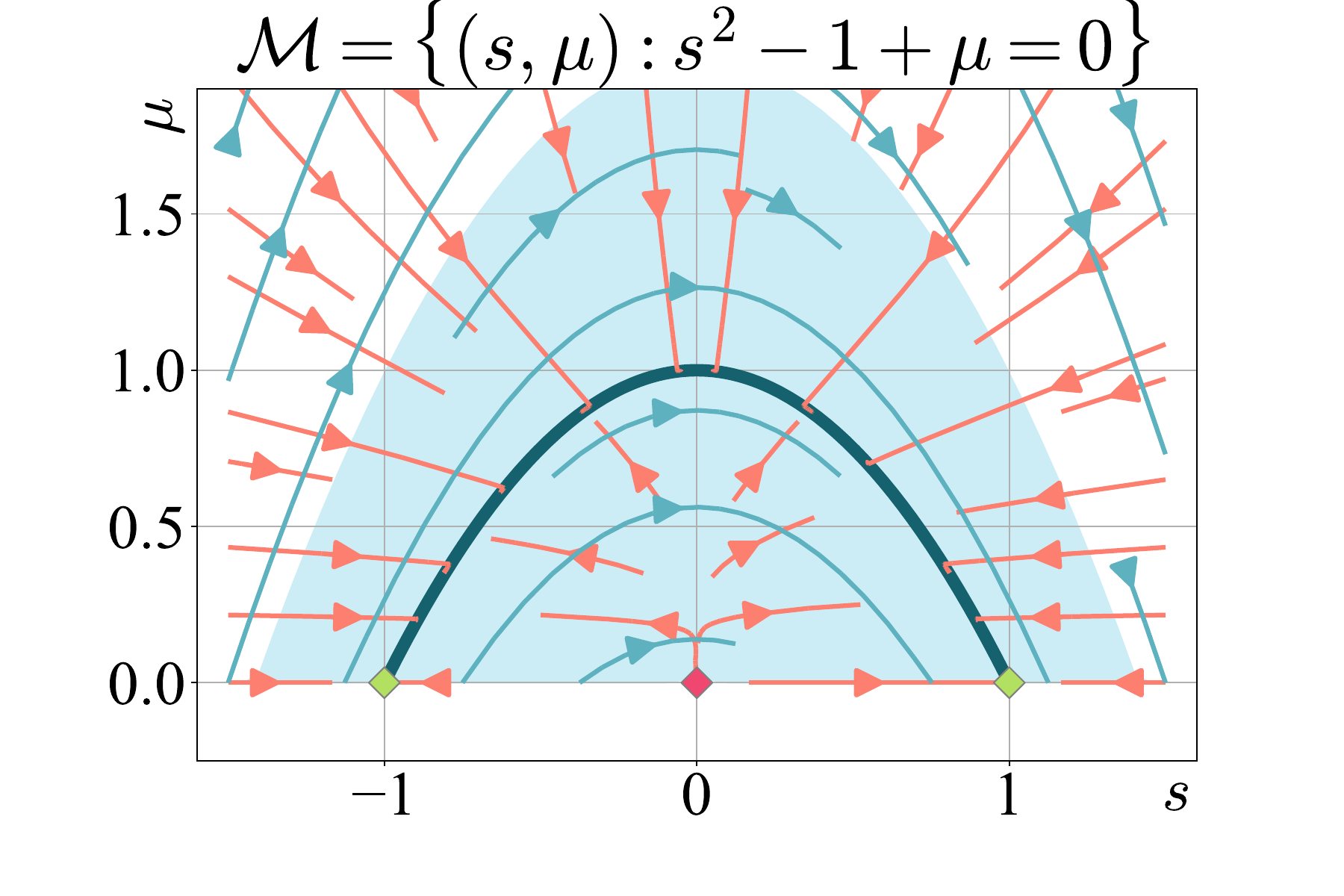}
    \includegraphics[width=0.9\linewidth, trim=1cm 4cm 0cm 2.5cm, clip]{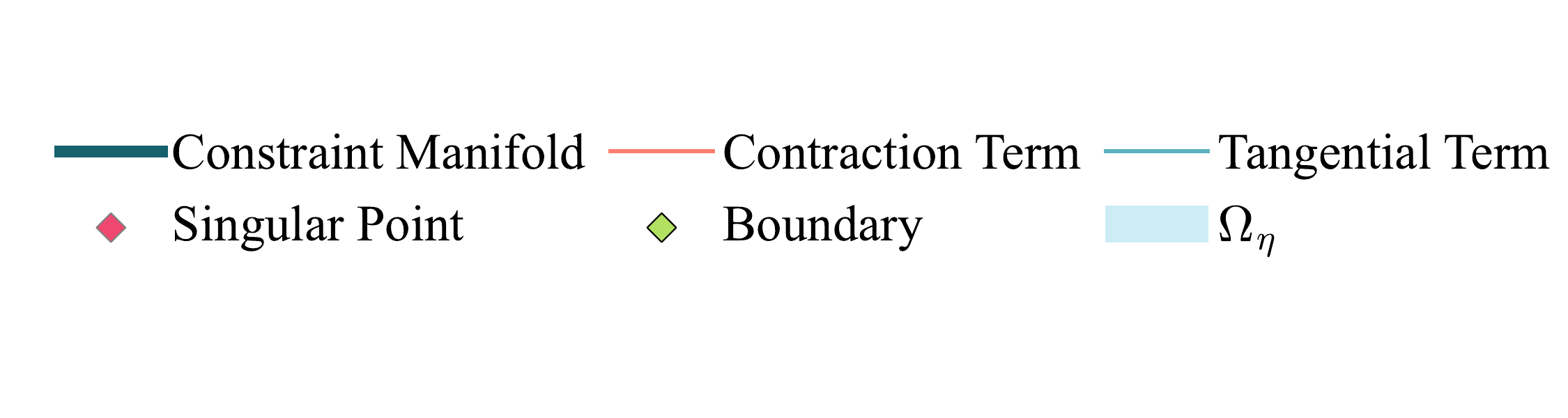}
    \caption{Illustration of the Constraint Manifolds, Singular Set, and the Region of Attraction in Example~\ref{example:singular_set}. The thick blue lines depict the constraint manifold, and the red points are singular points. The contraction term shrinks to zero at the singular point, indicating a saddle point of the Lyapunov function~$V$. The region of attraction $\Omega_\eta$ excludes the singular set, as the blue shaded area shown in the figure.}
    \label{fig:singular_illustration}
    \vspace{-1em}
\end{figure}

The Singular Set can be empty or non-empty, in the following, we provide two examples illustrating the two cases.

\begin{example}
Consider the inequality constraint $k(s):= s \leq 0$ with the first-order system $\dot{s}=u$. We have $f(s)=[0]$ and $\mG=[1]$. The Jacobian $\JacInputGMat = [1]$ is full rank. Therefore, the singular set is empty.
\end{example}

\begin{example}
Consider the simple inequality constraint $\Constr(s) = -s^2 + 1 \leq 0$ and the system $\dot{s}=u$. The constraint manifold is $\ConstrManifold=\{ (s, \Slack)\in \RR \times [0, +\infty): -s^2 + 1 + \Slack = 0 \}$. The Jacobian $\JacInputGMat = [-2s]$. At the singular point, we have $\JacInputGMat \Constr(s) = [-2s (-s^2 + 1)] = [0]$ and $k(\StateN) \neq 0$.  The singular set of this constraint manifold is $\SingularSet = \{ (0, 0) \}$. The constraint manifold and the singular point are depicted in Fig.~\ref{fig:singular_illustration}. We can notice that the contraction term (red) at the singular point is zero, indicating a saddle point of the Lyapunov function. If the tangential term (blue) is zero, the system will stay at the singular point. 
Similarly, the singular set for a different constraint $k(s)=s^2 - 1 \leq 0$ is $\SingularSet = \{ (0, 0) \}$, as shown in Fig.~\ref{fig:singular_illustration}~(right). In the example on the left, the system will approach the singular point $(0, 0)$ from the line $s=0$ with the tangential term equal to zero. In the example on the right, the system does not approach the manifold when starting from the singularity and holding the tangential component to zero.
\label{example:singular_set}
\end{example}

Next, we prove that the controller~\eqref{eq:atacom_controller} is safe (i.e., positively invariant) considering the Constraint Manifold~\eqref{eq:constraint_manifold}, following LaSalle's Invariance Principle. 
We define a Lyapunov-like function $V: \RR^{\DimAugmentedState} \rightarrow \RR$ as 
\begin{equation}
    V(\StateVar, \SlackVar) = \frac{1}{2} \ConstrEqVec^{\intercal} \ConstrEqVec
    \label{eq:lyapunov_function}
\end{equation}
Let $\eta' = \inf\{V(\StateVar, \SlackVar): (\StateVar, \SlackVar) \in \SingularSet\}$ if $\SingularSet$ is non-empty, otherwise $\eta'$ is a sufficient large value. 
From Assumption \ref{asm:compactness}, we can choose a constant $\eta$ such that $0 < \eta < \eta'$, and the set $\Omega_\eta = \{(\StateVar, \SlackVar) \in \AugmentedSpace: V(\StateVar, \SlackVar) \leq \eta \}$ is compact. $\Omega_\eta$ is a superset of the constraint manifold $\ConstrManifold \subset \Omega_\eta$. 

To prove the positive invariance of the controller, we require the following lemma
\begin{lemma}
Let $\mX \in \RR^{m\times n}$, $r = \mathrm{rank}(\mX) \leq m \leq n$, and $\vx \in \RR^{m}$. Let $\mX^{\dagger}, \mX^{\intercal}$ be the psuedoinverse and transpose of $\mX$, respectively. If $\vx^{\intercal}\mX \mX^{\dagger} \vx = 0$, then $\mX^{\intercal} \vx = \vzero$.
\label{lemma:inv_x_eq_tran_x}
\end{lemma}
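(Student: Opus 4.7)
The plan is to exploit the fact that $\mX\mX^{\dagger}$ is the orthogonal projector onto $\mathrm{range}(\mX)$, which is one of the defining properties of the Moore--Penrose pseudoinverse. Once this is established, the quadratic form $\vx^\intercal \mX\mX^{\dagger}\vx$ collapses to a squared norm, and vanishing of that norm immediately places $\vx$ in the orthogonal complement of $\mathrm{range}(\mX)$, which is exactly $\ker(\mX^\intercal)$.

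Concretely, I would first recall (or briefly justify from the Penrose equations) that $\mathbf{P} := \mX\mX^{\dagger}$ is symmetric and idempotent, with image $\mathrm{range}(\mX) \subseteq \RR^m$. An SVD $\mX = \mU\mSigma\mV^\intercal$ with rank $r$ makes this transparent: direct computation gives $\mX\mX^{\dagger} = \mU_r \mU_r^\intercal$, where $\mU_r \in \RR^{m\times r}$ collects the left singular vectors associated with the nonzero singular values and forms an orthonormal basis of $\mathrm{range}(\mX)$.

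Given that $\mathbf{P}$ is an orthogonal projector, I use symmetry and idempotency to rewrite
\begin{equation*}
\vx^\intercal \mX \mX^{\dagger} \vx \;=\; \vx^\intercal \mathbf{P}^\intercal \mathbf{P}\, \vx \;=\; \|\mathbf{P}\vx\|^2 \;=\; \|\mX\mX^{\dagger}\vx\|^2.
\end{equation*}
The hypothesis $\vx^\intercal \mX\mX^{\dagger}\vx = 0$ therefore forces $\mX\mX^{\dagger}\vx = \vzero$, i.e., the orthogonal projection of $\vx$ onto $\mathrm{range}(\mX)$ is zero. Consequently $\vx$ lies in $\mathrm{range}(\mX)^\perp$. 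Using the standard identity $\mathrm{range}(\mX)^\perp = \ker(\mX^\intercal)$ (which follows from the fundamental theorem of linear algebra applied to the columns of $\mX$), I conclude $\mX^\intercal \vx = \vzero$, as required.

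There is no real obstacle here; the lemma is essentially a one-line consequence of the projector property of $\mX\mX^{\dagger}$. The only point that needs explicit care is justifying that $\mX\mX^{\dagger}$ is symmetric (so that the middle equality above is valid), which is immediate from the Penrose equation $(\mX\mX^{\dagger})^\intercal = \mX\mX^{\dagger}$. The rank assumption $r \leq m \leq n$ is not needed in the argument itself beyond guaranteeing that all dimensions line up; the result holds for any real matrix.
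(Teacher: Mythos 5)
Your proof is correct and follows essentially the same route as the paper: both arguments reduce via the SVD to the observation that $\mX\mX^{\dagger}=\mU_1\mU_1^{\intercal}$ is the orthogonal projector onto $\mathrm{range}(\mX)$, deduce from the vanishing quadratic form that $\mU_1^{\intercal}\vx=\vzero$, and conclude $\mX^{\intercal}\vx=\vzero$. The only cosmetic difference is that you invoke $\mathrm{range}(\mX)^{\perp}=\ker(\mX^{\intercal})$ where the paper finishes by an explicit computation with $\mSigma_1^2$.
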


\begin{proof}
Let the Singular Value Decomposition of $\mX$ be
$$ \mX = \begin{bmatrix} \mU_1 & \mU_2 \end{bmatrix} \begin{bmatrix} \mSigma_1 & \vzero \\ \vzero & \vzero \end{bmatrix} \begin{bmatrix} \mV_1^{\intercal} \\ \mV_2^{\intercal} \end{bmatrix} $$
where $\mSigma_1$ is the $r \times r$ diagonal matrix whose diagonal entries are the positive singular values of $\mX$. The pseudo-inverse is
$$ \mX^{\dagger} = \begin{bmatrix} \mV_1 & \mV_2 \end{bmatrix} \begin{bmatrix} \mSigma_1^{-1} & \vzero \\ \vzero & \vzero \end{bmatrix} \begin{bmatrix} \mU_1^{\intercal} \\ \mU_2^{\intercal} \end{bmatrix} $$
Using the above-defined decompositions, we can compute
$$ \mX \mX^{\dagger} = \begin{bmatrix} \mU_1 & \mU_2 \end{bmatrix} \begin{bmatrix} \mathbb{I}_r & \vzero \\ \vzero & \vzero \end{bmatrix} \begin{bmatrix} \mU_1^{\intercal} \\ \mU_2^{\intercal} \end{bmatrix} = \mU_1 \mU_1^{\intercal} $$
and $ \mX \mX^{\intercal} = \mU_1 \mSigma_1^2 \mU_1^{\intercal}$.
Since, by hypothesis, $\vx^{\intercal}\mX \mX^{\dagger} \vx = 0$, we have $\vx^{\intercal}\mU_1 \mU_1^{\intercal} \vx = 0$. Let $\vy=\mU_1^{\intercal} \vx$, from the previous equation, we have $\vy^{\intercal}\vy=0$. This implies $\vy=\vzero$, as the dot product of a vector by itself is zero only if the vector is the null vector.
We can now write $\vx^{\intercal}\mX \mX^{\intercal} \vx = \vx^{\intercal}\mU_1 \mSigma_1^2 \mU_1^{\intercal} \vx = \vy^{\intercal}\mSigma_1^2\vy = 0$. Therefore, we can conclude that $\mX^{\intercal} \vx = \vzero$. 
\end{proof}

Using the Lemma~\ref{lemma:inv_x_eq_tran_x}, we can prove the safety of the \gls{atacom} controller by deriving the following attraction theorem.

\begin{theorem}
Consider the nonlinear control affine system \eqref{eq:augmented_systems}. Let $\ConstrManifold$ be the constraint manifold defined in \eqref{eq:constraint_manifold}, the dynamics of the slack variable defined in \eqref{eq:slack_dynamics}. Under Assumptions~\ref{asm:compactness}-\ref{asm:constant_rank}, every trajectory, starting from $(\StateVar(0), \SlackVar(0)) \in \Omega_{\eta}$, equipped with the controller \eqref{eq:atacom_controller}, will approach $\ConstrManifold$ as $t \rightarrow +\infty$, if $\exists \vu_\StateN \in \ControlSpace$ such that \eqref{eq:system_requirement} holds for all $(\StateVar, \SlackVar) \in \Omega_\eta$.
\label{theorem:atacom_safety}
\end{theorem}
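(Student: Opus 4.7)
I would follow the standard LaSalle template using the Lyapunov candidate $V$ in \eqref{eq:lyapunov_function}, whose zero level set is exactly $\ConstrManifold$. The plan is to (i) verify $\dot V\leq 0$ on $\Omega_\eta$ so that $\Omega_\eta$ is positively invariant, and (ii) identify the largest invariant set $M\subset E=\{\dot V=0\}$ and show it coincides with $\ConstrManifold$ inside $\Omega_\eta$.

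\textbf{Step 1: compute $\dot V$.} Along trajectories of \eqref{eq:augmented_systems}, $\dot V=\ConstrEqVec^{\intercal}\dot{\ConstrEqVec}$. Substituting the augmented dynamics together with the \gls{atacom} controller \eqref{eq:atacom_controller}, and using $\JacInputMat\BasisTangentInputMat=\vzero$ by construction of the tangent basis, I obtain $\dot{\ConstrEqVec}=(\mI-\JacInputMat\JacInputMat^{\dagger})\DriftVec-\lambda\,\JacInputMat\JacInputMat^{\dagger}\ConstrEqVec$. The solvability hypothesis of the theorem, namely that there exists $\ControlInput\in\ControlSpace$ satisfying \eqref{eq:system_requirement} on $\Omega_\eta$, is precisely the statement $\DriftVec\in\mathrm{range}(\JacInputMat)$, which kills the first term. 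Hence $\dot V=-\lambda\,\ConstrEqVec^{\intercal}\JacInputMat\JacInputMat^{\dagger}\ConstrEqVec\leq 0$, since $\JacInputMat\JacInputMat^{\dagger}$ is the orthogonal projector onto $\mathrm{range}(\JacInputMat)$ and is positive semidefinite. Combined with compactness of $\Omega_\eta$ from Assumption~\ref{asm:compactness}, this yields positive invariance of $\Omega_\eta$ and activates LaSalle's principle.

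\textbf{Step 2: characterize $E$ and $M$.} Applying Lemma~\ref{lemma:inv_x_eq_tran_x} with $\mX=\JacInputMat$, the condition $\dot V=0$ is equivalent to $\JacInputMat^{\intercal}\ConstrEqVec=\vzero$, so $E=\{(\StateVar,\SlackVar)\in\Omega_\eta:\JacInputMat^{\intercal}\ConstrEqVec=\vzero\}$. The inclusion $\ConstrManifold\subseteq M$ is immediate because $\ConstrEqVec=\vzero$ on $\ConstrManifold$ and the tangent-space controller was designed so that trajectories launched on $\ConstrManifold$ remain on $\ConstrManifold$. For the converse, the SVD identity inside Lemma~\ref{lemma:inv_x_eq_tran_x} also gives $\JacInputMat^{\intercal}\ConstrEqVec=\vzero\Rightarrow \JacInputMat^{\dagger}\ConstrEqVec=\vzero$, so at any candidate point of $M\setminus\ConstrManifold$ both the contraction term and the drift-compensation residual vanish and the expression for $\dot{\ConstrEqVec}$ from Step~1 is zero. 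Consequently $\ConstrEqVec$ would be frozen at a nonzero value $\ConstrEqVec_0$ along the invariant trajectory, forcing $\JacInputMat(\StateVar(t),\SlackVar(t))^{\intercal}\ConstrEqVec_0=\vzero$ and $\ConstrEqVec(\StateVar(t),\SlackVar(t))=\ConstrEqVec_0$ to persist for all $t\geq 0$. Using the block structure $\JacInputMat=[\JacInputGMat\;\;\DynSlackMat]$ with $\DynSlack_i(\Slack_i)=0\iff\Slack_i=0$, together with the rank-splitting argument following Assumption~\ref{asm:constant_rank}, these persistent algebraic conditions pin the trajectory to the Singular Set $\SingularSet$. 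By construction $\eta<\eta'=\inf_{\SingularSet}V$, so $\Omega_\eta\cap\SingularSet=\emptyset$, a contradiction. Hence $M\cap\Omega_\eta=\ConstrManifold$, and LaSalle's principle finishes the proof.

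\textbf{Where it will hurt.} The nontrivial step is the exclusion argument at the end of Step~2. The identity $\JacInputMat^{\intercal}\ConstrEqVec_0=\vzero$ immediately forces $\Slack_i=0$ only on indices where $c_{0,i}\neq 0$; matching the invariant trajectory with the full condition $\|\SlackVar\|=0$ in the Singular Set definition requires carefully propagating the algebraic constraints $\ConstrEqVec\equiv\ConstrEqVec_0$ and $\JacInputMat^{\intercal}\ConstrEqVec_0\equiv\vzero$ along the closed-loop flow and invoking the rank discussion after Assumption~\ref{asm:constant_rank}. The separation $\eta<\eta'$ together with Assumption~\ref{asm:compactness} is precisely the ingredient that rules out these exotic invariant loci and allows LaSalle's conclusion to collapse to $\ConstrManifold$.
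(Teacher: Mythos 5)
Your proposal is correct and follows essentially the same route as the paper: the same Lyapunov-like function $V=\tfrac12\ConstrEqVec^{\intercal}\ConstrEqVec$, the identical computation yielding $\dot V=-\lambda\,\ConstrEqVec^{\intercal}\JacInputMat\JacInputMat^{\dagger}\ConstrEqVec\le 0$ via $\JacInputMat\BasisTangentInputMat=\vzero$ and the solvability of \eqref{eq:system_requirement}, Lemma~\ref{lemma:inv_x_eq_tran_x} to rewrite $E$ as $\{\JacInputMat^{\intercal}\ConstrEqVec=\vzero\}$, and the choice $\eta<\eta'$ to exclude the singular set. The only difference is cosmetic: the paper asserts $\mathcal{E}=\ConstrManifold$ directly and then checks invariance of $\ConstrManifold$ by integration, whereas you argue by contradiction on the largest invariant set $M$ and explicitly flag the bookkeeping needed to match $\JacInputMat^{\intercal}\ConstrEqVec=\vzero$ against the $\Vert\SlackVar\Vert=0$ clause in the definition of $\SingularSet$ --- a point the paper passes over more quickly.
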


\begin{proof}
We will use LaSalle's Principle to prove that the system's controller is safe. We compute the time-derivative of the Lyapunov-like function~\eqref{eq:lyapunov_function}:
\begin{align*}
    \dot{V} &= \ConstrEqVec^{\intercal} \dot{\ConstrEqVec} 
    = \ConstrEqVec^{\intercal} \JacConstrEqMat \begin{bmatrix} \dot{\StateVar} \\ \dot{\SlackVar} \end{bmatrix} \stackrel{\eqref{eq:augmented_systems}}{=} \ConstrEqVec^{\intercal} \left[ \DriftVec + \JacInputMat \begin{bmatrix} \ControlInput \\ \ControlSlack \end{bmatrix} \right] \\
    &\stackrel{\eqref{eq:atacom_controller}}{=} \ConstrEqVec^{\intercal}  \left[ \DriftVec + \JacInputMat\left( -\JacInputMat^\dagger \DriftVec - \lambda \JacInputMat^\dagger \ConstrEqVec + \BasisTangentInputMat \vu \right)  \right] \\
    &= \ConstrEqVec^{\intercal} \left[ \DriftVec - \JacInputMat \JacInputMat^\dagger \DriftVec - \lambda \JacInputMat \JacInputMat^\dagger \ConstrEqVec + \JacInputMat \BasisTangentInputMat \vu \right] &
\end{align*}

By definition, we know that $\JacInputMat \BasisTangentInputMat = \vzero$. If Eq.~\eqref{eq:system_requirement} holds, we have $\DriftVec - \JacInputMat\JacInputMat^{\dagger} \DriftVec = \vzero$.
We can simplify the RHS and get
\begin{equation}
    \dot{V} = -\lambda \ConstrEqVec^{\intercal} \JacInputMat \JacInputMat^\dagger \ConstrEqVec \leq 0
    \label{eq:lyapunov_nsd}
\end{equation}

Due to the negative semi-definiteness of \eqref{eq:lyapunov_nsd}, the compact set $\Omega_\eta$ is positively invariant. We can find the set \mbox{$\mathcal{E} \coloneqq \{(\StateVar, \SlackVar)\in \Omega_\eta: \dot{V} = 0 \}$}. From Lemma~\ref{lemma:inv_x_eq_tran_x}, we have the equivalent set
\begin{align*}
    \mathcal{E} &= \{ (\StateVar, \SlackVar) \in \Omega_\eta: -\lambda \ConstrEqVec^{\intercal} \JacInputMat \JacInputMat^\dagger \ConstrEqVec = 0 \} \\
    &= \{ (\StateVar, \SlackVar) \in \Omega_\eta: \JacInputMat^\intercal \ConstrEqVec = \vzero \} 
\end{align*} 
Since $\Omega_\eta \cap \SingularSet = \emptyset$, we have $\mathcal{E} = \{ (\StateVar, \SlackVar) \in \Omega_\eta: c(\StateVar, \SlackVar) = \vzero \} = \ConstrManifold$. Then, we verify that $\ConstrManifold$ is positively invariant
\begin{equation*}
    \ConstrEq(\StateVar(t), \SlackVar(t)) = \ConstrEq(\StateVar(0), \SlackVar(0)) + \int_0^t \dot{\ConstrEq}(\StateVar(\tau), \SlackVar(\tau)) \mathrm{d}\tau
\end{equation*}
From a initial state where $\Constr (\StateVar(0),\SlackVar(0))=0$, we get
\begin{align*}
    \Constr(\StateVar(t), \SlackVar(t)) &= \int_0^t \DriftVec + \JacInputMat\left( -\JacInputMat^\dagger \DriftVec - \lambda \JacInputMat^\dagger \ConstrEqVec + \BasisTangentInputMat \right) \mathrm{d}\tau \\
    &= -\int_0^t \lambda \JacInputMat \JacInputMat^\dagger \ConstrEqVec \mathrm{d}\tau = \vzero
\end{align*}
Thus, for any time $t$, we have $\ConstrEq(\StateVar(t), \SlackVar(t))=0$ and $(\StateVar(t), \SlackVar(t))\in \ConstrManifold$. 
Therefore, $\ConstrManifold$ is the largest positively invariant set in $\mathcal{E}$.
\end{proof}

Theorem~\ref{theorem:atacom_safety} provides necessary conditions to guarantee the convergence to the constraint manifold $\ConstrManifold$ in the neighborhood $\Omega_\eta$, i.e., region of contraction. 
% Note that the \gls{atacom} controller is not guaranteed to be safe globally due to the singularities. 
% Next, we would like to discuss the properties of the Singular Set $\SingularSet$. The Singular Set is empty when $\JacInputMat$ is full row rank, i.e. $\JacInputMat^\intercal$ is full column rank. 

\begin{example}
Back to the constraints defined in Example~\ref{example:singular_set}, the system may get stuck at the singular point (red diamond) when the tangential component is zero. We can, therefore, construct the region of contraction excluding the singular point. The Lyapunov function at the singular point in both cases is $V(0, 0) = 1$. The region of contraction $\Omega_\eta$ can be determined by $0 <\eta < 1$, i.e. the blue-shaded area. 
\end{example} 

\subsection{Safety with Stochastic Policy in Reinforcement Learning}

Theorem \ref{theorem:atacom_safety} has shown the safety for a Lipschitz continuous controller. However, in reinforcement learning, the control inputs are drawn from a stochastic policy $\pi$ at each time step. The control input is no longer Lipschitz continuous. To deal with the reinforcement learning setting, we treat the system with stochastic policy as a switched system. The system switches to a new one at each time step when  a new action is drawn from the stochastic policy.
Many prior works have studied Lasalle's Invariance principle for switched systems~\cite{mancilla-aguilar_extension_2006, zhang_weak-invariance_2014, bacciotti_invariance_2005}. We present the most relevant one here: 

\begin{theorem}{\cite{bacciotti_invariance_2005}}
    Let $V(x):\RR^\DimAugmentedState \rightarrow [0, +\infty)$ be a weak common Lyapunov function for switched systems $\mathcal{F}=\{ f_p(x), p\in P \}$, where $P={1, ..., N}$ and $f_p(x)$ is continous. Let 
    $$\mathcal{E}=\left\{x \in \Omega_\eta: \exists p \in P \text{ such that } \nabla V(x) \cdot f_p(x) = 0\right\}.$$
    Let $\ConstrManifold$ be the union of all compact weakly invariant sets contained in $\mathcal{E}\cap \Omega_\eta$. Every solution $\varphi(t)$ has a nonvanishing dwell time such that $\varphi(0) \in \Omega_\eta$ is attracted by $\ConstrManifold$.
    \label{theorem:lasalle_switched_system}
\end{theorem}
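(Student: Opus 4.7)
The plan is to adapt the classical LaSalle argument to the switched setting, with the nonvanishing dwell time doing the heavy lifting. First I would establish positive invariance of $\Omega_\eta$. Because $V$ is a weak common Lyapunov function, $\nabla V(x)\cdot f_p(x)\leq 0$ for every $p\in P$ and every $x$, so along any admissible switched trajectory $\varphi(t)$ with $\varphi(0)\in \Omega_\eta$ the map $t\mapsto V(\varphi(t))$ is non-increasing. Since $\Omega_\eta=\{V\leq \eta\}$ is a sublevel set, $\varphi$ cannot leave $\Omega_\eta$; compactness of $\Omega_\eta$ then makes $\varphi$ bounded for all $t\geq 0$, and $V(\varphi(t))$ decreases monotonically to some limit $c\geq 0$.

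Next I would introduce the $\omega$-limit set $\omega(\varphi)$. Standard arguments (boundedness plus the limit of $V$ existing) give that $\omega(\varphi)$ is nonempty, compact, contained in $\Omega_\eta$, and that $V\equiv c$ on $\omega(\varphi)$ by continuity.

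The core step is to prove $\omega(\varphi)$ is weakly invariant and contained in $\mathcal{E}$. Pick any $x^{*}\in \omega(\varphi)$ and a sequence $t_k\to\infty$ with $\varphi(t_k)\to x^{*}$. The dwell-time hypothesis guarantees a constant $\tau>0$ such that on each interval $[t_k,t_k+\tau]$ at least one mode $p_k\in P$ is active throughout. Since $P$ is finite, I extract a subsequence (not relabeled) on which a common mode $p^{*}$ is active on $[t_k,t_k+\tau]$. Continuous dependence on initial conditions for the ODE $\dot y=f_{p^{*}}(y)$, applied to the initial data $\varphi(t_k)\to x^{*}$, lets me pass to the limit and conclude that the flow $\phi_{p^{*}}^{s}(x^{*})$ for $s\in[0,\tau]$ lies in $\omega(\varphi)$. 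This establishes weak invariance. Since $V$ is constant on this flow segment, differentiating yields $\nabla V(\phi_{p^{*}}^{s}(x^{*}))\cdot f_{p^{*}}(\phi_{p^{*}}^{s}(x^{*}))=0$; evaluating at $s=0$ gives $x^{*}\in \mathcal{E}$.

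Combining everything, $\omega(\varphi)\subseteq \mathcal{E}\cap \Omega_\eta$ is a compact weakly invariant set, hence contained in $\ConstrManifold$ by its definition; standard topology of $\omega$-limit sets then gives $\mathrm{dist}(\varphi(t),\ConstrManifold)\to 0$, which is the claimed attraction. The main obstacle is making the subsequence and continuous-dependence argument rigorous in the presence of switching: one needs the dwell time to guarantee that the limiting flow segment has uniformly positive length, and one needs enough regularity (continuity of each $f_p$, plus local boundedness inherited from compactness of $\Omega_\eta$) to justify interchanging the limit in $k$ with the ODE flow. The finiteness of $P$ is essential for the pigeonhole extraction of a common mode $p^{*}$.
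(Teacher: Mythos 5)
First, a point of comparison: the paper does not prove this statement at all. Theorem~\ref{theorem:lasalle_switched_system} is imported verbatim from the literature (the citation to Bacciotti) and is used as a black box to establish the subsequent proposition about the \gls{atacom} switched system, so there is no in-paper proof to measure your argument against. Your proposal is therefore a reconstruction of the external result, and it follows the standard route: monotonicity of $V$ along switched trajectories gives positive invariance of the sublevel set $\Omega_\eta$ and convergence of $V(\varphi(t))$ to a constant $c$; the $\omega$-limit set is nonempty, compact, and carries $V\equiv c$; the dwell time plus finiteness of $P$ lets you extract a single mode active on a uniform-length window near each approach time, pass to the limit, and conclude that $\omega(\varphi)$ is weakly invariant and contained in $\mathcal{E}$. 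This is the right skeleton, and your closing remarks correctly identify where the regularity assumptions are consumed.

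There is, however, one concrete gap. The claim that ``on each interval $[t_k,t_k+\tau]$ at least one mode is active throughout'' for a uniform $\tau>0$ is false as stated: the $t_k$ are approach times, not switching times, so the next switch after $t_k$ can occur arbitrarily soon. The dwell time only guarantees that an interval of length less than $h$ contains at most one switching instant, hence a switch-free subinterval of length at least $h/4$ lying either immediately to the left or immediately to the right of $t_k$ (consider $[t_k-h/4,\,t_k+h/4]$). You must therefore run the limiting argument two-sidedly: after a pigeonhole over the finitely many modes and the two sides, either a forward or a backward solution segment of $f_{p^*}$ through $x^*$ of length $h/4$ lies in $\omega(\varphi)$; since $V$ is constant on $\omega(\varphi)$, differentiating at $s=0^{+}$ or $s=0^{-}$ still yields $\nabla V(x^*)\cdot f_{p^*}(x^*)=0$, so $x^*\in\mathcal{E}$ and weak invariance holds. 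Without the two-sided version you only place points \emph{near} $x^*$ into $\mathcal{E}$, which does not yield $\omega(\varphi)\subseteq\mathcal{E}$ and hence does not let you conclude $\omega(\varphi)\subseteq\mathcal{M}$. A second, smaller caveat: each $f_p$ is assumed merely continuous, so the ``flow'' $\phi^{s}_{p^*}$ need not be single-valued and continuous dependence on initial data can fail; the correct step is an Arzel\`a--Ascoli compactness argument on the solution segments $\varphi|_{[t_k,t_k+h/4]}$, whose uniform limit is \emph{some} solution of $\dot{y}=f_{p^*}(y)$ through $x^*$ contained in $\omega(\varphi)$ --- which is exactly what weak invariance asks for.
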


The solution $\varphi(t)$ has a \textit{nonvanishing dwell time} if the sequence $\{t_j\}$ of switching times satisfies $\inf_j (t_{j+1} - t_j) \geq h > 0$. For stochastic controller/policy, the control input is sampled from a distribution $\vu_j \sim \pi(\cdot|\vs_j)$ at time $t_j$. The control action is kept the same for the time interval $[t_j, t_{j+1})$. 

Using Theorem~\ref{theorem:lasalle_switched_system}, we can prove the following proposition
\begin{proposition}
    Let $g_j(\StateVar, \SlackVar)$ be the switched system defined by \eqref{eq:augmented_systems} and \eqref{eq:atacom_controller} determined by control input $\vu_j$, where $\vu_j$ is a control inputs for the time interval $[t_j, t_{j+1}), j=1, ..., N$. Let $V$ be the common Lyapunov function defined in \eqref{eq:lyapunov_function}. Every trajectory, starting from $(\vs(0), \vmu(0))\in \Omega_\eta$ is attracted by $\ConstrManifold$.
\end{proposition}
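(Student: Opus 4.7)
The plan is to reduce this proposition to Theorem~\ref{theorem:lasalle_switched_system} by verifying its three hypotheses for the family $\mathcal{F} = \{g_j\}_{j=1}^{N}$: (i) the $V$ of \eqref{eq:lyapunov_function} is a common weak Lyapunov function on $\Omega_\eta$; (ii) every solution admits a nonvanishing dwell time; and (iii) the largest compact weakly invariant set contained in $\mathcal{E}\cap\Omega_\eta$ coincides with $\ConstrManifold$. Items (ii) and (iii) will turn out to be almost free; the substantive step, and the main obstacle, is (i), since we must show that the decrease of $V$ does not depend on which random action $\vu_j$ is in force.

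For (i), fix an arbitrary mode $g_j$ and repeat verbatim the calculation used in the proof of Theorem~\ref{theorem:atacom_safety}, with $\vu$ replaced by $\vu_j$, to obtain
\begin{equation*}
\dot{V} = \ConstrEqVec^{\intercal}\!\left[\DriftVec - \JacInputMat\JacInputMat^{\dagger}\DriftVec - \lambda\, \JacInputMat\JacInputMat^{\dagger}\ConstrEqVec + \JacInputMat\BasisTangentInputMat\,\vu_j\right].
\end{equation*}
Because $\JacInputMat\BasisTangentInputMat = \vzero$ by construction of the tangent basis, the $\vu_j$ term vanishes regardless of the chosen action. Under Assumption~\ref{asm:constant_rank} together with the solvability hypothesis inherited from Theorem~\ref{theorem:atacom_safety}, the drift-compensation term also cancels, leaving
\begin{equation*}
\dot{V} = -\lambda\, \ConstrEqVec^{\intercal}\JacInputMat\JacInputMat^{\dagger}\ConstrEqVec \leq 0
\end{equation*}
uniformly in $j$. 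Thus $V$ is a common weak Lyapunov function, $\Omega_\eta$ is positively invariant under every switching signal, and $\Omega_\eta$ is compact by Assumption~\ref{asm:compactness}.

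For (ii), the sampling instants $\{t_j\}$ are imposed by the control loop at a fixed period $h>0$, so $\inf_j(t_{j+1}-t_j)\geq h$ and the dwell-time condition is trivially met. For (iii), the identity above shows that $\nabla V \cdot g_j = 0$ for some (equivalently, for every) $j$ iff $\ConstrEqVec^{\intercal}\JacInputMat\JacInputMat^{\dagger}\ConstrEqVec=0$; Lemma~\ref{lemma:inv_x_eq_tran_x} then gives $\JacInputMat^{\intercal}\ConstrEqVec=\vzero$. Combining this with the choice $\eta<\eta'$, which ensures $\Omega_\eta\cap\SingularSet=\emptyset$, forces $\ConstrEqVec=\vzero$, whence
\begin{equation*}
\mathcal{E}\cap\Omega_\eta = \{(\StateVar,\SlackVar)\in\Omega_\eta : \ConstrEq(\StateVar,\SlackVar)=\vzero\} = \ConstrManifold.
\end{equation*}
The invariance argument at the end of the proof of Theorem~\ref{theorem:atacom_safety} applies mode-wise, showing that $\ConstrManifold$ is strongly (hence weakly) invariant under each $g_j$, so it is the largest such set in $\mathcal{E}\cap\Omega_\eta$. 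Invoking Theorem~\ref{theorem:lasalle_switched_system} then yields attraction of every trajectory starting in $\Omega_\eta$ to $\ConstrManifold$, completing the proof.
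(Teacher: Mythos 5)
Your proof is correct and takes essentially the same route as the paper: the paper's own proof is a two-line reduction to Theorem~\ref{theorem:lasalle_switched_system} that simply notes each mode $g_j$ is attracted to $\ConstrManifold$ by Theorem~\ref{theorem:atacom_safety} and that the invariant set of the switched system is the union $\cup_j \ConstrManifold_j = \ConstrManifold$. You make explicit the three hypotheses (common weak Lyapunov function with the $\vu_j$-term killed by $\JacInputMat\BasisTangentInputMat=\vzero$, the nonvanishing dwell time from the fixed control period, and the identification of $\mathcal{E}\cap\Omega_\eta$ with $\ConstrManifold$ via Lemma~\ref{lemma:inv_x_eq_tran_x} and the exclusion of $\SingularSet$) that the paper leaves implicit, which is a welcome tightening rather than a different argument.
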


\begin{proof}
    From the Theorem~\ref{theorem:atacom_safety}, we can verify that each system $g_j$ is attracted by $\ConstrManifold_j = \ConstrManifold$. The invariant set of the switched system is $\cup_{j=1}^{N} \ConstrManifold_j = \ConstrManifold$.
\end{proof}

\subsection{\reviseRtwo{Input-to-State Stability with disturbances}\label{sec:atacom_with_disturbances}}
\reviseRtwo{
In the previous analysis, we assume full knowledge of the dynamics is available. In practice, a perfect dynamic model is not available due to model mismatch and disturbances. In this section, we address this issue by considering the dynamic model with external disturbances as
\begin{equation}
    \StateVar = \Dynf(\StateVar) + \DynG(\StateVar)\ControlInput + \vepsilon 
    \label{eq:dynamics_with_disturbance}
\end{equation}
where $\Vert \vepsilon\Vert < \omega$ is a bounded random variable that captures the model mismatch and external disturbances. In the following theorem, we prove the dynamic system with disturbances~\eqref{eq:dynamics_with_disturbance} is \gls{iss}~\cite{angelilasalle}.
}

\reviseRtwo{
\begin{theorem}
Consider the dynamic system~\eqref{eq:dynamics_with_disturbance} and the safety constraint~\eqref{eq:inequality_constraint}. Let $\MM$ be the constraint manifold defined in~\eqref{eq:constraint_manifold} and the dynamics of the slack variable defined in~\eqref{eq:slack_dynamics}. The system with controller~\eqref{eq:atacom_controller} is Input-to-State Stable.  Every trajectory, starting from $\Omega_{\eta}$, will approach the neighborhood of constraint manifold $\{ (\StateVar, \SlackVar) | \Vert \ConstrEqVec\Vert = \Vert \Constr(\StateVar) + \SlackVar \Vert \leq \eta_c, \sqrt{2\eta} > \eta_c > 0 \}$, if $\Vert \JacConstrMat \Vert \leq \eta_J$, $\lambda \geq \frac{ \eta_J}{\eta_c}\omega$, and \eqref{eq:system_requirement} holds.
\end{theorem}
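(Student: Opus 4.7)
The plan is to mirror the structure of the proof of Theorem~\ref{theorem:atacom_safety}, re-using the Lyapunov-like function $V(\StateVar, \SlackVar) = \tfrac{1}{2}\ConstrEqVec^{\intercal}\ConstrEqVec$, but carrying the disturbance term $\vepsilon$ through the computation of $\dot V$ and then showing the residual term is dominated by the contraction term once $\Vert\ConstrEqVec\Vert$ exceeds the threshold $\eta_c$.

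First, I would substitute the perturbed dynamics~\eqref{eq:dynamics_with_disturbance} together with the controller~\eqref{eq:atacom_controller} into $\dot V = \ConstrEqVec^{\intercal}\JacConstrEqMat[\dot\StateVar;\dot\SlackVar]$. Because the disturbance enters only in the $\StateVar$-component of the augmented dynamics, and because $\JacConstrEqMat = [\JacConstrMat\ \mathbb{I}_K]$, the extra contribution to $\dot V$ is exactly $\ConstrEqVec^{\intercal}\JacConstrMat\vepsilon$. Using the system requirement~\eqref{eq:system_requirement} (which cancels the drift term as in the unperturbed proof) and $\JacInputMat\BasisTangentInputMat=\vzero$, this yields
\begin{equation*}
    \dot V = -\lambda\,\ConstrEqVec^{\intercal}\JacInputMat\JacInputMat^{\dagger}\ConstrEqVec + \ConstrEqVec^{\intercal}\JacConstrMat\vepsilon.
\end{equation*}

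Next I would bound the two terms separately. For the contraction term, Assumption~\ref{asm:constant_rank} ensures that on $\Omega_\eta$ (which excludes the singular set $\SingularSet$) the matrix $\JacInputMat$ has full row rank $\DimConstr$, so by the SVD decomposition used in Lemma~\ref{lemma:inv_x_eq_tran_x} the product $\JacInputMat\JacInputMat^{\dagger}$ equals the identity $\mathbb{I}_K$ on the range of $\ConstrEqVec$, giving $\ConstrEqVec^{\intercal}\JacInputMat\JacInputMat^{\dagger}\ConstrEqVec = \Vert\ConstrEqVec\Vert^2$. For the disturbance term, Cauchy--Schwarz combined with the hypotheses $\Vert\JacConstrMat\Vert\leq \eta_J$ and $\Vert\vepsilon\Vert<\omega$ yields $\ConstrEqVec^{\intercal}\JacConstrMat\vepsilon \leq \eta_J\,\omega\,\Vert\ConstrEqVec\Vert$. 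Putting these together,
\begin{equation*}
    \dot V \;\leq\; -\lambda\,\Vert\ConstrEqVec\Vert^2 + \eta_J\,\omega\,\Vert\ConstrEqVec\Vert \;=\; \Vert\ConstrEqVec\Vert\bigl(-\lambda\Vert\ConstrEqVec\Vert + \eta_J\,\omega\bigr).
\end{equation*}
Invoking the hypothesis $\lambda \geq (\eta_J/\eta_c)\,\omega$, whenever $\Vert\ConstrEqVec\Vert>\eta_c$ the bracket is strictly negative, so $\dot V<0$ on the shell $\{\eta_c < \Vert\ConstrEqVec\Vert \leq \sqrt{2\eta}\}$.

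To close the argument I would show positive invariance of $\Omega_\eta$ and attraction to the target set. The condition $\sqrt{2\eta}>\eta_c$ ensures the residual set $\{\Vert\ConstrEqVec\Vert\leq\eta_c\}$ is properly contained in $\Omega_\eta$, so on the boundary of $\Omega_\eta$ one has $\dot V < 0$, implying positive invariance. Inside $\Omega_\eta$, the strict decrease of $V$ outside the residual set forces every trajectory to enter $\{\Vert\ConstrEqVec\Vert\leq\eta_c\}$ in finite time; for \gls{iss} in the formal sense one can additionally bound $\eta_J\omega\Vert\ConstrEqVec\Vert \leq \tfrac{\lambda}{2}\Vert\ConstrEqVec\Vert^2 + \tfrac{\eta_J^2}{2\lambda}\omega^2$ via Young's inequality to obtain a dissipation inequality $\dot V \leq -\tfrac{\lambda}{2}\Vert\ConstrEqVec\Vert^2 + \tfrac{\eta_J^2}{2\lambda}\omega^2$ of standard \gls{iss}-Lyapunov form.

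The main obstacle is handling the projection matrix $\JacInputMat\JacInputMat^{\dagger}$ on the boundary $\Bound\ConstrManifold$, where some diagonal entries of $\DynSlackMat$ vanish: one must verify that Assumption~\ref{asm:constant_rank} still delivers the identity behavior of the projection on $\ConstrEqVec$ uniformly on $\Omega_\eta$. Formalizing the uniform bound away from the singular set (so that the constant $\sigma$ hidden in $\ConstrEqVec^{\intercal}\JacInputMat\JacInputMat^{\dagger}\ConstrEqVec \geq \sigma \Vert\ConstrEqVec\Vert^2$ does not degenerate) and tying the resulting ultimate bound cleanly to $\eta_c$ is the step that deserves the most care.
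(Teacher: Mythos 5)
Your proposal is correct and follows essentially the same route as the paper's proof: the same Lyapunov-like function, the same perturbed expression $\dot V = -\lambda\,\ConstrEqVec^{\intercal}\JacInputMat\JacInputMat^{\dagger}\ConstrEqVec + \ConstrEqVec^{\intercal}\JacConstrMat\vepsilon$, the same use of Assumption~\ref{asm:constant_rank} to reduce $\JacInputMat\JacInputMat^{\dagger}$ to the identity, and the same Cauchy--Schwarz bound leading to $\dot V<0$ on the shell $\eta_c\leq\Vert\ConstrEqVec\Vert<\sqrt{2\eta}$. Your added Young's-inequality dissipation form and the caution about uniformity of the projection near $\Bound\ConstrManifold$ are reasonable refinements, but the paper simply invokes the full-row-rank identity and cites the ISS result directly, so the arguments coincide in substance.
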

\begin{proof}
    Consider the Lyapunov-like function $V = \frac{1}{2}\ConstrEqVec^\intercal \ConstrEqVec$. Following a similar derivation from \eqref{eq:lyapunov_nsd}, we have
    $ \dot{V} = -\lambda \ConstrEqVec^{\intercal} \JacInputMat \JacInputMat^{\dagger} \ConstrEqVec + \ConstrEqVec^{\intercal} \JacConstrMat \vepsilon$. 
    From Assumption~\ref{asm:constant_rank}, $\JacInputMat$ is full row rank and $\JacInputMat \JacInputMat^{\dagger} = \mathbb{I}$. We have 
    \begin{align*}
        \dot{V} &\leq -\lambda \Vert \ConstrEqVec \Vert ^2 + \Vert \ConstrEqVec \Vert \Vert \JacInputMat \Vert  \Vert \vvepsilon \Vert  
    \end{align*}
    The system is \gls{iss} following~\cite{angelilasalle}~[Theorem 1]. When $\sqrt{2\eta} > \Vert \ConstrEqVec \Vert \geq \eta_c$ 
    \begin{align*}
        \dot{V} &\leq - \frac{\omega \eta_J}{\eta_c} \Vert \ConstrEqVec \Vert ^2 + \omega \eta_J \Vert \ConstrEqVec \Vert < 0
    \end{align*}
    The system starting from $(\StateVar, \SlackVar) \in \Omega_{\eta}$ will approach the neighborhood of the manifold $\{(\StateVar, \SlackVar) | \Vert \Constr(\StateVar) + \SlackVar \Vert  \leq \eta_c\}$. Given that $\Vert \ConstrVec \Vert \leq \Vert \ConstrEqVec \Vert$. The maximum constraint violation will also be bounded by $\eta$.
\end{proof}
}

\section{Practical Implementation}
\label{sec:practical_implementation}
In this section, we will introduce several practical techniques for implementing the \gls{atacom} method. We compared several different types of dynamics of the slack variable $\DynSlackMat$, introduced a method to obtain the continuously varying tangent space basis, and a drift clipping technique that leverages the system drift only when necessary to ensure safety. 

\subsection{Slack Variable Dynamics}
\label{sec:slack_dynamics_functions}
\begin{figure}[t]
    \centering
    \begin{minipage}{0.5\linewidth}
    \centering
    \includegraphics[width=\linewidth, trim=2cm 2cm 1cm 1cm, clip]{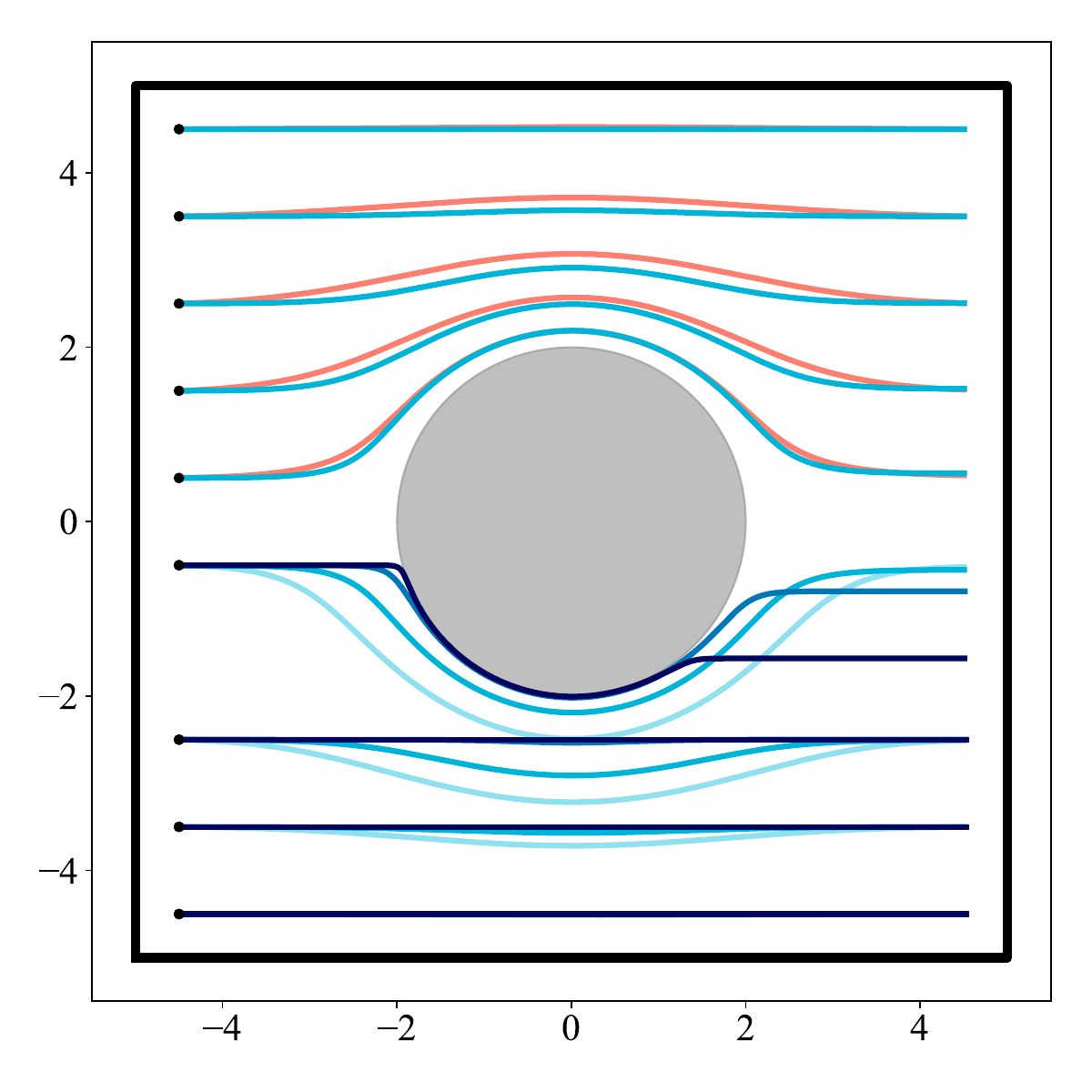}
    \end{minipage}
    \begin{minipage}{0.28\linewidth}
    \centering
        \includegraphics[width=\linewidth, trim=1.5cm 1cm 0.5cm 0.5cm, clip]{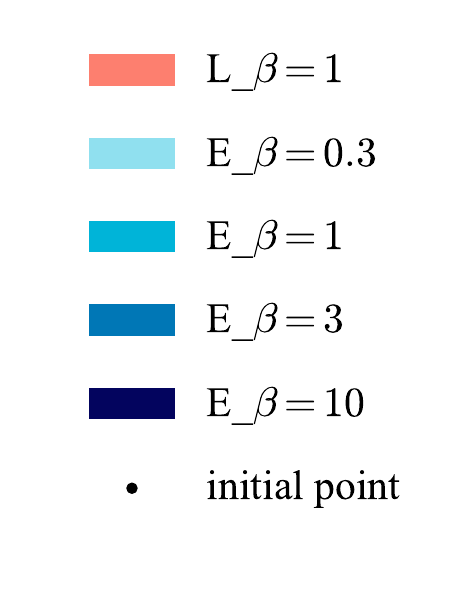}
    \end{minipage}
    
    \caption{Comparison of trajectories with different slack function $\alpha(\SlackVar)$ in a 2D Environment. 
    The grey area renders an obstacle in a 2D environment. The constraint is defined as $\Vert \StateVar - \vp_{o} \Vert> 0$. 
    The system is controlled by velocity $\dot{\StateVar} = \ControlInput$. The curves show the trajectories starting from different initial points with a constant control input $\ControlInput = [1 \quad 0]^\intercal$. The upper half shows the trajectories with the exponential slack dynamics (E) and linear ones (L). The lower half shows trajectory with different $\beta$ parameters using exponential slack dynamics.}
    \label{fig:comp_slack_type}
    % \vspace{-1em}
\end{figure}

Different types of dynamics of the slack variable $\DynSlackMat$ will affect how the tangent space is deformed. We use a simple example to understand how the tangent space is deformed.

\begin{example}
    Consider a simple constraint $k(s)\coloneqq s \leq 0$ whose Jacobian $\JacConstr = 1$. Let the dynamics be defined as $\dot{s} = u_s$. We get the $\JacInputMat = \begin{bmatrix} 1 & \alpha(\mu) \end{bmatrix}$. 
    We can compute the basis of the tangent space as $\BasisTangentInputMat = \left[ -\alpha(\mu)\quad 1 \right]^\intercal/\sqrt{1 + \alpha^2(\mu)}$.
    
    We, therefore, have $\lim_{\alpha(\mu)\rightarrow 0}\BasisTangentInputMat = \begin{bmatrix} 0 & 1 \end{bmatrix}^{\intercal}$ and  $\lim_{\alpha(\mu)\rightarrow +\infty}\BasisTangentInputMat = \begin{bmatrix} -1 & 0 \end{bmatrix}^{\intercal}$. The first element in $\BasisTangentInputMat$ approaches 0 when $\alpha(\mu)$ is close to zero, indicating that the control action $u_s$ on the actual system vanishes. On the contrary, the tangent basis will be aligned with the axis of $u_s$ when $\alpha(\mu)$ goes to infinity. 
    \label{example:tangent_basis}
\end{example}
When designing the slack $\alpha(\mu)$ dynamics, we follow a straightforward principle: \textit{The tangential basis should align with the basis of the original system as much as possible when the state is safe and far from the boundary $\partial\SafeSet$.} For example,
\begin{align*}
    &\text{Linear} & &\alpha(\mu) = \beta \mu \\
    &\text{Exponential} & &\alpha(\mu) = \exp(\beta \mu) - 1
\end{align*}

Fig.~\ref{fig:comp_slack_type} compares a different type of slack dynamics function with different hyperparameters $\beta$. The upper part shows the trajectory between Linear and Exponential dynamics. The lower part shows different hyperparameters $\beta$ for Exponential Slack. A stiff slack dynamics function will deform the action space more aggressively, while a soft slack dynamics function will lead to more conservative behavior. In general, it would be desirable to have as little deformation of the action space as possible to avoid performance loss. However, a stiff slack dynamics function will pose a numerical stability issue as the Jacobian matrix $\JacInputMat$ will be ill-conditioned. In addition, a stiff slack dynamics function also requires a higher control frequency as the tangent basis will change more drastically.

\subsection{Continuously Varying Tangent Space Basis}
\label{sec:tangent_space_basis}
\begin{figure*}
    \centering
    \begin{tabular}{c c c c c}
    \includegraphics[width=0.185\linewidth, height=3.5cm, trim=2.2cm 2cm 5cm 5cm, clip]{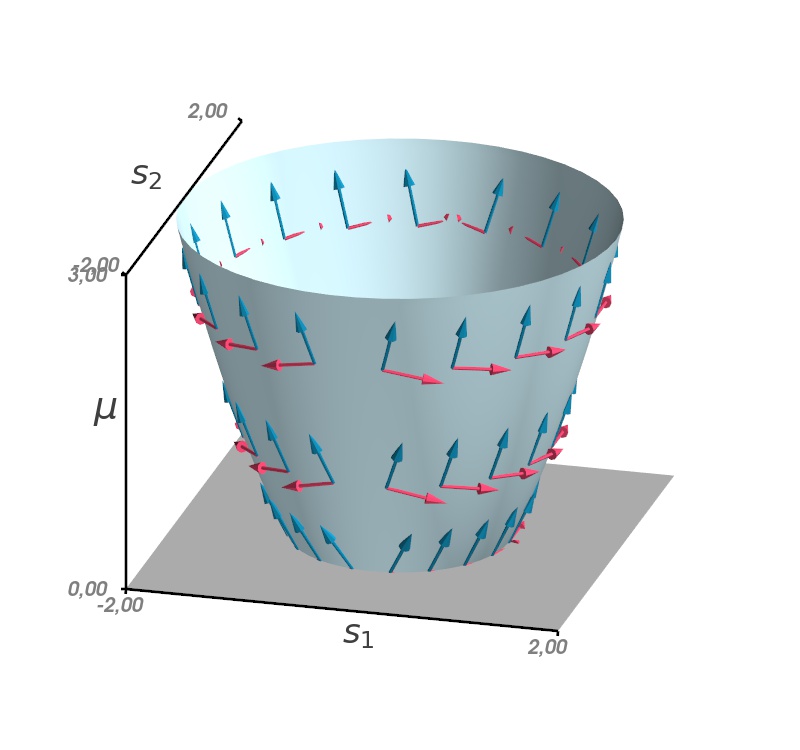} & 
    \includegraphics[width=0.17\linewidth, trim=1.2cm 0.5cm 1.0cm 0.5cm, clip]{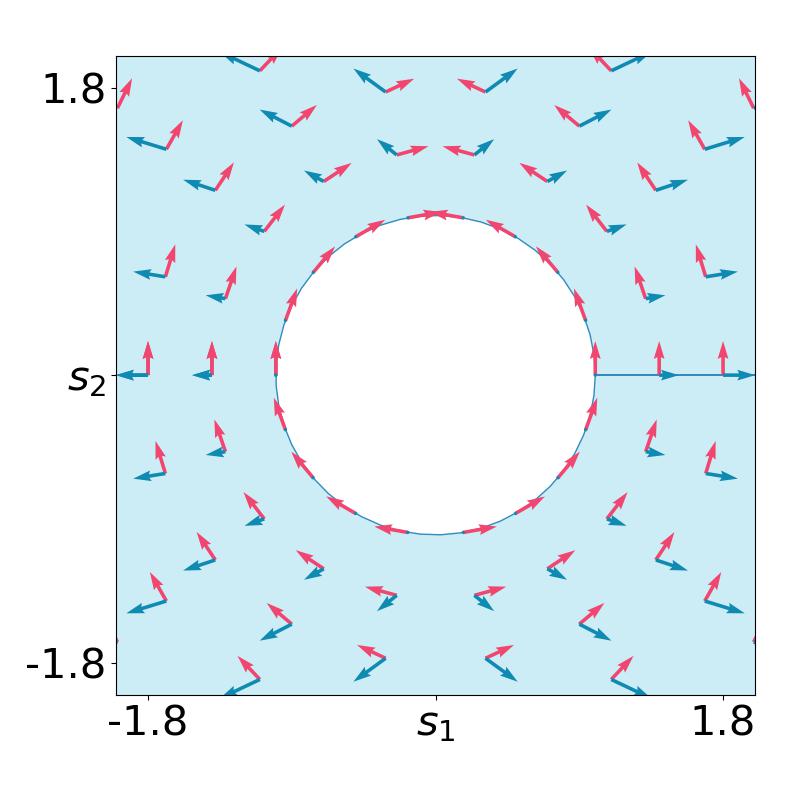} &
    \includegraphics[width=0.17\linewidth, trim=1.2cm 0.5cm 1.0cm 0.5cm, clip]{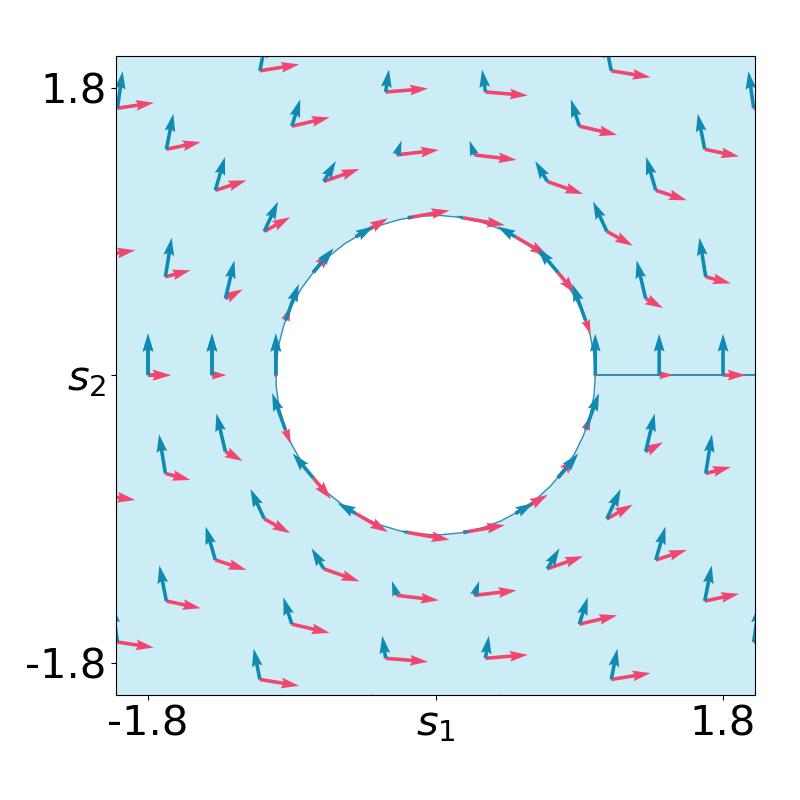} &
    \includegraphics[width=0.17\linewidth, trim=1.2cm 0.5cm 1.0cm 0.5cm, clip]{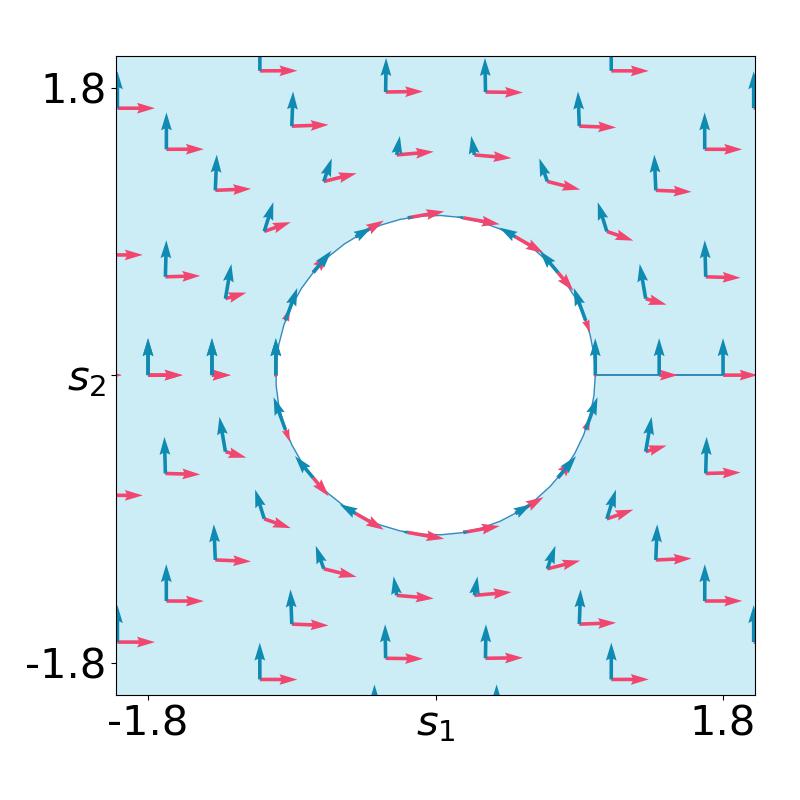} &
    \includegraphics[width=0.185\linewidth, height=3.5cm, trim=2.8cm 4cm 5.5cm 4cm, clip]{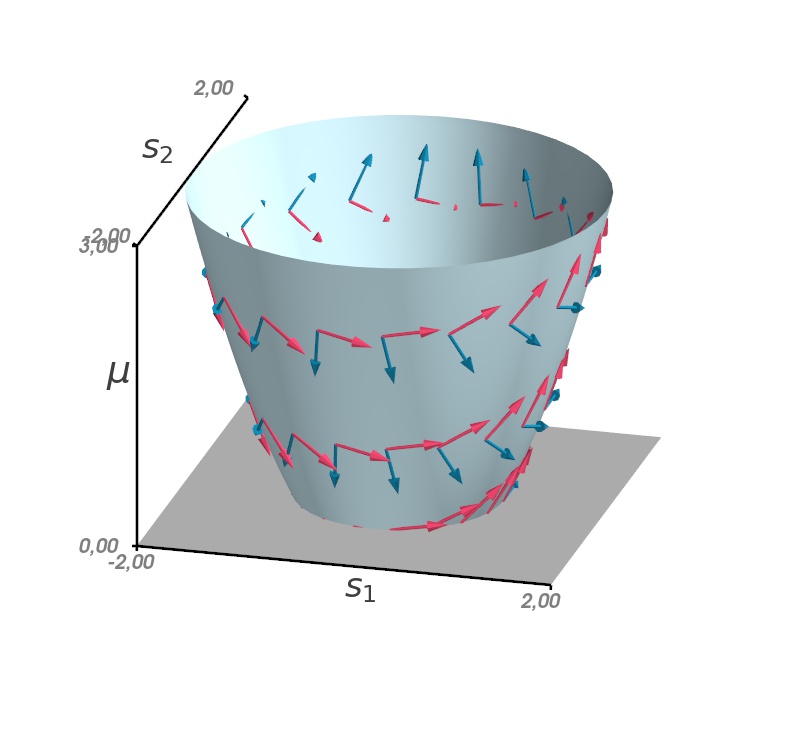}
    \\
    \includegraphics[width=0.185\linewidth, height=3cm, trim=1.9cm 2.5cm 4cm 9cm, clip]{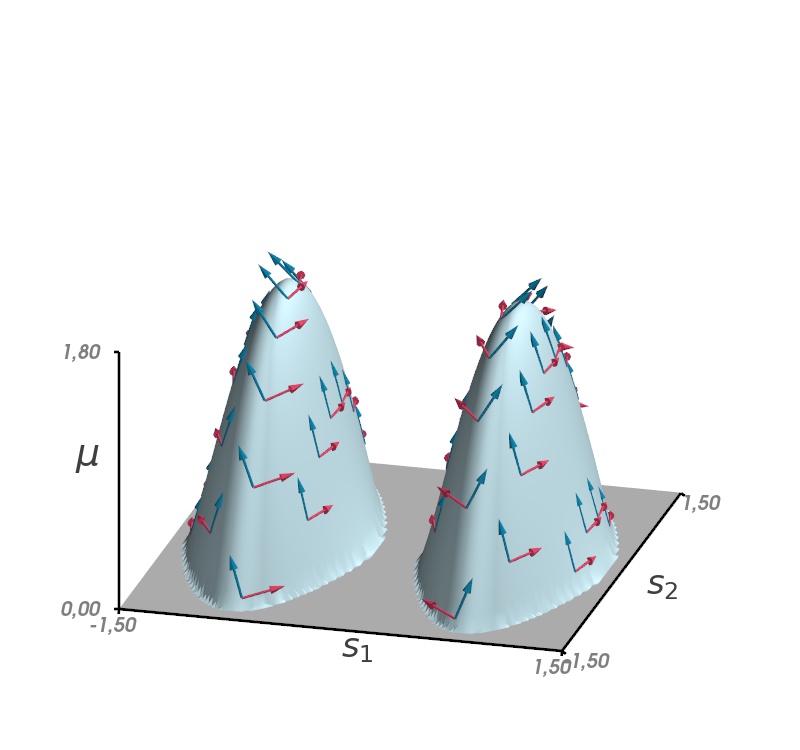} & 
    \includegraphics[width=0.17\linewidth, trim=1.2cm 2cm 1.0cm 0.5cm, clip]{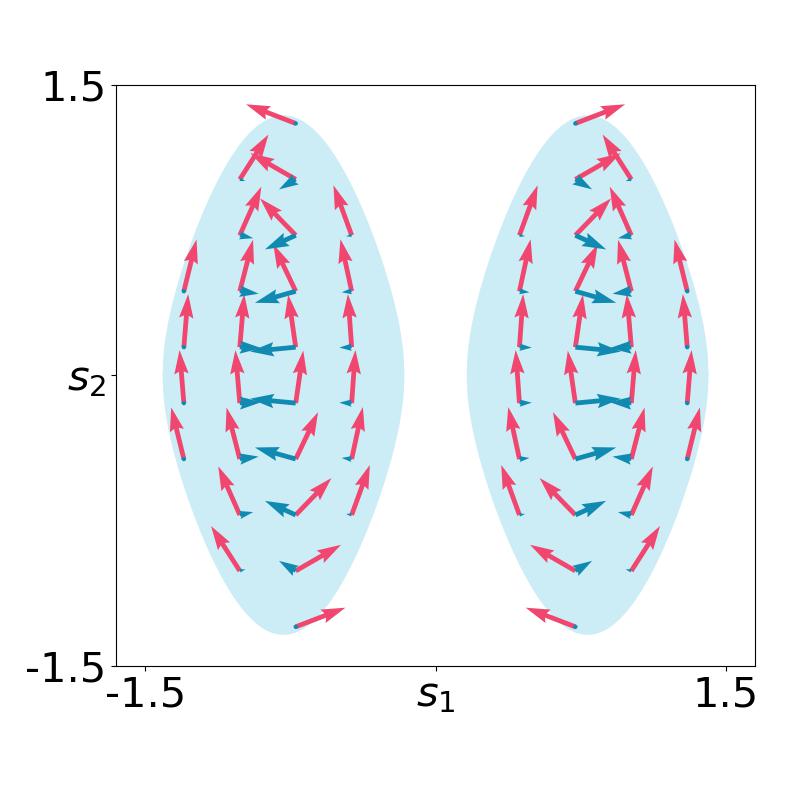} &
    \includegraphics[width=0.17\linewidth, trim=1.2cm 2cm 1.0cm 0.5cm, clip]{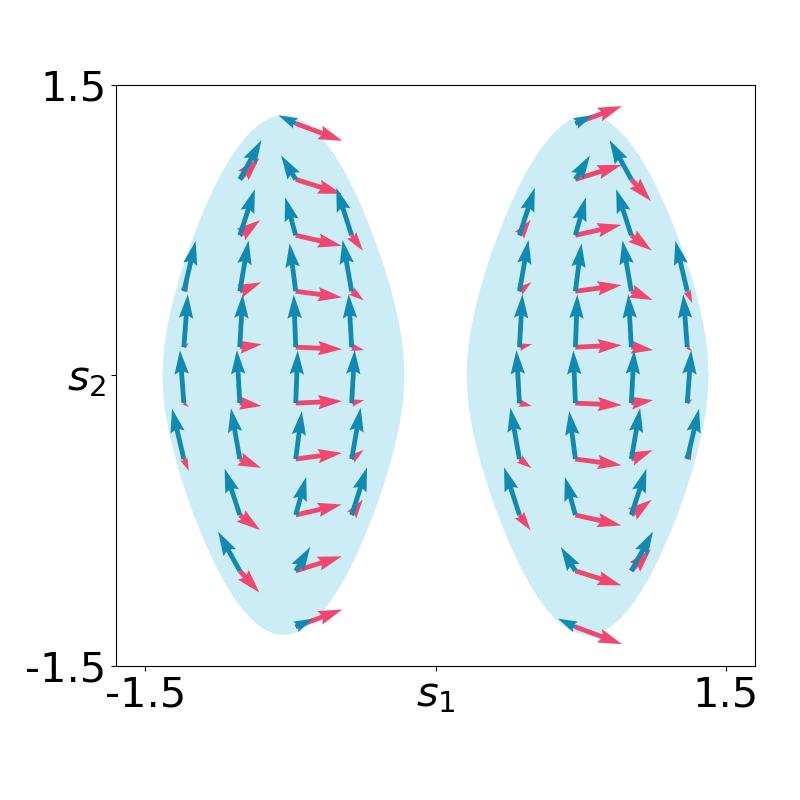} &
    \includegraphics[width=0.17\linewidth, trim=1.2cm 2cm 1.0cm 0.5cm, clip]{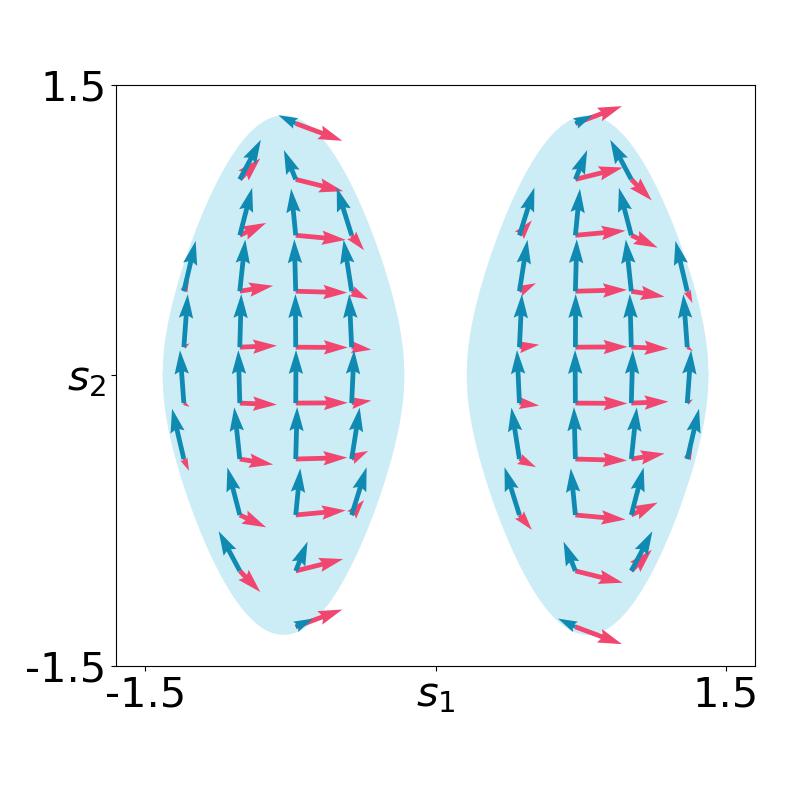} &
    \includegraphics[width=0.185\linewidth, height=3cm, trim=2.5cm 5cm 5cm 7cm, clip]{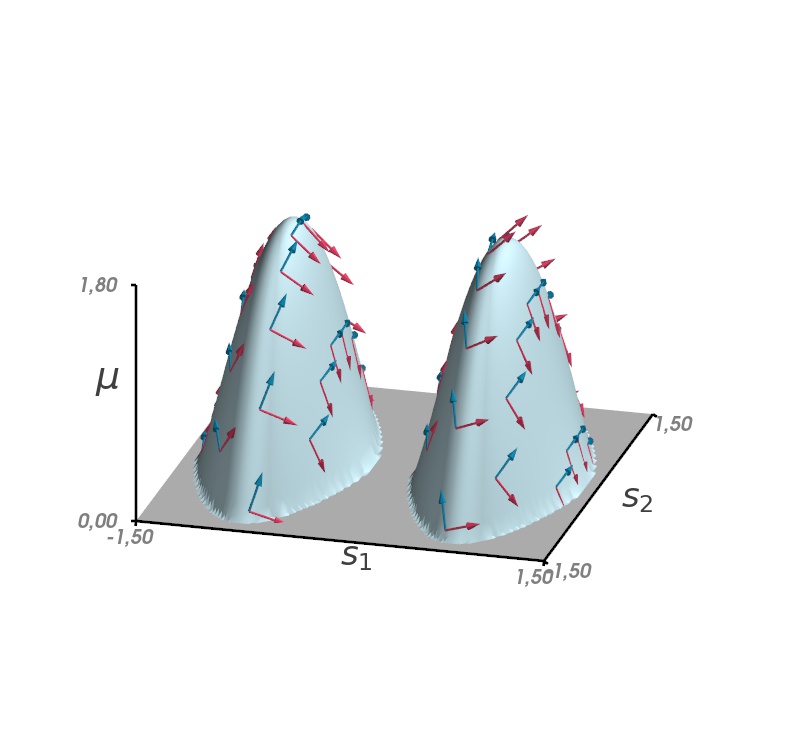} \\
    (a) 3D N-LIN  & (b) 2D N-LIN & (c) 2D S-LIN & (d) 2D S-EXP & (e) 3D S-EXP
    \end{tabular}
    \caption{Comparison of Nonsmooth (N) and Smooth (S) basis using the Linear (LIN) and the Exponential (EXP) slack dynamics function for two different constraints. \textbf{Top Row}: $-(s_1^2 + s_2^2) + 1 \leq 0$, \textbf{Bottom Row}: $\cos(4s_1)+s_2^2 - 0.8 \leq 0$. \textbf{(a)} 3D manifold with linear slack dynamics, the tangent space bases are obtained from QR decomposition. The tangent space bases do not vary smoothly. \textbf{(b)} The tangent basis project onto the original $S_1-S_2$ space. \textbf{(c)} Projected smooth tangent space bases with linear slack dynamics computed by Alg.~\ref{alg:tangent_basis}. The projected tangent space is not orthogonal in the projected space. \textbf{(d)} Projected smooth tangent space bases with exponential slack dynamics. The tangent space bases are less deformed when the state is away from the boundary. \textbf{(e)} The smooth tangent space bases in $S_1-S_2-\mu$ space.}
    \label{fig:comp_smooth_basis}
    % \vspace{-1em}
\end{figure*}

Various approaches have been discussed in the literature on determining the bases of the tangent space (kernel of the Jacobian), such as the Gaussian Elimination method and QR/SVD decomposition. However, the Gaussian Elimination method will result in a non-orthogonal base. Standard QR/SVD-based methods do not generate smooth varying bases as discussed by \cite{coleman1984note}. In addition, any linear combination of the bases constructs other bases. Several techniques have been introduced to construct a continuous function to determine the basis of the kernel given a sequence of the matrices \cite{coleman1984note, gill1985properties, byrd1986continuity}. However, these techniques are path-dependent. Rheinboldt~\cite{rheinboldt1988computation} introduced a moving frame algorithm to compute the path-independent basis. Here, we briefly present the algorithm without proof. 

\begin{algorithm}[b]
\caption{Smooth Varying Basis of the Kernel}\label{alg:tangent_basis}
\begin{algorithmic}[1]
\State \reviseRone{\textbf{Input:} $\mJ$, \textbf{Initialize:} $\mT$}
\State \reviseRone{Compute the basis matrix using SVD/QR decomposition.}
\Statex \reviseRone{$\mB \leftarrow \mathrm{ker}\mJ$}
\State \reviseRone{Compute $\mU_0 \leftarrow \mB^{\intercal} \mT$.}
\State \reviseRone{Compute $\mA, \mSigma, \mB^\intercal \leftarrow \text{SVD}(\mU_0)$}
\State \reviseRone{Obtain the solution $\mQ \leftarrow \mA \mB^\intercal$.}
\State \reviseRone{Form the smooth varying bases $\mB' \leftarrow \mB \mQ$.}
\State \reviseRone{\textbf{Output: } $\mB'$}
\end{algorithmic}
\end{algorithm}

Let $\BasisTangentInputMat\in\RR^{\DimAugmentedState \times \DimControl}$ be an orthonormal matrix of which columns span the kernel of $\JacInputMat \in \RR^{\DimState \times \DimAugmentedState}$ at $(\StateVar, \SlackVar) \in \RR^N$. The matrix $\BasisTangentInputMat$ is not expected to depend continuously on $(\StateVar, \SlackVar)$ and can be obtained, for example, by QR or SVD decomposition. For any orthogonal matrix $\mQ \in \RR^{\DimControl \times \DimControl}$, the matrix $\BasisTangentInputMat\mQ$ is another orthonormal basis of $\mathrm{ker}\JacInputMat$. The objective is to construct $Q: \mathcal{D} \rightarrow \RR^{\DimControl \times \DimControl}$ such that the "rotated" bases $\BasisTangentInputMat\mQ$ depend continuously on $(\StateVar, \SlackVar)$. Rheinboldt \cite{rheinboldt1988computation} suggest formulating an Orthogonal Procrustes Problem as 
\begin{equation*}
    \min_\mQ \Vert (\BasisTangentInputMat\mQ)^\intercal \mT - \mathbb{I}_\DimControl \Vert_F \qquad \mathrm{s.t.}\quad  \mQ^\intercal \mQ = \mathbb{I}_\DimControl
\end{equation*}
where $\mT \in \RR^{\DimAugmentedState \times \DimControl}$ is a matrix with orthonormal columns that span the coordinate space of the manifold, $\Vert \cdot \Vert_F$ is the Frobenius norm, and $\mathbb{I}_\DimControl$ is $\DimControl$-dimensional identity matrix. This problem can be solved using Algorithm~\ref{alg:tangent_basis}. 

Practically, it is desirable to choose the reference coordinate frame $\mT$ as simply as possible. We can choose the $\mT$ to be aligned with the original control space, i.e., the diagonal entry $\mT_{[i, i]}=1, i\in (1, \cdots, U)$ and elsewhere 0. In the following example, we compare the smooth varying basis with the one obtained from QR decomposition.

\begin{example}
Consider a velocity-controlled dynamical system $\dot{\StateVar} = \ControlInput$. We compare the tangent bases for two different constraints, $- (s_1^2 + s_2^2) + 1 \leq 0$ and $\cos(4 s_1) + s_2^2 - 0.8 \leq 0$. The first constraint constructs a connected manifold, while the second builds a disconnected one. The constraint manifolds and their tangent space bases obtained from different approaches are shown in Fig.~\ref{fig:comp_smooth_basis}. We can observe that the smoothed tangent space bases with exponential slack dynamics are less deformed and continuously varying compared to the ones obtained from QR decomposition with linear slack dynamics.
\end{example}

\subsection{Drift Clipping}
\label{sec:drift_clipping}
As described in \eqref{eq:atacom_controller}, the first term $-\JacInputMat^{\dagger} \DriftVec$ compensates for the constraint drift caused by the system drift. Let's consider the effect of the system drift when no control is applied, i.e., $[\ControlInput \; \ControlSlack]^\intercal = \vzero$. 
\begin{align*}
    \dot{\ConstrEqVec}_i &= \dot{\ConstrVec}_i + \dot{\SlackVar}_i = \mJ_{k, i} \DynfVec + \mJ_{k, i} \DynGMat\ControlInput + \DynSlackMat_i\ControlSlack = \DriftVec_i, 
\end{align*}
for $i \in \{1, \cdots, K \}$. Starting from a safe state where $\ConstrVec < \vzero$, we can obtain that when $\DriftFun_i > 0$, we have $\dot{\Constr}_i > 0$, indicating that the drift term is pushing the state to the boundary of the constraint $\partial\MM$. On the other hand, when $\DriftFun_i < 0$, the drift term pulls the state away from the constraint boundary. Therefore, the safe controller only needs to compensate for the drift that tends to the boundary. This can be done by clipping the drift term to be non-negative.
\begin{equation*}
    \widehat{\DriftFun}_i = \max(\DriftFun_i, 0), \quad i \in \{1, \dots, \DimConstr\}
\end{equation*}
We illustrate the benefit of drift clipping in Example~\ref{example:separabel_state_space} after the extension of separable state space (Section~\ref{sec:atacom_dynamic}).

\section{Extensions}
\label{sec:extensions}
We introduced the fundamental concept of \gls{atacom} in Section~\ref{sec:atacom}. In this section, we will expand the scope of its application in several directions. Previously, we assumed full knowledge of the dynamic system and that the constraints were solely determined by the system's state variables. However, in many real-world applications, constraints are influenced by both the robot's state and external states, and we often have only partial knowledge of the dynamic system. For example, the safety constraints in \gls{hri} can be defined by the distance between the human and the robot, but the model of the human motion is difficult to obtain. 
Section~\ref{sec:atacom_dynamic} will discuss extending \gls{atacom} for scenarios with only partial knowledge of the dynamics. Next, we introduce an extension to the second-order system in Section~\ref{sec:second_order_dynamics}. We show by an example why \gls{atacom} controller does not guarantee safety for a second-order system with position-based constraint and then provide a simple solution that modifies the constraint to ensure safety. Section~\ref{sec:equality_constraint} discusses how \gls{atacom} can be applied to the problem with equality constraints. The summary of the extensions is shown in Table~\ref{tab:atacom_extension}.

{\small
\SetTblrInner{rowsep=2pt}
\begin{table*}[t]
    \centering
    \resizebox{0.95\textwidth}{!}{%
    \begin{tblr}{|l|c|c|c|c|c|}
        \hline
         Problem & State $\StateVar$ & Constraint $\ConstrEqVec = \vzero$ & Dynamics $\dot{\StateVar}$ &  Jacobian $\JacInputMat$ & Drift $\DriftVec$ \\ 
         \hline
         \gls{atacom} & $\StateVar$ &  $\Constr(\StateVar) + \SlackVar$ & $\Dynf(\StateVar)+\DynG(\StateVar)\ControlInput$ & $[\JacConstrMat \DynGMat \;\; \DynSlackMat ]$ & $\JacConstrMat \DynfVec$ \\ \hline

         Sec. Order & $ \begin{bmatrix}\StateVar \\ \dot{\StateVar} \end{bmatrix}$ & $ \begin{matrix}    
         \zeta(\Constr(\StateVar)) + \JacConstr(\StateVar) \dot{\StateVar} \\ + \SlackVar \end{matrix} $ & $\begin{bmatrix} \dot{\StateVar} \\ \Dynf(\StateVar, \dot{\StateVar}) \end{bmatrix} + \begin{bmatrix} \vzero \\ \DynG(\StateVar, \dot{\StateVar})  \end{bmatrix} \ControlInput $ & $ [\JacConstrMat \DynGMat \;\; \DynSlackMat] $ & $ \begin{matrix} \JacConstrMat\DynfVec + (\mJ_\zeta \JacConstrMat  \\ + \nabla_{\StateN} \JacConstrMat \dot{\StateVar}) \dot{\StateVar} \end{matrix} $\\ \hline
         
         Dyn. Env. & $[\StateControllableVar \; \StateUncontrollableVar]^\intercal$ & $\Constr(\StateControllableVar, \StateUncontrollableVar) + \SlackVar$ & $\Dynf(\StateControllableVar) + \DynG(\StateControllableVar) \ControlInputControllable $ & $[\JacConstrControllableMat \DynGMat \;\; \DynSlackMat]$  & $\JacConstrControllableMat\DynfVec + \JacConstrUncontrollableMat\dot{\StateUncontrollableVar} $\\ \hline 
         
         Equal. Constr. & $\StateVar$ & $\begin{bmatrix} \Constr(\StateVar) + \SlackVar \\ \ConstrEquality(\StateVar) \end{bmatrix}$ & $\Dynf(\StateVar)+\DynG(\StateVar)\ControlInput$ & $\begin{bmatrix} \JacConstrMat\DynGMat & \DynSlackMat \\ \JacConstrEqualityMat \DynGMat & \vzero \end{bmatrix}$ & $\begin{bmatrix} \JacConstrMat\DynfVec \\ \JacConstrEqualityMat\DynfVec \end{bmatrix}$\\
         \hline
    \end{tblr}
    }
    \vspace{0.5em}
    \normalsize
    \caption{Extension of \gls{atacom} controller with different environment setup.}
    \label{tab:atacom_extension}
    \vspace{-1em}
\end{table*}
}

\subsection{Dynamical Environment with Separable State Space}
\label{sec:atacom_dynamic}
In many applications, robots are interacting in a dynamic environment. Therefore, not all of the state can be directly controlled by the robot. For instance, humans move dynamically in an \gls{hri} environment. To tackle this problem, we assume the state $\StateVar \in \StateSpace$ is separable by a \gls{dcs} $\StateControllableVar \in \StateSpaceControllable$ and a \gls{dus} $\StateUncontrollableVar \in \StateSpaceUncontrollable$ as
\begin{equation*}
    \StateVar = \begin{bmatrix} \StateControllableVar \\ \StateUncontrollableVar \end{bmatrix}, \quad \StateSpace = \StateSpaceControllable \times \StateSpaceUncontrollable
\end{equation*}
The constraint manifold is defined as 
\begin{equation*}
    \ConstrManifold = \left\{(\StateControllableVar, \StateUncontrollableVar, \SlackVar)\in \AugmentedSpace: \ConstrEq(\StateControllableVar, \StateUncontrollableVar, \SlackVar)=\Constr(\StateControllableVar, \StateUncontrollableVar) + \SlackVar=0 \right\}
\end{equation*}
We assume the dynamic system for the \gls{dcs} is known and affine w.r.t. the control, i.e.,
\begin{equation}
    \dot{\StateControllableVar} = f(\StateControllableVar) + G(\StateControllableVar) \ControlInputControllable 
    \label{eq:control_affine_system_q}
\end{equation}
Similar to \eqref{eq:system_requirement}, the following requirement should be satisfied 
\begin{align*}
    \DriftFun(\StateControllableVar, \StateUncontrollableVar) + \JacInput(\StateControllableVar, \StateUncontrollableVar, \SlackVar)\begin{bmatrix} \ControlInputControllable \\ \ControlSlack \end{bmatrix} = \vzero
\end{align*}
where $\DriftFun(\StateControllableVar, \StateUncontrollableVar) = \JacConstrControllable(\StateControllableVar, \StateUncontrollableVar)f(\StateControllableVar) + \JacConstrUncontrollable(\StateControllableVar, \StateUncontrollableVar) \dot{\StateUncontrollableVar}$, $\JacConstrControllable(\StateControllableVar, \StateUncontrollableVar)=\frac{\partial}{\partial \StateControllableN} \Constr(\StateControllableVar, \StateUncontrollableVar)$ and $\JacConstrUncontrollable(\StateControllableVar, \StateUncontrollableVar)=\frac{\partial}{\partial \StateUncontrollableN}\Constr(\StateControllableVar, \StateUncontrollableVar)$ are partial derivatives. $\JacInput(\StateControllableVar, \StateUncontrollableVar, \vmu) = \begin{bmatrix} \JacConstrControllable(\StateControllableVar, \StateUncontrollableVar)\DynG(\StateControllableVar) & \DynSlackFun(\vmu) \end{bmatrix}$. 

Compared to the previous derivation \eqref{eq:system_requirement}, the drift term $\psi(\StateControllableVar, \StateUncontrollableVar)$ has an additional source from the uncontrollable state's motion $\JacConstrUncontrollable(\StateControllableVar, \StateUncontrollableVar) \dot{\StateUncontrollableVar}$. In practice, we can estimate the velocity of the uncontrollable states by the finite difference or by using a state observer. A comparison of different velocity observations is shown in Section~\ref{sec:experiment_dyn_env_obs}.
We can derive the \gls{atacom}-controller similarly to Eq.~\eqref{eq:atacom_controller} where $\mJ_u$ and $\vpsi$ should be adapted. We illustrate the effect of drift clipping with an example involving a dynamic moving obstacle in a 2D environment.

\begin{figure*}[t]
    \centering
    \subfloat[]{
    \includegraphics[width=0.192\textwidth, trim={2.cm 1.9cm 1.0cm 1.0cm}, clip]{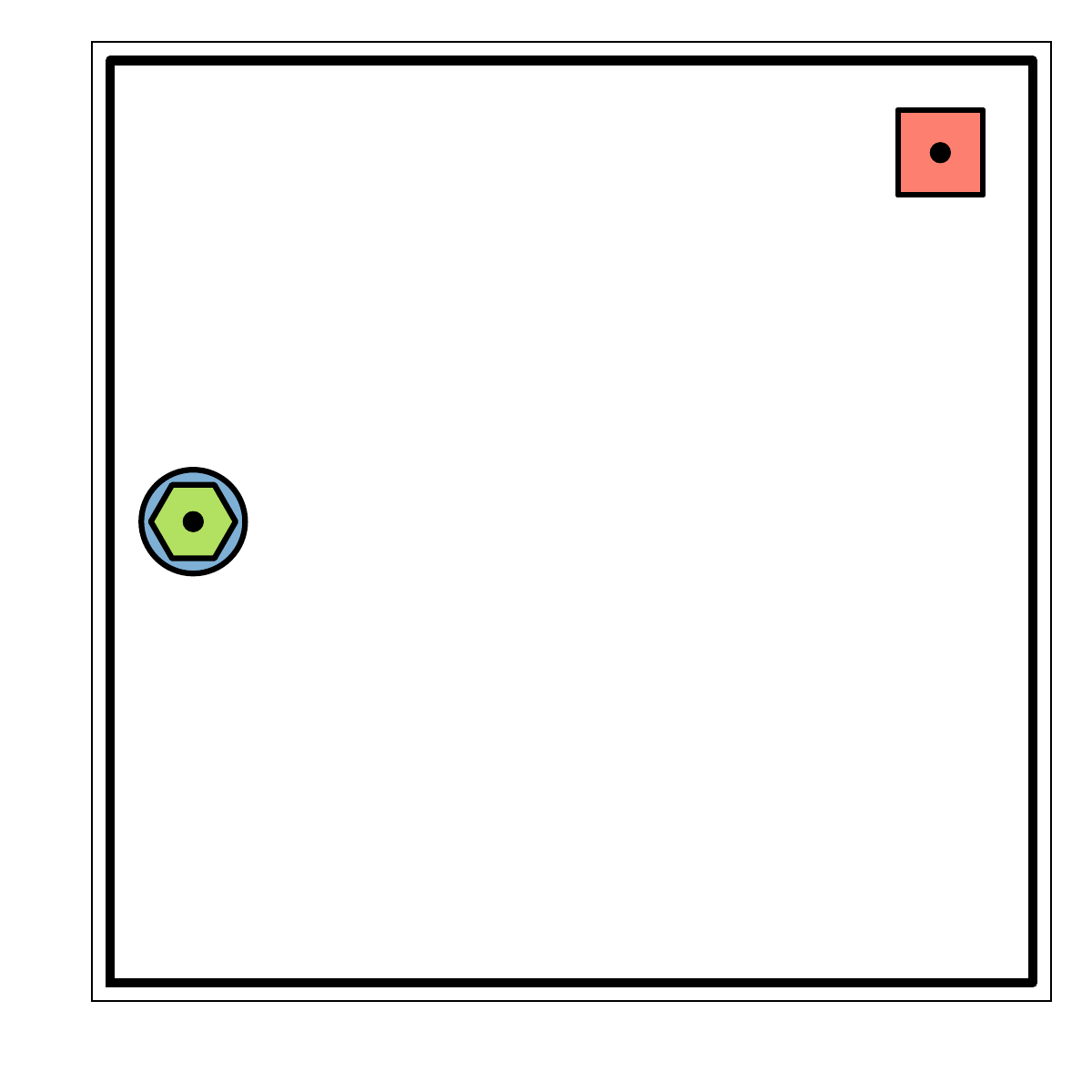}
    }
    \subfloat[]{
    \includegraphics[width=0.192\textwidth, trim={2.cm 1.9cm 1.0cm 1.0cm}, clip]{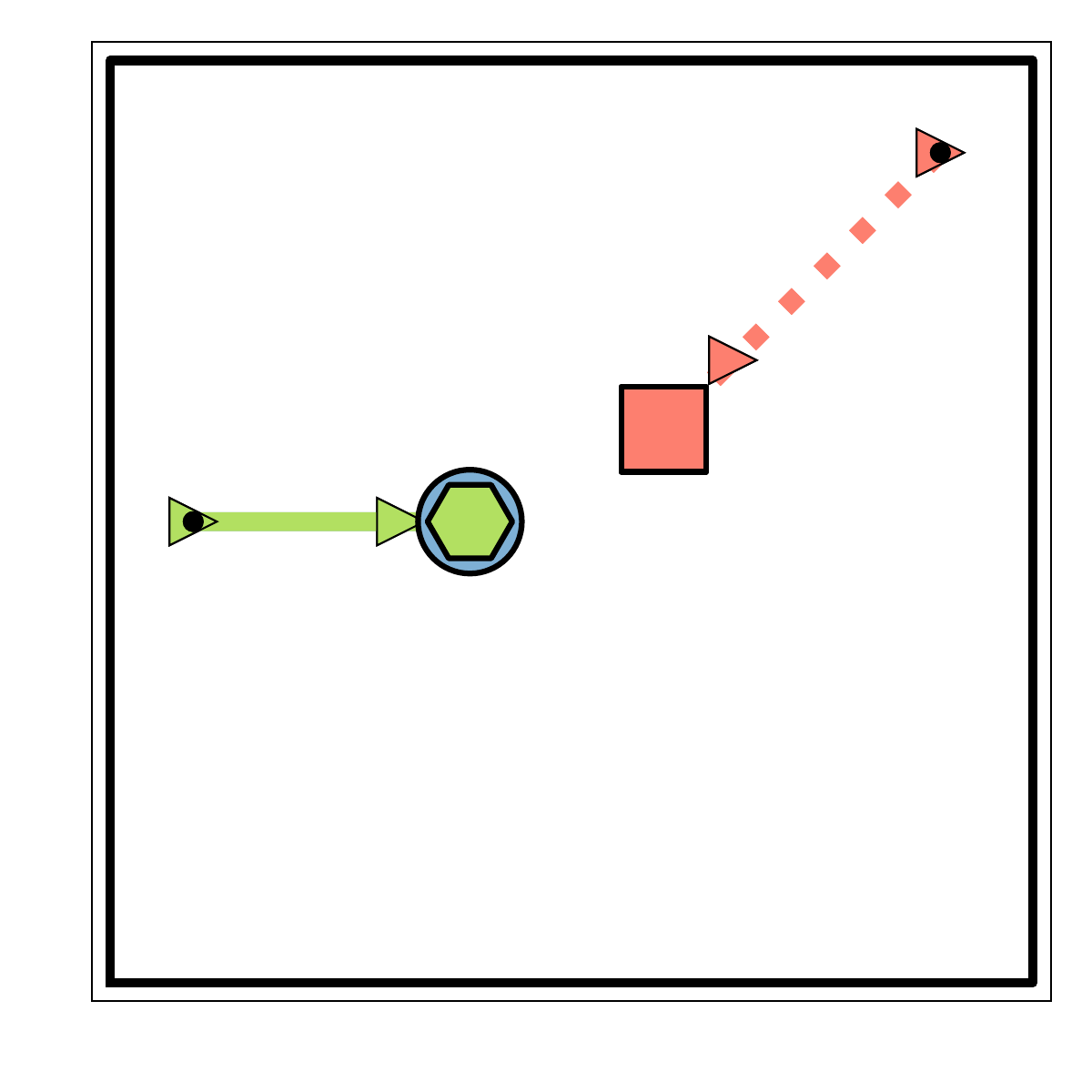}}
    \subfloat[]{
    \includegraphics[width=0.192\textwidth, trim={2.cm 1.9cm 1.0cm 1.0cm}, clip]{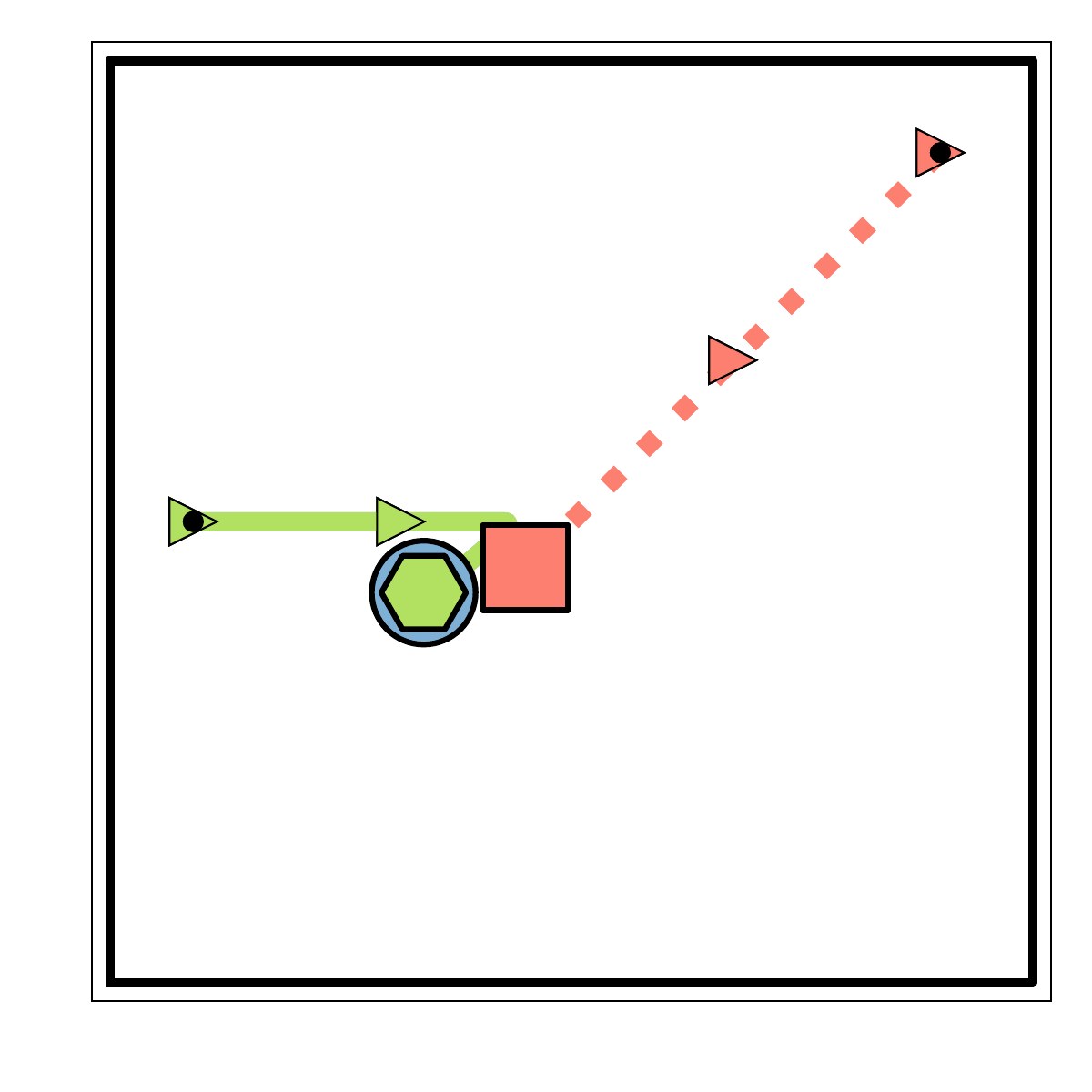}}
    \subfloat[]{
    \includegraphics[width=0.192\textwidth, trim={2.cm 1.9cm 1.0cm 1.0cm}, clip]{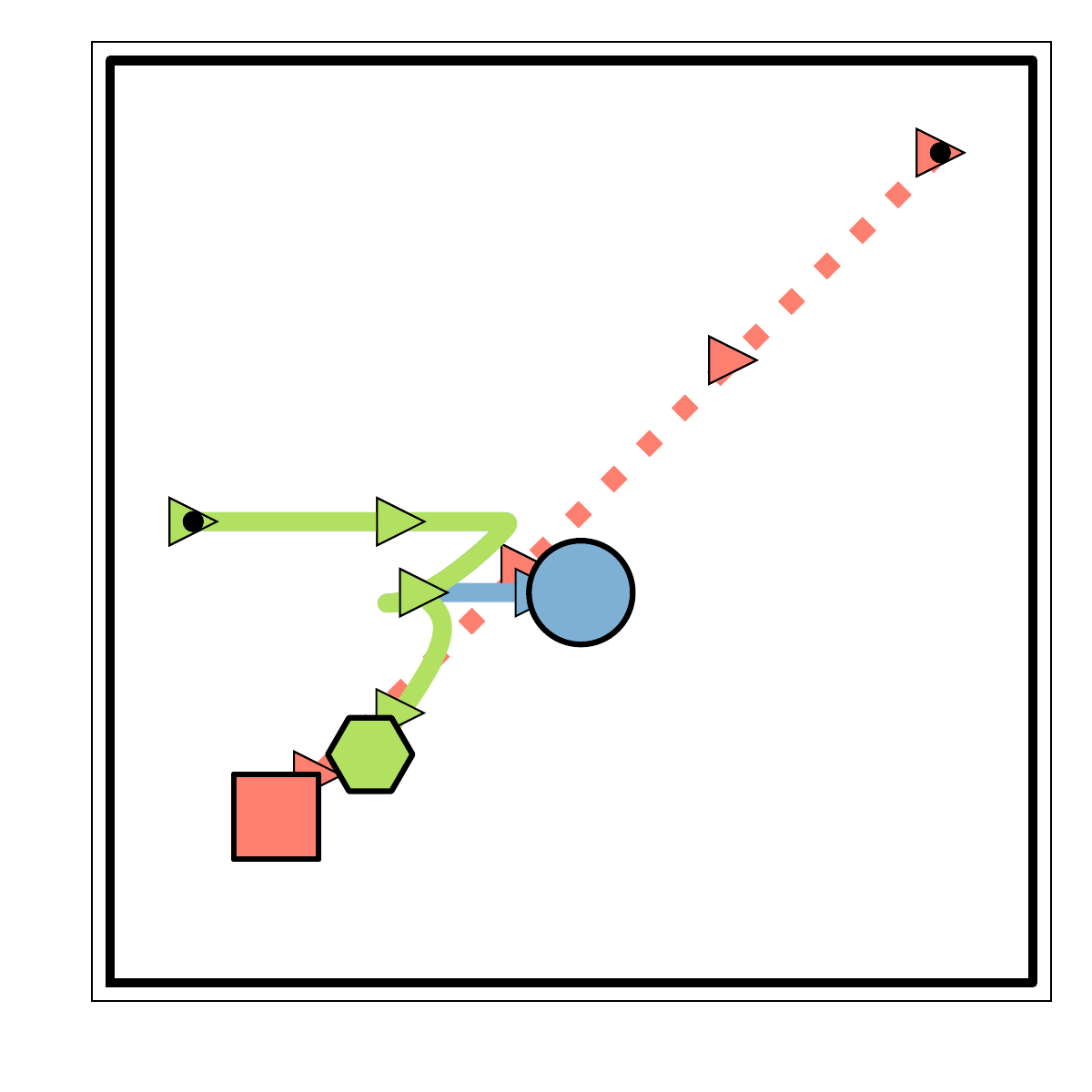}}
    \subfloat[]{
    \includegraphics[width=0.192\textwidth, trim={1.9cm 1.9cm 1.0cm 1.0cm}, clip]{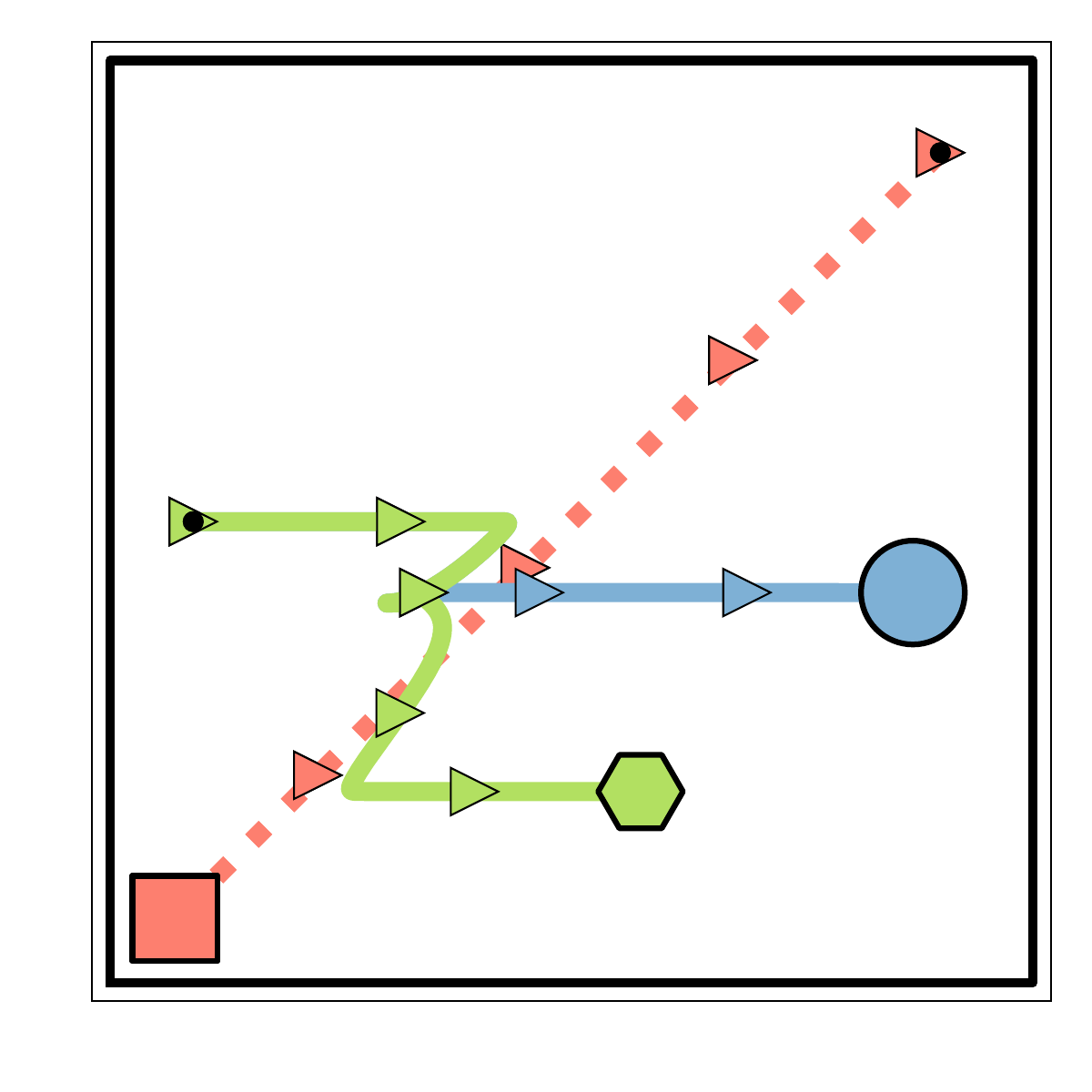}}
    
    \includegraphics[width=\textwidth, trim={5cm 2.2cm 5cm 0.8cm}, clip]{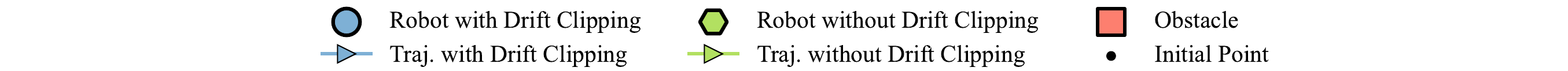}
    \caption{Comparison of \gls{atacom} controller with and without \textit{Drift Clipping} in a 2D environment with a moving obstacle. The robot (blue circle or green hexagon) has a constant control input $\vu = [1 \; 0]^\intercal$. Each figure represents the positions of the robot and the obstacle at a different time step. \textbf{(a)} The robot begins from the right side, and the obstacle (red square) starts from the upper right corner to the lower left corner. \textbf{(b)} The robot and the obstacle are approaching each other. The drift term is positive since the distance between the robot and the obstacle decreases as the obstacle moves. \textbf{(c)} The drift compensation term drives the robot in the lower-left direction. \textbf{(d)} The distance between the obstacle and the robot increases due to the movement of the obstacle, so the drift term is negative. \textbf{(e)} The robot with drift clipping recovers to the original direction quickly, while the robot without drift clipping follows the obstacle's movement.}
    \label{fig:drift_clipping}
    \vspace{-0.5em}
\end{figure*}

\begin{example}
\label{example:separabel_state_space}
Fig.~\ref{fig:drift_clipping} shows an environment with an obstacle (red square) moving from the upper right corner to the lower left corner with a constant velocity. The robot (blue circle or green hexagon) is initialized from the right side. The \gls{dcs} is the $XY$-position of the robot $\StateControllableVar = [x_r \; y_r]^\intercal$, and the \gls{dus} is the $XY$-position of the obstacles $\StateUncontrollableVar = [x_o \; y_o]^\intercal$. We define the constraint as $\Constr(\StateControllableVar, \StateUncontrollableVar) = - \Vert \StateControllableVar - \StateUncontrollableVar \Vert + \eta < 0$. The robot is directly controlled by the velocity $\dot{\StateControllableVar} = \ControlInputControllable$.  We applied a constant control input $\vu = [1 \; 0]^\intercal$ to the robot and compared the effect of the drift clipping (Section~\ref{sec:drift_clipping}) on the robot's trajectory. Since $\DynfVec = \vzero$, the drift term only contains $\JacConstrUncontrollableMat \dot{\StateUncontrollableVar}$.
The drift term is positive when the obstacle moves toward the robot and negative when the obstacle moves away from the robot. Fig.~\ref{fig:drift_clipping} illustrates the robot's trajectory with (blue) and without (green) drift clipping. We can see that the robot with drift clipping recovers to the original motion direction quickly. In contrast, the robot without drift clipping follows the movement of the obstacles to compensate for the drift.
\end{example}

\subsection{Second-Order Dynamics}
\label{sec:second_order_dynamics}
The previous method focuses on first-order affine control systems with state constraints, but it cannot be directly applied to higher-order systems by simply converting them to first order. We first show through an example that using only position input violates the assumptions. Then, we present a solution for second-order systems, which can be easily extended to higher-order systems.

\begin{example}
Consider an acceleration-controlled system, i.e., $\ddot{\StateVar} = \ControlInput$, which can be written as a first-order affine system
\begin{equation*}
    \dot{\widehat{\StateVar}}=\begin{bmatrix} \dot{\StateVar} \\ \ddot{\StateVar} \end{bmatrix} = \begin{bmatrix} \dot{\StateVar} \\ \vzero \end{bmatrix} +
    \begin{bmatrix} \vzero \\ \mathbb{I} \end{bmatrix} \ControlInput = \DynfVec + \DynGMat \ControlInput
\end{equation*}
For the constraint $k(\StateVar)<\vzero$ that only takes $\StateVar$ as input, the Jacobian with respect to $\widehat{\StateVar}$ is $\JacConstrMat = \begin{bmatrix} \frac{\partial}{\partial \StateN}\Constr & \vzero \end{bmatrix}$. Assumption~\ref{asm:constant_rank} does not hold as $(\JacInputGMat) = \mathrm{rank}(\JacConstrMat \DynGMat) = 0$. \hfill \qedsymbol
\end{example}

Consider a general second-order control affine system 
\begin{equation} 
    \ddot{\StateVar} = \Dynf(\StateVar, \dot{\StateVar}) + \DynG(\StateVar, \dot{\StateVar}) \ControlInput
    \label{eq:second_order_system}
\end{equation}
For a state-only constraint $\Constr(\StateVar)$ of class $C^2$, we can convert the constraint for the second-order system as: 
\begin{equation}
    \Constr^*(\StateVar, \dot{\StateVar}) = \zeta(\Constr(\StateVar)) + \dot{\Constr}(\StateVar) = \zeta(\Constr(\StateVar)) + \JacConstrMat \dot{\StateVar} \leq \vzero
    \label{eq:constraint_second_order}
\end{equation}
where $\zeta(\cdot)$ is a class $\mathcal{K}$ function, following the idea of High-order \gls{cbf} from \cite{xiao2022high_order}. We again introduce the slack variable $\SlackVar$ to construct the constraint manifold and set the time-derivatives of the new constraint $\ConstrVec^{*}$ to zero and obtain 
\begin{equation*}
    (\mJ_{\zeta} + \nabla_{\StateN} \JacConstrMat \dot{\StateVar}) \dot{\StateVar} + \JacConstrMat \DynfVec + \JacConstrMat \DynGMat \ControlInput + \DynSlackMat \SlackVar = \vzero
\end{equation*}

By grouping the terms together, we obtain a requirement in the same form as  Eq.~\eqref{eq:system_requirement}. The Jacobian $\JacInputMat$ and the drift $\DriftVec$ are defined as
\begin{equation*}
    \JacInputMat = [\JacConstrMat \DynGMat \;\; \DynSlackMat], \; \DriftVec = \JacConstrMat\DynfVec + (\mJ_\zeta \JacConstrMat + \nabla_{\StateN} \JacConstrMat \dot{\StateVar}) \dot{\StateVar}
\end{equation*}
we notice that $\JacInputMat$ has exactly the same structure as the first-order system, the only difference is the drift contains an additional term due to the velocity $\dot{\StateVar}$. If Assumption~\ref{asm:constant_rank} holds, $\JacInputMat$ is full rank, Theorem~\ref{theorem:atacom_safety} synthesizes a safety controller to the constraint function defined in Eq.~\eqref{eq:constraint_second_order}.

\subsection{Equality Constraints}
\label{sec:equality_constraint}
\gls{atacom} controller can be extended for equality constraint $\ConstrEquality(\StateVar)=\vzero$, 
where $\ConstrEquality: \StateSpace \rightarrow \RR^{\DimConstrEquality}$ is of class $C^1$ with the Jacobian $\JacConstrEquality: \StateSpace \rightarrow \RR^{\DimConstrEquality \times \DimState}$.  We assume the system with equality constraint is under-constrained, i.e., $\DimConstrEquality < \DimState$. We can formulate the constraint manifold 
\begin{equation*}
    \ConstrManifold = \{(\StateVar, \SlackVar)\in \AugmentedSpace: c(\StateVar, \SlackVar) = \vzero \}
\end{equation*}
where $c(\StateVar, \SlackVar) = \begin{bmatrix} k(\StateVar) + \SlackVar &  l(\StateVar) \end{bmatrix}^{\intercal}$. 

The Jacobian and the drift are adapted correspondingly as
\begin{equation*}
\JacInputMat = \begin{bmatrix} \JacConstrMat \DynGMat & \DynSlackMat \\ \JacConstrEqualityMat \DynGMat & \vzero \end{bmatrix}, \; \DriftVec = \begin{bmatrix} \JacConstrMat \DynfVec \\ \JacConstrEqualityMat \DynfVec \end{bmatrix} 
\end{equation*} 
To ensure the Eq.~\eqref{eq:system_requirement} is solvable, the following additional assumptions on the equality constraint should be satisfied. 
\begin{assumption}
    The Jacobian $\JacConstrEqualityMat\DynGMat$ is full row rank \mbox{$\forall (\StateVar, \SlackVar) \in \Omega_\eta$}.
\end{assumption}
We can again construct the \gls{atacom} controller using Eq.~\eqref{eq:atacom_controller}. The controlled system will approach $\ConstrManifold$ from the neighborhood $\Omega_\eta$ and stay on the manifold. 

However, simply choosing a matrix $\mT \in \RR^{\DimAugmentedState \times \DimControl}$ whose diagonal entry is $\mT_{[i, i]}=1$ is no longer a valid choice for the coordinate space in this setting. For example, for a \mbox{1-sphere} in 2D, the coordinate frame $\mT = \begin{bmatrix} 1 & 0 \end{bmatrix}^\intercal$ is not a valid coordinate as there are multiple points mapped to the same point. The \mbox{2-sphere} does not even contain a smooth varying basis, according to the \textsl{Hairy Ball Theorem}~\cite{Brouwer1912}. Unfortunately, the choice of the reference matrix is not trivial. We leave this discussion for future work.
\section{Experiments}
\label{sec:experiments}
% As shown in the previous works, \gls{atacom} has been applied to solve various tasks with different environmental characteristics, such as \textit{Robot Air Hockey} that contains equality constraints, collision avoidance with dynamically moving obstacles~\cite{liu2022robot}, collision avoidance with differential drive (control affine system), and Human-Robot Interaction (high-dimensional)~\cite{liu2023safe}, and it outperforms state-of-the-art baselines. 

In this section, we will analyze the effectiveness of different techniques introduced in previous sections in some simple environments to provide a thorough understanding of the algorithm. 
In the first experiment, we will compare the effect of different slack dynamics functions and their hyperparameters in a 2D static collision avoidance environment (Fig.~\ref{fig:2d_static_col_avoid}). 
Then, we will compare the effects of different velocity observation methods in a dynamic environment containing moving obstacles (Fig.~\ref{fig:2d_dynamic_col_avoid}). 
Additionally, we evaluate how the dynamic mismatch will impact the performance of \gls{atacom} in a simulated air hockey task (Fig.~\ref{fig:air_hockey_sim}).
Finally, we show in the real-world robot air hockey experiment (Fig.~\ref{fig:air_hockey_real_world_env}) that we can fine-tune the RL agent from online real-world interactions with the help of \gls{atacom}. 
\reviseRthree{
Videos of experiments are available at \href{https://puzeliu.github.io/TRO-ATACOM}{https://puzeliu.github.io/TRO-ATACOM}.
}

\reviseRtwo{The experiments demonstrated in this paper majorly focus on simple tasks with low-dimensional dynamics. Nevertheless, experiments of applying \gls{atacom} to more complex, high-dimensional, dynamic environments can be found in~\cite{liu2023safe}. We demonstrate that \gls{atacom} enables the TIAGo robot, using a unicycle model, to safely navigate to the target while avoiding blindly moving mobile robots. Furthermore, using the learned \gls{sdf}~\cite{liu2022regularized} as constraints, \gls{atacom} allows the TIAGo robot to avoid collisions with a human while holding a cup vertically in a shared workspace or to reach the target inside a shelf with complex geometry.}

\begin{figure*}
    \centering
    \subfloat[2D-StaticEnv]{\includegraphics[height=0.18\textwidth, trim=0.7cm 0.75cm 0.8cm 0.75cm, clip]{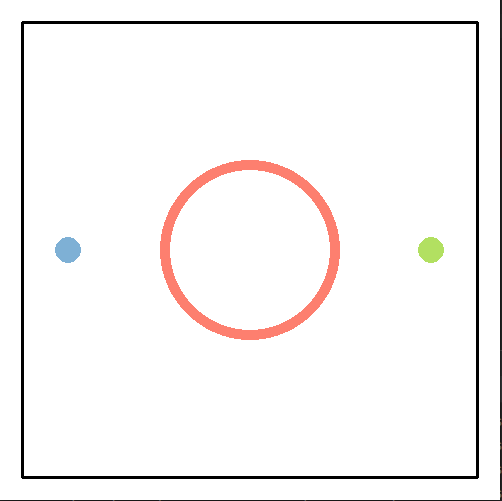}
    \label{fig:2d_static_col_avoid}}
    \hfill
    \subfloat[2D-DynamicEnv]{\includegraphics[height=0.18\textwidth, trim=0.6cm 0.75cm 0.5cm 0.72cm, clip]{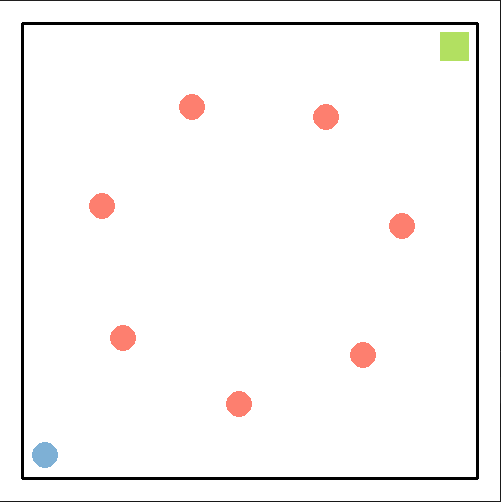}
    \label{fig:2d_dynamic_col_avoid}}
    \hfill
    \subfloat[QuadrotorEnv]{\includegraphics[height=0.18\textwidth, width=0.26\textwidth]{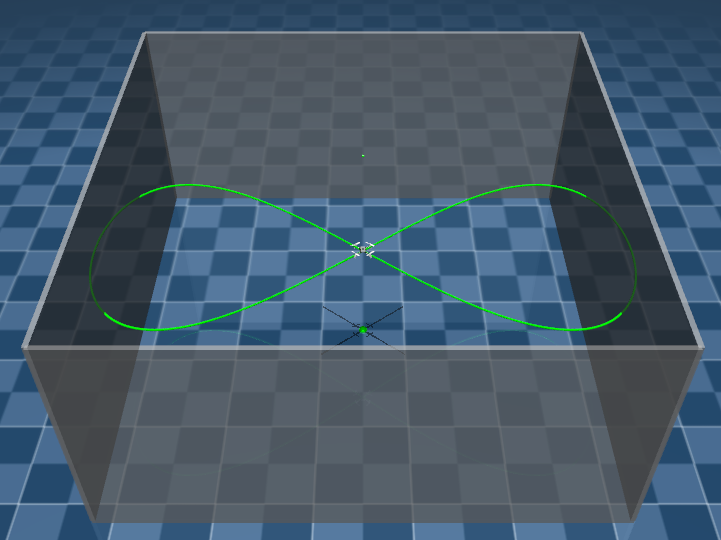}
    \label{fig:quadrotor_env}}
    \hfill
    \subfloat[AirHockeySim]{\includegraphics[height=0.18\textwidth, width=0.32\textwidth, trim=0cm 3cm 0 3cm, clip]{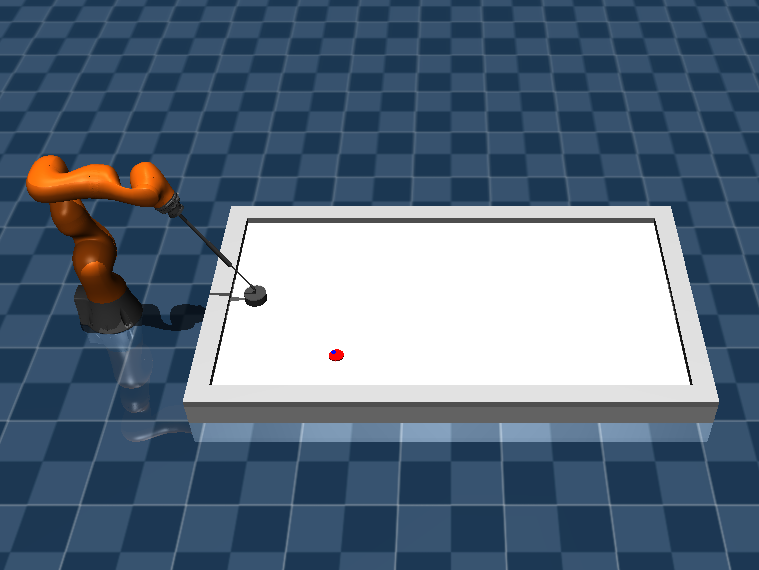}
    \label{fig:air_hockey_sim}}
    \\
    \subfloat[AirHockeyRealWorld]{\includegraphics[width=0.99\linewidth, trim=0 0 0 0, clip]{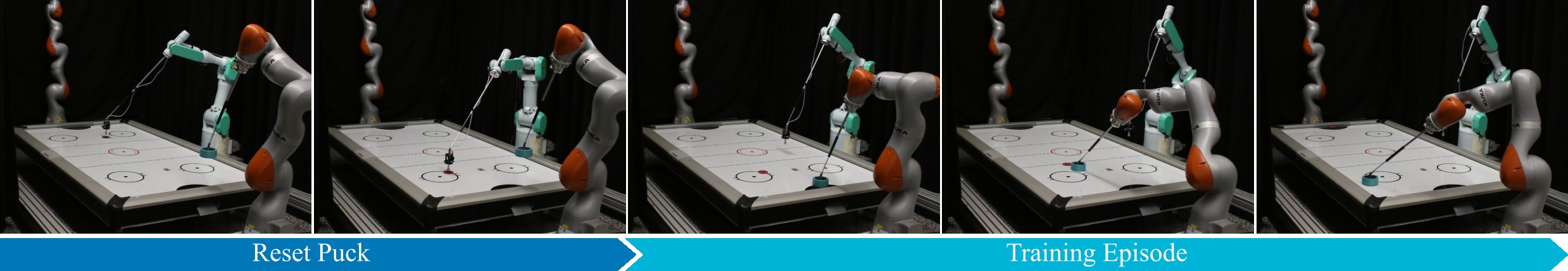}
    \label{fig:air_hockey_real_world_env}
    }
    % \hfill
    % \subfloat[AirHockeyReal]{\includegraphics[height=0.2\textwidth, width=0.26\textwidth, trim=0 0 0 0, clip]{fig/environments/AirHockeySim.png}}
    \caption{Experiment environments. (a) The task of \textsl{2D-StaticEnv} is to move the robot (blue point) to the target (green point) while avoiding the obstacle (red circle). The robot is controlled directly by the velocity. (b) In \textsl{2D-DynamicEnv}, multiple obstacles (red points) move randomly or in a fixed pattern with different velocity scales. The robot tries to reach the target without colliding with the obstacles. (c) The KUKA iiwa Robot tries to hit the puck (red) to the goal while avoiding collisions with the table. The RL agent produces the desired joint velocity, which is tracked by a torque controller. (d) In the \textsl{AirHockeyRealWorld}, the resetting is conducted by a pre-programmed PA10 robot. The learning agent controls the KUKA robot to hit the puck to the goal.}
    \vspace{-1em}
\end{figure*}

\subsection{\reviseRtwo{Different Slack Dynamics in 2D-StaticEnv}}\label{sec:experiment_slack_comp}

We introduced multiple types of slack dynamics functions to ensure safety in Section~\ref{sec:slack_dynamics_functions}. 
In this experiment, we will compare the effect of different types of slack dynamics functions (Linear and Exponential) and their hyperparameters $\beta$ in a 2D collision avoidance environment, shown in Fig.~\ref{fig:2d_static_col_avoid}. 
A circular obstacle shown in red stays in the middle of the environment.
The task is to control the planar robot (blue point) to reach the target (green point). The robot is controlled by velocity 
$\begin{bmatrix}\dot{x} \; \dot{y}\end{bmatrix}^\intercal = \begin{bmatrix}u_x \; u_y\end{bmatrix}^\intercal$.
The constraints are collision avoidance w.r.t. the fixed obstacle, i.e., $\Vert \vp_r - \vp_o \Vert > r_{\text{o}}$, and keeping the robot inside the boundaries, i.e.,  $l_i<p_{r, i}<u_i, i\in \{x, y\}$. 
The reward is the negative distance to the target $r = -\Vert \vp_r - \vp_t\Vert$. We use SAC~\cite{haarnoja2018soft} as \gls{rl} algorithm. The control frequency is set to be $100Hz$ in all experiments. The learning parameters are the same among all experiments except for the parameters in comparison, i.e., the type of slack dynamics function and its parameters $\beta$. Details of the hyperparameters can be found in Table~\ref{tab:2d_static_hyperparameters} in Appendix~\ref{app:hyperparameters}.
We ran 25 seeds for each experiment setting. 

\begin{figure}[t]
    \centering
    \includegraphics[width=0.87\linewidth, trim=1cm 1cm 1cm 1cm, clip]{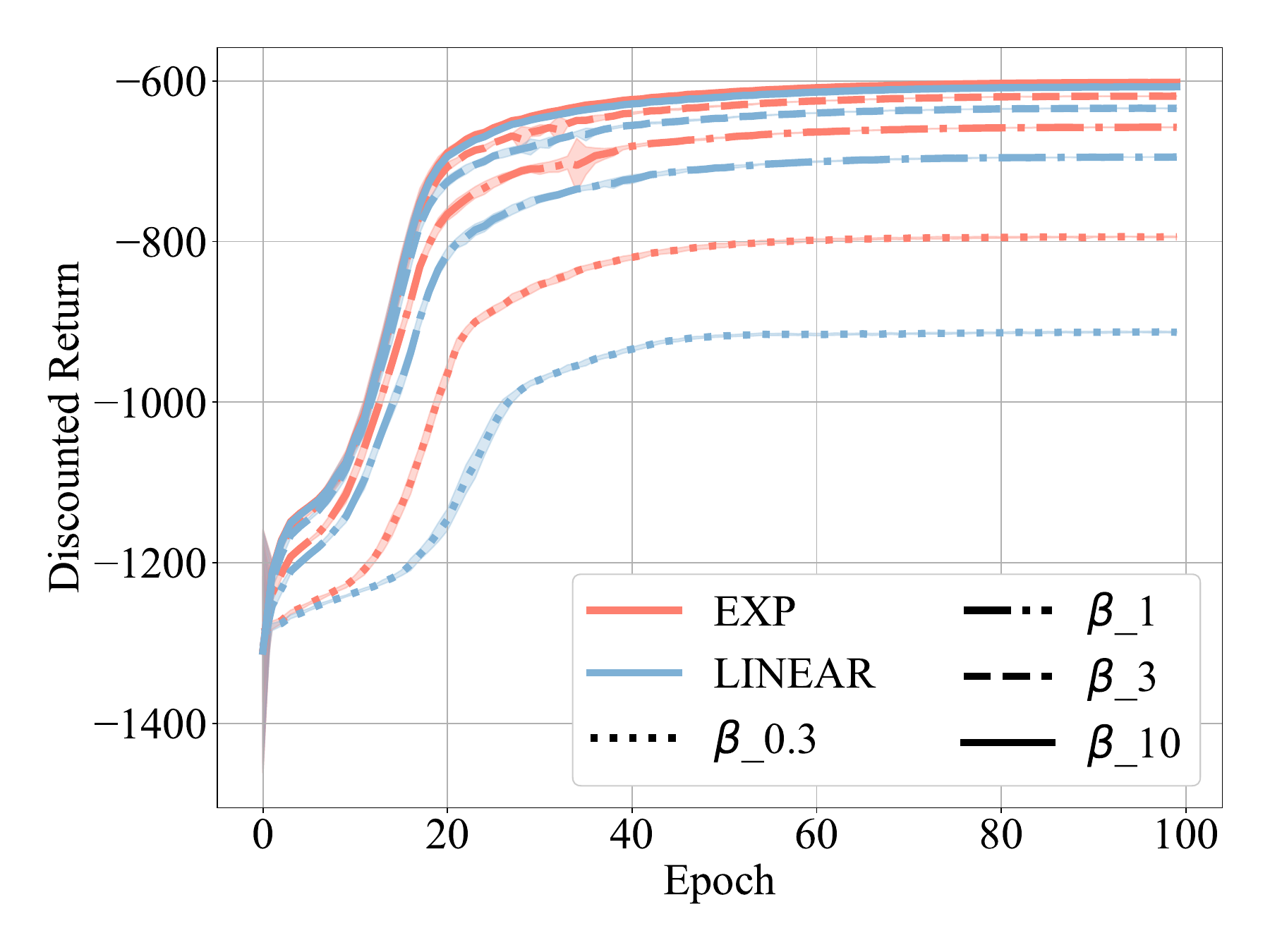}
    \caption{Learning Curve of 2D-Static Environment. Different colors represent different slack dynamics functions. The line types represent different $\beta$ settings. The shaded area represents the $95\%$ confidence interval of 25 independent runs.}
    \label{fig:2d_static}
    \vspace{-1em}
\end{figure}

The learning curves are shown in Fig.~\ref{fig:2d_static}. In all experiment settings, the robot has not collided with the obstacle during training. 
The robot can reach the target in all experiment settings within a fixed horizon length except for the setting of linear slack dynamic function with $\beta=0.3$. The reason is that the action space is heavily morphed by the slack dynamics function and the robot is taking a conservative behavior. 
The lower reward result indicates that the robot is reaching the target slower than the higher reward result. We can clearly see that the slack dynamics function with less morphing of the action space (higher $\beta$ and exponential slack dynamics function) leads to better training performance. However, a higher $\beta$ parameter requires a higher control frequency as the manifold will have a higher curvature, causing the system to deviate from the manifold due to the discretization error.

{
\SetTblrInner{rowsep=2pt}
\newcolumntype{?}{!{\vrule width 2pt}}
\tiny
\begin{table*}[b]
    \centering
    \resizebox{0.95\textwidth}{!}{
    \begin{tblr}{|c|c|[2pt]ccc|[2pt]ccc|[2pt]ccc|[2pt]ccc|[2pt]}
    \hline
    \SetCell[c=2, r=2]{b,r} \rotatebox[origin=c]{90}{\# Obs.} & & \SetCell[c=3]{c} Success Rate & & & \SetCell[c=3]{c} Episode Length & & & \SetCell[c=3]{c} Success Rate & & & \SetCell[c=3]{c} Episode Length & & 
    \\ \cline{3-14}
    & & Exact & FD & None & Exact & FD & None & Exact & FD & None & Exact & FD & None \\
    \hline
    \SetCell[r=3]{} \rotatebox[origin=c]{90}{LOW} 
    & 2 & \textbf{1.00} & 0.99 & 0.87 & \textbf{234.37} & 242.40 & 320.90 & \textbf{1.00} & 0.99 & 0.87 & \textbf{230.26} & 240.75& 320.45 \\
    & 6 & \textbf{1.00} & 0.97 & 0.74 & \textbf{243.46} & 267.56 & 420.00 & \textbf{1.00} & 0.97 & 0.68 & \textbf{245.45} & 270.49 & 465.93 \\
    & 10 & \textbf{1.00} & 0.93 & 0.55 & \textbf{281.33} & 335.36 & 552.90 & \textbf{1.00} & 0.93 & 0.51 & \textbf{272.36} & 320.36 & 598.09 \\
    \hline
    \SetCell[r=3]{} \rotatebox[origin=c]{90}{MEDIUM} 
    & 2 & \textbf{1.00} & 0.95 & 0.87 & \textbf{230.99} & 265.69 & 320.47 & \textbf{1.00} & 0.97 & 0.86 & \textbf{231.64} & 250.29 & 323.36 \\
    & 6 & \textbf{1.00} & 0.87 & 0.53 & \textbf{247.46} & 334.70 & 568.30 & \textbf{0.99} & 0.92 & 0.63 & \textbf{249.14} & 300.66 & 504.11 \\
    & 10 & \textbf{0.90} & 0.64 & 0.34 & \textbf{285.03} & 507.50 & 716.45 & \textbf{0.98} & 0.83 & 0.45 & \textbf{274.69} & 382.43 & 640.84 \\
    \hline
    \SetCell[r=3]{} \rotatebox[origin=c]{90}{HIGH} 
    & 2 & \textbf{0.98} & 0.92 & 0.84 & \textbf{244.93} & 284.46 & 344.39 & \textbf{1.00} & 0.97 & 0.85 & \textbf{228.96} & 249.22 & 332.13\\
    & 6 & \textbf{0.93} & 0.71 & 0.38 & \textbf{296.02} & 441.88 & 686.10 & \textbf{0.98} & 0.90 & 0.60 & \textbf{255.28} & 319.77 & 524.45 \\
    & 10 & \textbf{0.87} & 0.58 & 0.24 & \textbf{356.48} & 540.02 & 417.23 & \textbf{0.96} & 0.80 & 0.46 & \textbf{297.39} & 393.61 & 632.07\\
    \hline
    \SetCell[c=2]{c} & & \SetCell[c=6]{c} FIXED PATTERN & & & & & & \SetCell[c=6]{c} RANDOM MOTION & & & & & \\
    \hline
    \end{tblr}
    }
    \vspace{0.5em}
    \normalsize
    \caption{Comparison of slack dynamic and hyperparameters in 2D collision avoidance environment.}
    \label{tab:experiment_dyn_env_obs}
\end{table*}
}
\subsection{\reviseRtwo{Different Velocity Observations in 2D-DynamicEnv}} \label{sec:experiment_dyn_env_obs}
As described in Section~\ref{sec:atacom_dynamic}, the velocity of the \gls{dus} $\dot{\StateUncontrollableVar}$ leads to additional drift to the system. An accurate observation of $\dot{z}$ is beneficial to ensure safety. However, in many robotic applications, observing such velocity is often challenging and inaccurate. In this experiment, we compare the safety performance with different types of velocity observation in a dynamic environment with multiple moving obstacles. 

We use a 2D Dynamic Environment as a study example, as shown in Fig.~\ref{fig:2d_dynamic_col_avoid}. The robot (blue point) is controlled by velocity.
% $$\begin{bmatrix}\dot{x} \\ \dot{y}\end{bmatrix} = \begin{bmatrix}u_x \\ u_y\end{bmatrix}$$  
The constraints are defined as keeping the distance to each obstacle bigger than a threshold, i.e., $\Vert \vp_r - \vp_{o_i}\Vert > r_o, i = 1, 2, \cdots$. 
The target (green square) is randomly placed in the environment for each episode. 
We consider two types of moving obstacles: (1) The obstacles move in a fixed pattern, i.e., each obstacle moves along a circle around its initial position. (2) The obstacles move randomly in the environment. 
In this experiment, we use a hand-crafted policy that tries to reach the target with a linear attractor 
$$\vu = K_p \begin{bmatrix}x_t - x \\ y_t - y\end{bmatrix}$$
where $K_p$ is a positive definite matrix, $[x_t \; y_t]^\intercal$ is the target position. The action vector $\vu$ is then converted to $[u_x \; u_y]^\intercal$ using \gls{atacom}. The observation includes the robot's position/velocity and the obstacles' position/velocity.

In this experiment, we compare the safety performance of (a) exact velocity observations (EXACT), (b) no zero velocity (NONE), and (c) velocities obtained from the finite difference in position (FD). In the FD setting, we added a Gaussian noise with a standard deviation of $0.03$ to the position.  
We evaluate the performance with different numbers of obstacles (2, 6, 10) and different obstacle speeds. In the LOW-velocity setting, the maximum speed of the obstacles is $50\%$ of the robot's maximum speed. The MEDIUM and HIGH-velocity settings are $100\%$ and $150\%$, respectively. We ran 1000 episodes for each experiment setting. The experiment hyperparameters are in Table~\ref{tab:2d_static_hyperparameters} in Appendix~\ref{app:hyperparameters}. 

The results are shown in Table~\ref{tab:experiment_dyn_env_obs}. The episode is successful if the robot reaches the without collision within a fixed horizon. As expected, the robot performs the best in all experiment settings when the exact velocity is known. However, the \gls{atacom} does not achieve a $ 100\% $ success rate in the HIGH-velocity and in the MEDUIM-velocity settings with 10 obstacles. This is because, in the HIGH-velocity setting, the velocity of the obstacle is higher than that of the robot. The controller can not fully compensate for the drift in such cases. In the MEDIUM-velocity setting with 10 obstacles, collisions occur when multiple constraints are active, and no feasible action exists to avoid the collision. To ensure safety in these scenarios, we need a better design of the constraint function, using more advanced techniques such as reachability analysis. Additionally, we can also observe that the velocity obtained by the finite difference method (FD) performs comparably well to the exact velocity in the LOW/MEDIUM-velocity setting and the fewer obstacle settings. Instead, assuming a static environment at each time step (NONE) does not give satisfactory results. Therefore, using an accurate velocity observer for the \gls{dus} in the dynamic environment is advisable.

\subsection{\reviseRtwo{Safe Control with Disturbances in QuadrotorEnv}\label{sec:experiment_quadrotor}}
\reviseRtwo{
In this section, we compare \gls{atacom} with \gls{cbf}-QP in a quadrotor environment. The task is to control the \mbox{Crazyflie-2} robot to track the target that moves in an $8$-shape while avoiding collisions with walls, as shown in Fig.~\ref{fig:quadrotor_env}. We use the quaternion-based nominal dynamics of the quadrotor following~\cite{song2022policy}. The state space is $13$-dimensional, and the control space is the 4-dimensional.  To account for environmental disturbances, we introduce a 3-dimensional translational Gaussian noise. The reward function is $r=\exp (-10 \Vert \vp - \vp_{t} \Vert^2)$ with $\vp$ and $\vp_t$ being the position of the quadrotor and the target. To ensure safety, the position of the quadrotor should stay with the boundary \underbar{$\vp$} $\leq \vp \leq \bar{\vp}$. 
Designing a valid constraint considering the input limit is a challenging problem. We use the following constraint function to derive a controller
\begin{align*}
    &\underbar{$\vp$}_{2} \leq \vp_2 + \zeta_p \vv_2 \leq \bar{\vp}_{2} \\
    &\lambda_r(\underbar{$\vp$}_{i} - \vp_i) \leq k_{i} + \zeta_r \dot{k}_{i} \leq \lambda_r(\vp_{i} - \vp_i) \quad i \in \{0, 1\}
\end{align*}
where the first constraint ensures the position in $z$-direction within the boundary. The second constraint restricts the attitude of the robot with respect to the distance to the boundary in $x, y$-directions. Here $k_{i} = \langle R_z(\bar{q}), \vn_i \rangle$ is the projection of the $z$-axis in the robot frame $R_z(\bar{q})$ to the normal direction of the boundary $\vn_i$ and $\bar{q}$ is the quaternion of the robot. $\zeta_p$, $\zeta_r$, $\lambda_r$ are some constant bigger than 0.
}

\reviseRtwo{We compared CBF-QP with two optimizers, OSQP (non-differentiable solver~\cite{osqp} and CvxpyLayer (differentiable)~\cite{agrawal2020differentiating}. The results are illustrated in Fig.~\ref{fig:experiment_quadrotor}.
\gls{atacom} demonstrates safer behavior and greater robustness to disturbances compared to the \gls{cbf} method. This superiority arises because \gls{atacom} adjusts the desired action as it approaches the boundary, whereas the QP-based method only modifies the desired action after a violation occurs. Additionally, \gls{atacom} is computationally more efficient than \gls{cbf}-QP. The gradient between the actual action and the sampled action is directly computable from an explicit mapping, while \gls{cbf}-QP relies on an implicit optimization process, making gradient computation more resource-intensive. However, the heavier computation in the \gls{rl} algorithm makes the speed-up effect marginal, such as saving 7 minutes from a training of 5 hours.} 

% \begin{table}[t]
% \begin{tabular}{|c|c|c|c|c|}
% \hline
% & Reward & Viol. rate ($\%$) & Step Time (\si{\milli\second}) & Train. Time (\si{\hour}) \\ \hline
% & \cellcolor[HTML]{FFFFFF} 156.97 $\pm$ 1.91 & \cellcolor[HTML]{FFFFFF} 0.41 $\pm$ 0.34& \cellcolor[HTML]{FFFFFF}   & \cellcolor[HTML]{FFFFFF} 5.81$\pm$ 0.27\\ \cline{2-5} 
% \multirow{-2}{*}{ATACOM}    & \multicolumn{1}{l|}{\cellcolor[HTML]{C6C6C6}} & \multicolumn{1}{l|}{\cellcolor[HTML]{C6C6C6}} & \multicolumn{1}{l|}{\cellcolor[HTML]{C6C6C6}} & \multicolumn{1}{l|}{\cellcolor[HTML]{C6C6C6}} \\ \hline
% & & & & \\ \cline{2-5} 
% \multirow{-2}{*}{CBF-OSQP}  & \multicolumn{1}{l|}{\cellcolor[HTML]{C6C6C6}} & \multicolumn{1}{l|}{\cellcolor[HTML]{C6C6C6}} & \multicolumn{1}{l|}{\cellcolor[HTML]{C6C6C6}} & \multicolumn{1}{l|}{\cellcolor[HTML]{C6C6C6}} \\ \hline
% & & & & \\ \cline{2-5} 
% \multirow{-2}{*}{CBF-CVXPY} & \multicolumn{1}{l|}{\cellcolor[HTML]{C6C6C6}} & \multicolumn{1}{l|}{\cellcolor[HTML]{C6C6C6}} & \multicolumn{1}{l|}{\cellcolor[HTML]{C6C6C6}} & \multicolumn{1}{l|}{\cellcolor[HTML]{C6C6C6}} \\ \hline
% \end{tabular}
% \end{table}

\begin{figure}[tb]
    \centering
    \includegraphics[width=0.95\linewidth, trim=0.5cm 0.5cm 0.5cm 0.6cm, clip]{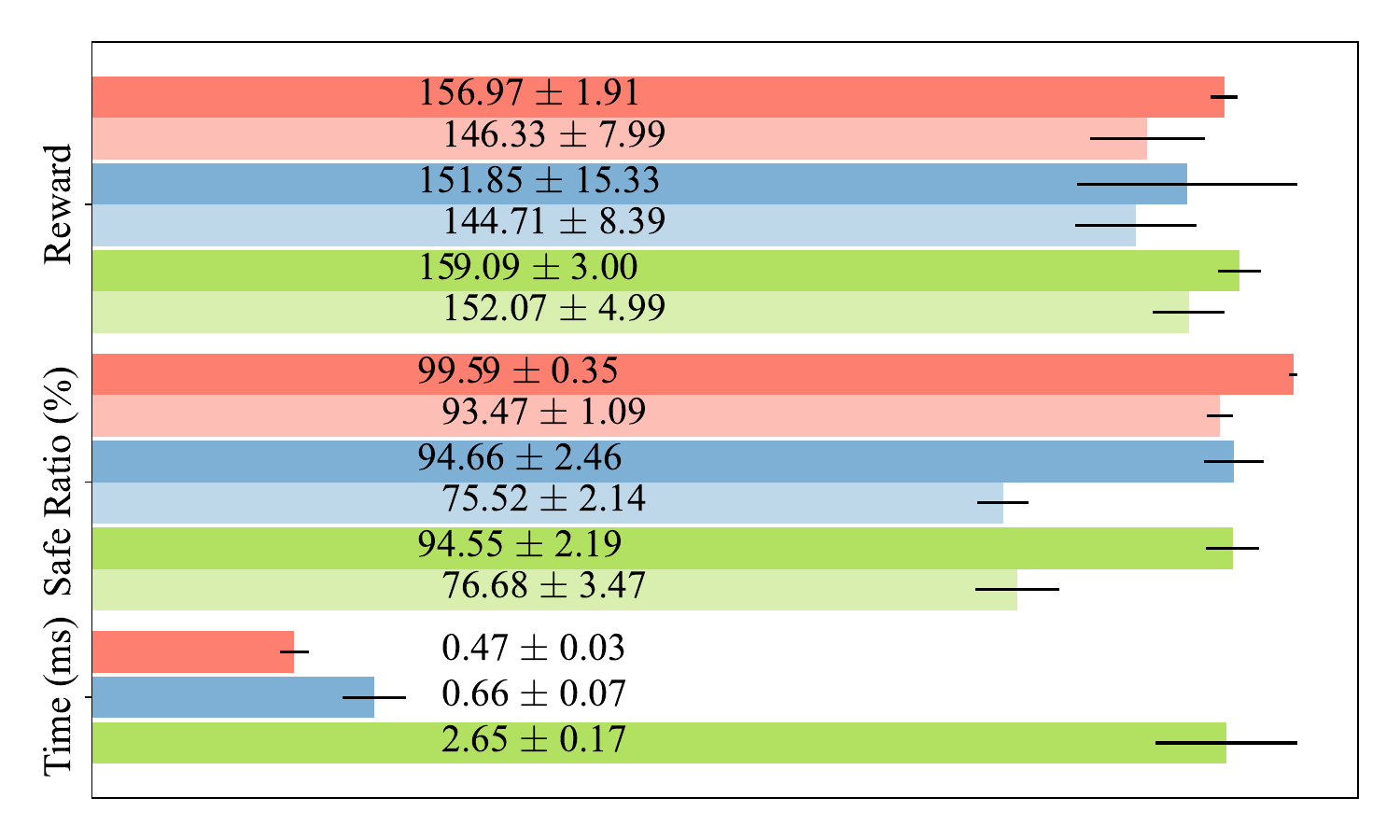}
    \includegraphics[width=0.8\linewidth, trim=1.2cm 2.0cm 2.8cm 1cm, clip]{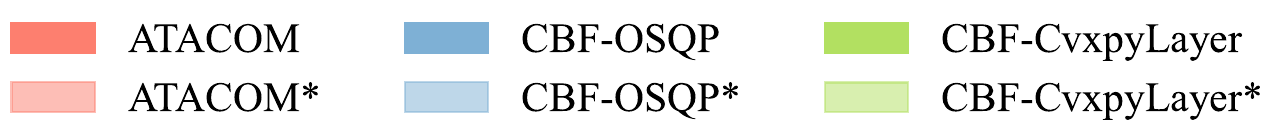}
    \caption{\reviseRtwo{Comparison of \gls{atacom} and \gls{cbf} was conducted using different optimizers: OSQP (non-differentiable) and CvxpyLayers (differentiable). The plots in solid colors represent results obtained from the ideal simulator, while the faded plots, marked with an asterisk (*), depict results from simulations with disturbances. The \textsl{Reward} represents the discounted cumulative reward of the final policy. The \textsl{Safe Ratio} indicates the proportion of safe episodes to the total number of episodes during the entire training process. \textsl{Time} measures the elapsed time for each action step.}}
    \label{fig:experiment_quadrotor}
    \vspace{-1em}
\end{figure}

\subsection{Dynamic Mismatch in Air Hockey Simulation} \label{sec:experiment_dyn_mismatch}
\gls{atacom} requires a known dynamics model of the \gls{dcs}. However, obtaining an accurate model of the robot dynamics can be challenging. The nominal model dynamics often differ from the real robot's ones. In the next experiment, we evaluate the performance under dynamics mismatch for \gls{atacom} in a simulated air hockey task.

The task requires controlling a KUKA iiwa14 robot to hit the randomly initialized puck into the opponent's goal. 
\reviseRone{The observation includes the robot's joint positions (7-dim) and velocities (7-dim), the puck's position (3-dim) and velocity (3-dim), the relative position (3-dim) and velocity (3-dim) between the puck and the robot's end-effector, and the action (7-dim) in the previous step. The total dimension of the observation space is 33. The action space of the \gls{rl} agent is the desired joint velocity (7-dim) of the robot.} 
A feedforward controller is then applied to generate joint torques tracking the desired joint position and velocity. 
The reward function is composed of three parts: one that rewards the end-effector for hitting the puck, one that expects the puck to move fast, and one that rewards for scoring.
\reviseRone{
Safety is ensured by imposing constraints in both the joint space and Cartesian space. In the joint space, 14 constraints limit the joint positions within their allowable ranges. For the Cartesian space, two constraints define the upper and lower bounds on the end-effector's height. Additionally, three constraints (two in the $Y$-direction and one in the $X$-direction) prevent the end-effector from colliding with the table's boundaries. To further ensure safety, two additional constraints restrict the height of the robot's elbow and shoulder, avoiding potential collisions with the table. A detailed description of the environment and hyperparameters can be found in Appendix~\ref{app:air_hockey_hyperparameters}. 
}

\begin{figure}
    \centering
    \includegraphics[width=\linewidth, trim=3.5cm 1.5cm 3cm 2.5cm, clip]{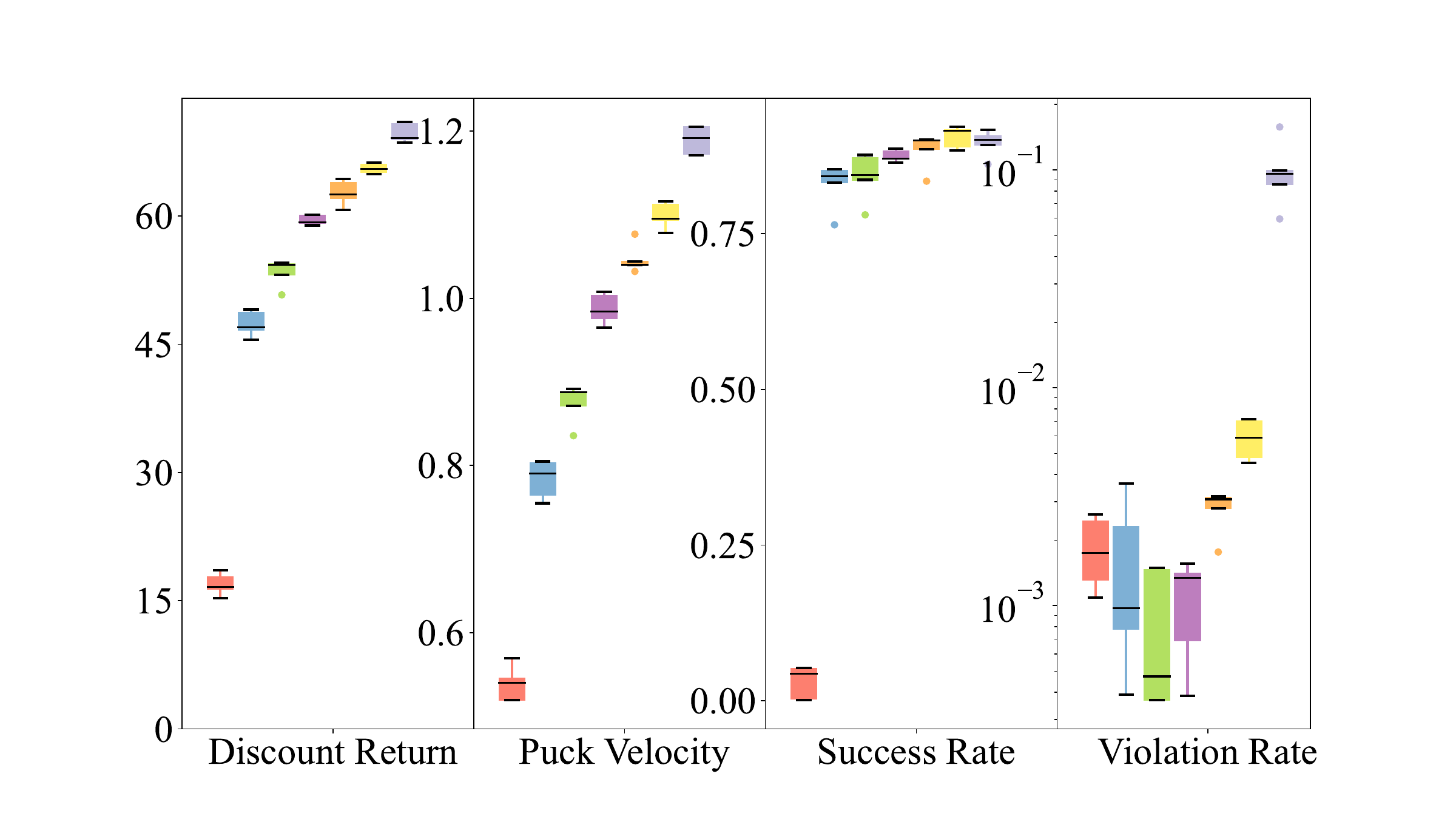}
    \includegraphics[width=0.8\linewidth, trim=1cm 0.5cm 1cm 1cm, clip]{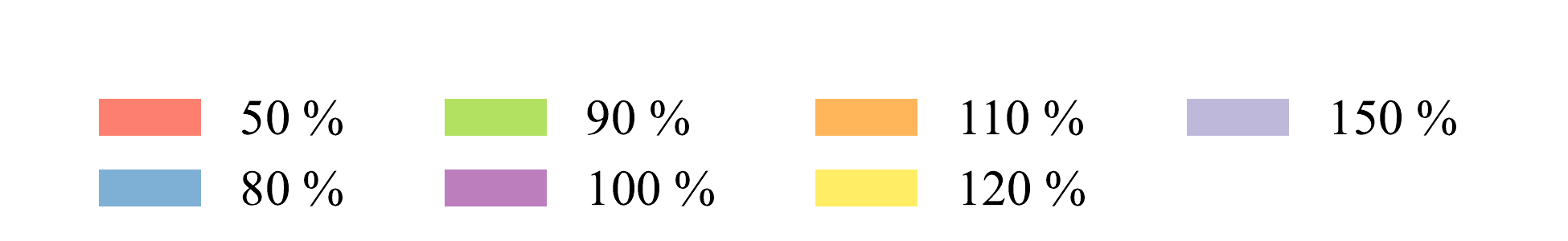}
    \caption{Comparison of the dynamic mismatch in the air hockey simulation. This figure shows the \textsl{Discounted Return}, the \textsl{Puck's Velocity} when crossing the middle line of the table, and the \textsl{Success Rate} evaluated over 1000 episodes. The \textsl{Violation Rate} is the percentage of episodes that one or more constraints are violated throughout the training process. The box plot shows the results over five independent runs.}
    \label{fig:air_hockey_dynamic_mismatch}
\end{figure}

To properly quantify the dynamics mismatch, we again use the velocity-controlled system as the nominal system for \gls{atacom} and change the maximum velocity of the nominal system to be different percentages of the actual velocity limit. We trained the policy using SAC for 3 million steps and then ran the evaluation for 1000 episodes. Fig.~\ref{fig:air_hockey_dynamic_mismatch} presents the results of 5 independent runs. We can observe that the performance (Discounted Return, Puck's Velocity, and Success Rate) of \gls{atacom} increases as the nominal system's velocity increases. The performance gain is because the agent can hit the puck at a higher velocity, and the puck will score in fewer steps. However, this performance improvement comes at the cost of constraint violations. The \gls{atacom} agent using an underestimated nominal dynamics model ($50\%, 80\%, 90\%$) generally has fewer violations than those using an overestimated model ($110\%, 120\%, 150\%$). The key reason is that the agent with an underestimated nominal dynamics model will always generate a feasible velocity command. In contrast, the agent with an overestimated nominal dynamics model will generate infeasible commands. The tracking controller will truncate the command to the velocity limit, which leads to unsafe behavior. 
Using a correct dynamics model, the agent achieves a good trade-off between performance and safety. In the experiment with an accurate nominal model, the violation rate is $0.1 \% $: these violations are caused mainly by the low control frequency ($50$Hz) and the controller's tracking error.

\subsection{Real Robot Air Hockey}
In the last experiment, we will show that \gls{atacom} achieves safe exploration on the \textit{Real Robot Air Hockey} task, enabling online \gls{rl} in the real world. The real robot setup is illustrated in Fig.~\ref{fig:air_hockey_real_world_env}. 
The performance of the Robot Air Hockey Hitting task is susceptible to the robot's and puck's dynamics. The success rate of the trained agent is $87\%$ in simulation, yet the success rate drops to $12\%$ when the policy is deployed in the real world. The main reason behind the performance drop is that successful hitting requires the robot to strike the puck in a specific direction precisely, but the dynamic mismatch between the simulator and the real world leads to deviations in the tracking controllers and the puck movements. While the robot can still hit the puck at high speeds, the success rate is low. Therefore, training with real-world interactions is essential to obtain a high-performance agent. 
Unfortunately, training the agent from scratch on a real robot is expensive and time-consuming. One training experiment with 3 million steps will take more than 100 hours, including the time for resetting the task. 
Instead, we train the agent in a simulated air hockey environment until convergence and then continue online fine-tuning using real-world interactions. 

Due to the sim-to-real gap, we observe that the value function obtained from the simulation has significant discrepancies with the real-world data. Direct training using the data collected from the real robot will destroy the value function landscape and rapidly lead to policy degradation. As a result, the agent will suffer from getting high-performance data and a slow convergence rate. Therefore, we first run a data-collection phase for 100 episodes using the pre-trained agent. We then train the value function using the newly collected data. This training phase only updates the value function and keeps the policy fixed. Then, we continue the training steps using the SAC algorithm. 
We apply two modifications to the SAC algorithm to fit the real-world setting: (1) We reduce the computation load during the episode rollout due to the real-time requirements: instead of updating the policy and the value function at each time step, we update the policy and the value function once the episode is finished. The number of updates equals the number of steps in the episode. (2) Due to the stochasticity of the real-world environment and the noise of the observations, outliers in the transition data can significantly affect the training of the value function. To reduce the impact of the outliers, we use a Huber loss instead of the Mean Square Error to train the value function. The Huber loss function is
\begin{align*}
    \mathcal{L}_{\delta}(y, \hat{y}) = 
    \left\{ \begin{aligned}
        &\frac{1}{2}(y - \hat{y})^2 & &\text{if } |y - \hat{y}| \leq \delta, \\
        &\delta |y - \hat{y}| - \frac{1}{2}\delta^2 & &\text{otherwise}.
    \end{aligned}
    \right.
\end{align*}

\begin{figure}[tb!]
    % \vspace{-1em}
    \centering
    \includegraphics[width=0.9\linewidth, trim=0cm 1.8cm 0cm 1cm, clip]{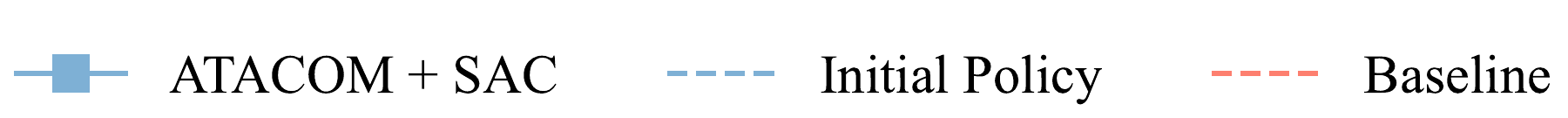} \\
    \includegraphics[width=\linewidth, trim=1cm 1cm 1cm 1cm, clip]{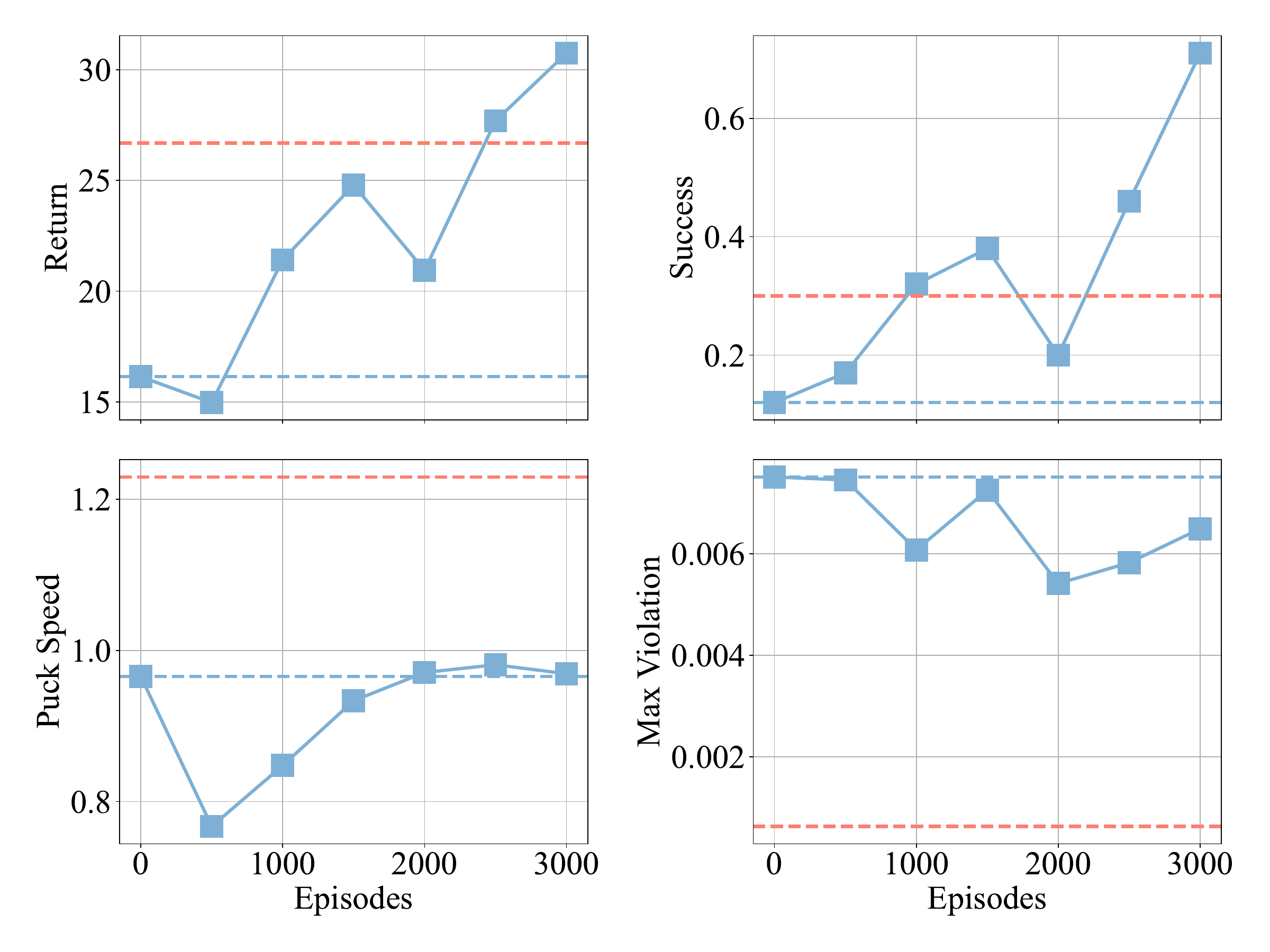}
    \caption{Learning Curve of the Real Robot Air Hockey Task. The evaluation is conducted every 500 episodes. In each evaluation, we ran 100 episodes with the puck placed uniformly in the pre-defined region. The baseline is a planning-based solution introduced in~\cite{liu2021efficient}.}
    \label{fig:air_hockey_real_learning}
    % \vspace{-1em}
\end{figure}

\begin{figure*}[tbp]
    \centering
    \includegraphics[width=0.24\linewidth, trim=2cm 7.5cm 2cm 7.5cm, clip]{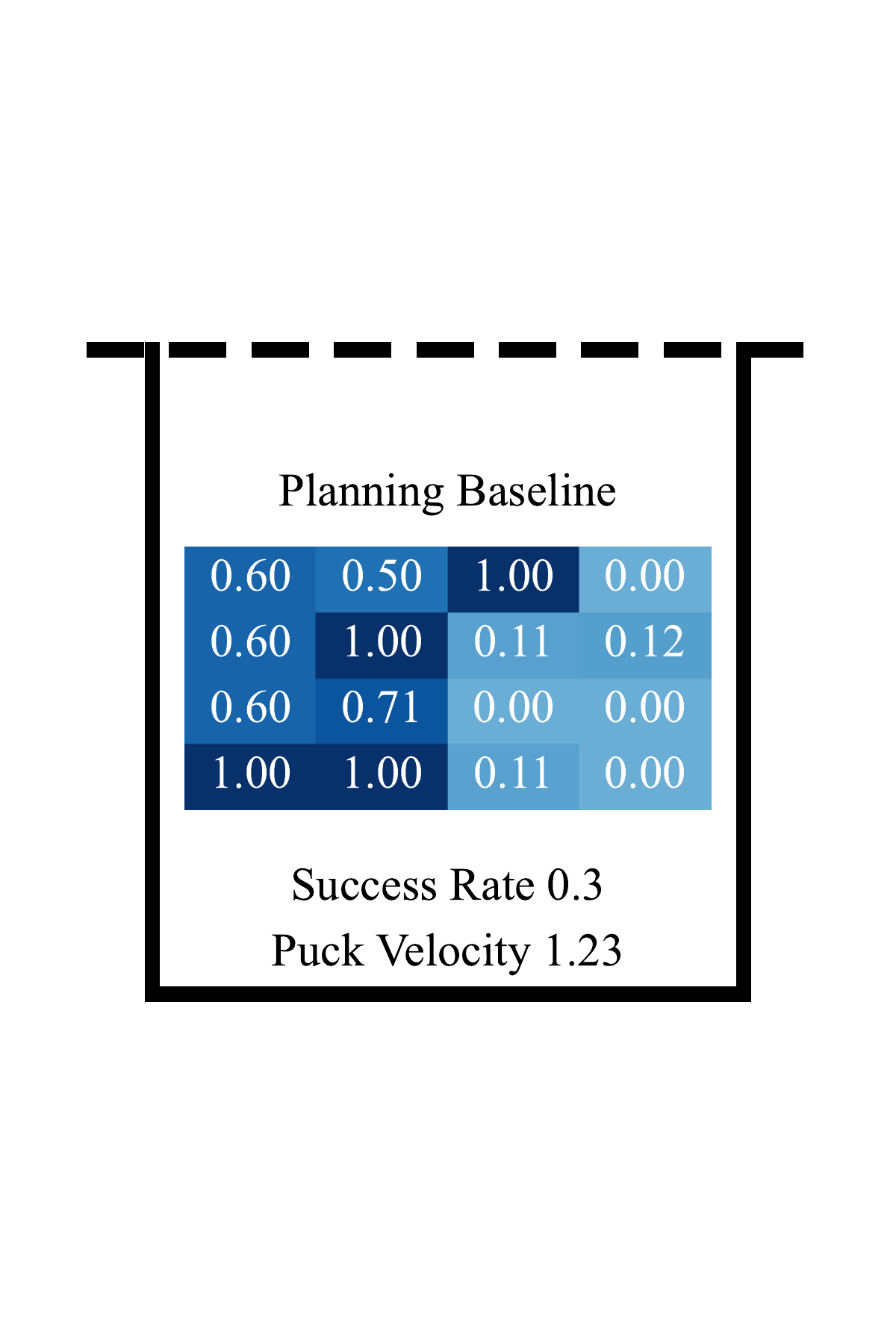}
    \hfill
    \includegraphics[width=0.24\linewidth, trim=2cm 7.5cm 2cm 7.5cm, clip]{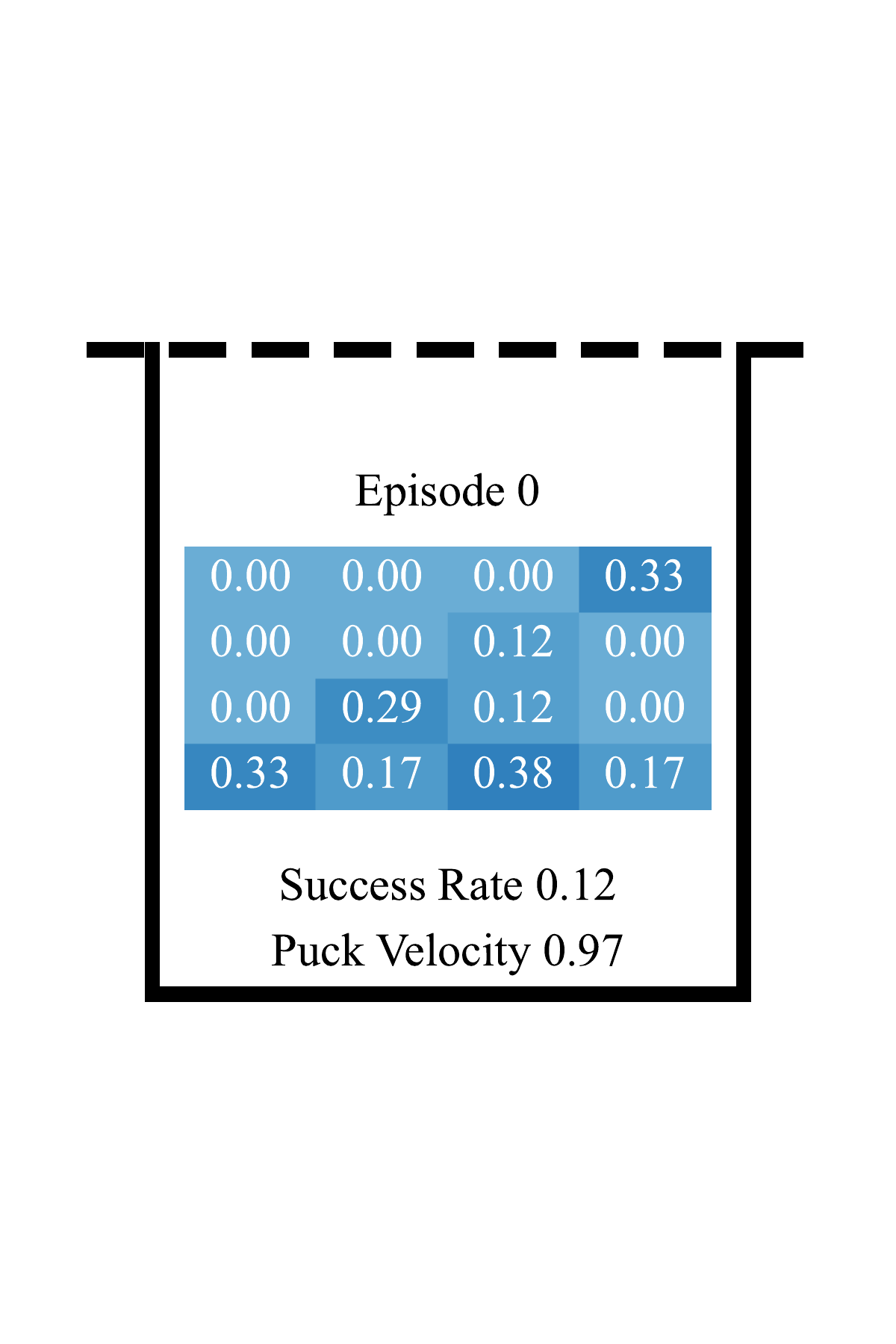}
    \includegraphics[width=0.24\linewidth, trim=2cm 7.5cm 2cm 7.5cm, clip]{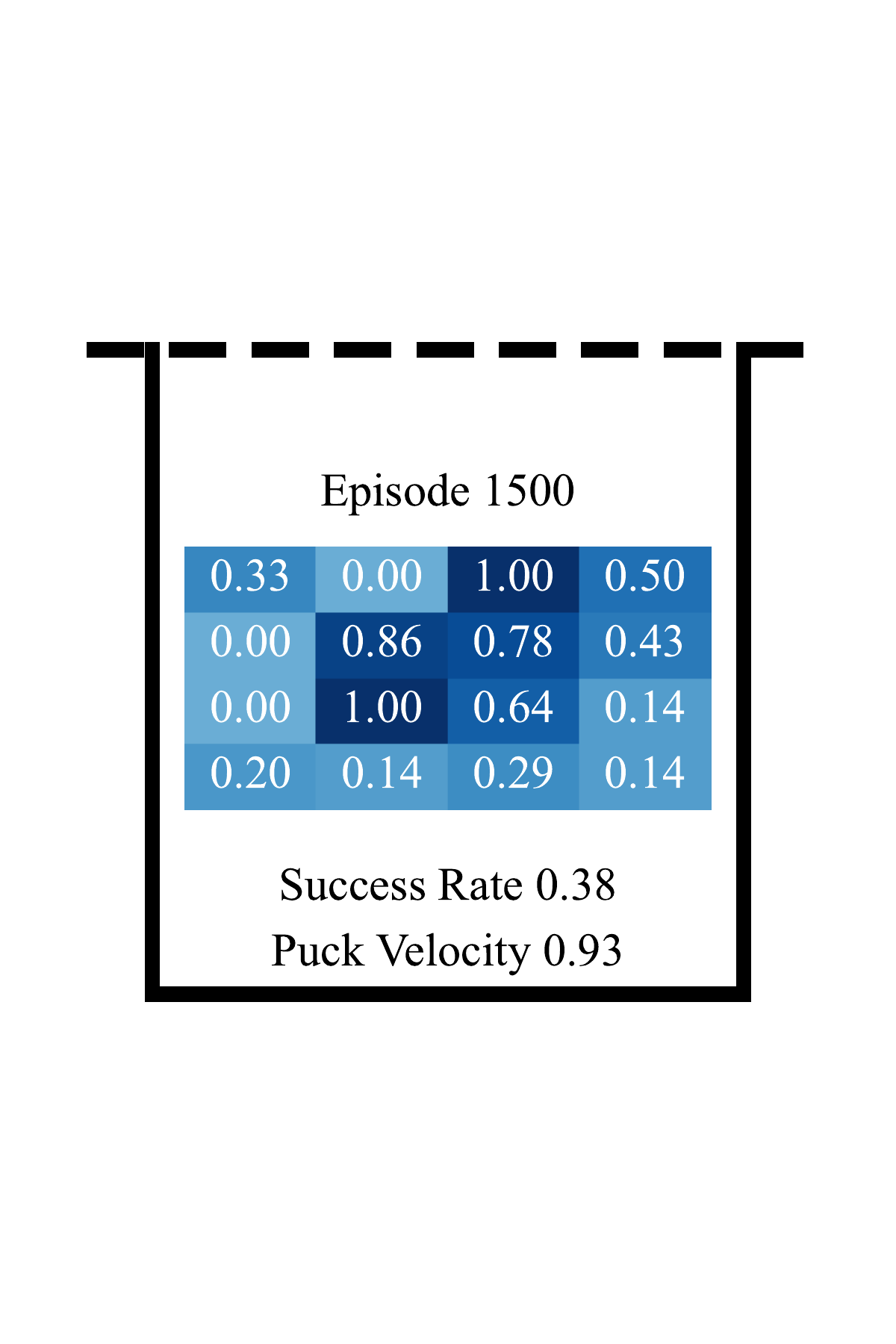}
    \includegraphics[width=0.24\linewidth, trim=2cm 7.5cm 2cm 7.5cm, clip]{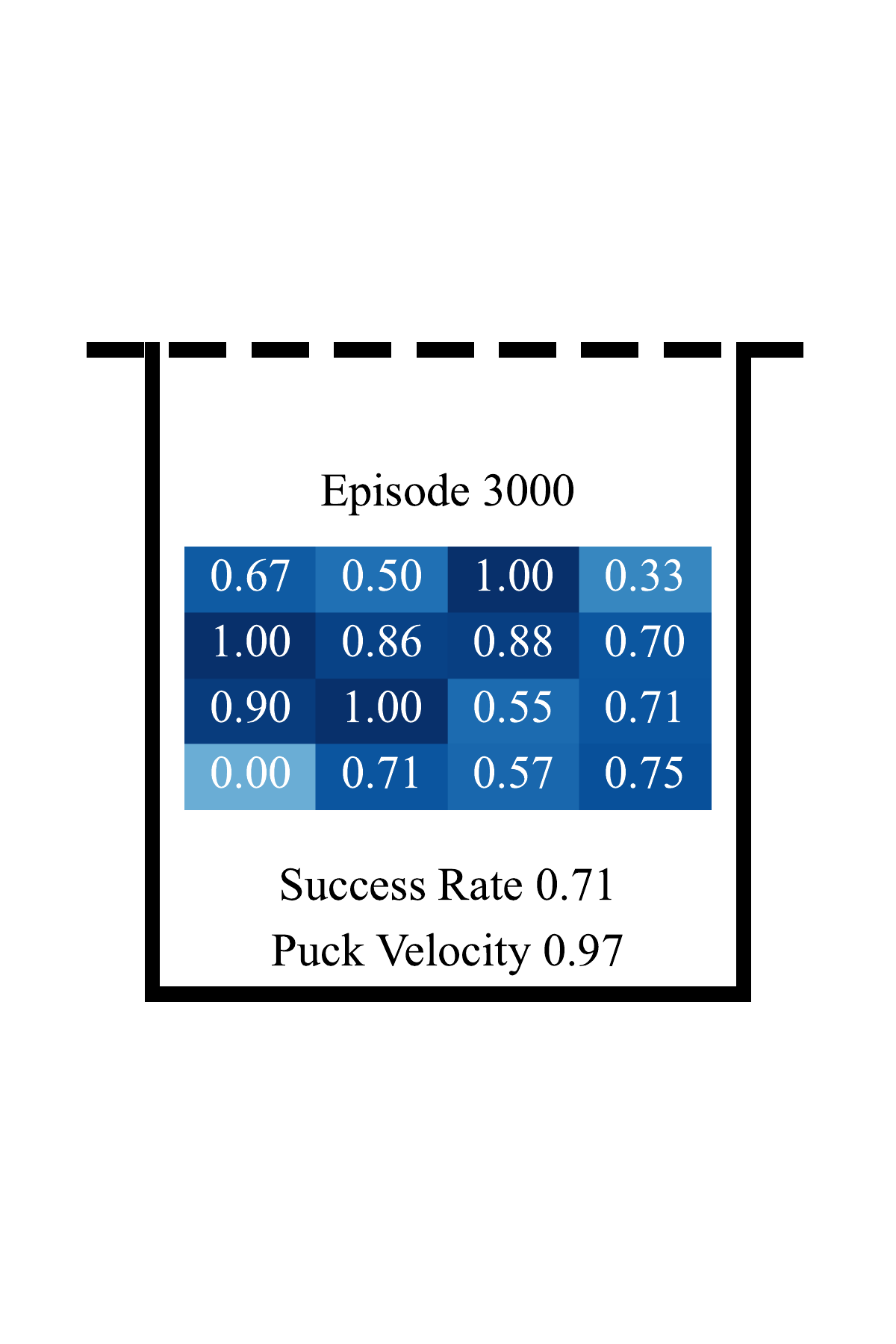}
    \caption{Success Rate of the Real Robot Air Hockey Task. This figure shows the table region of the trained agent's side. The dashed line represents the middle line of the table. The puck is placed uniformly on the table. The value in each cell represents the success rate in the corresponding areas. The figure on the left shows the success rate of the planning/optimization baseline. The second figure shows the result of the pre-trained agent. The third and last figures show the success rate of the trained agent after 1500 and 3000 episodes, respectively.} 
    \label{fig:air_hockey_real_success}
    \vspace{-1em}
\end{figure*}

Our study involved an extensive 3000 episodes of online fine-tuning on the real robot, a process that spanned approximately 24 hours. The resetting was executed by a Mitsubishi PA10 robot with a vacuum gripper, using a pre-programmed resetting motion. We conducted evaluations every 500 episodes, running 100 episodes with the puck placed uniformly in the pre-defined region. 

Fig.~\ref{fig:air_hockey_real_learning} shows the learning curve during the training process. The performance of the pre-trained agent is shown as a blue dashed line. The success rate of the pre-trained agent is $0.12$, and the average puck's velocity is $0.97$m/s. We also compare the planning/optimization-based baseline as introduced in~\cite{liu2021efficient}. The baseline has a success rate of $0.3$ and an average hitting velocity of $1.23$m/s. 
Notably, the performance at 500 episodes is worse than the initial policy. This is because the value function trained in the simulation does not reflect the performance of the policy in the real world, and the value function needs more iterations to reconstruct the landscape based on real-world dynamics. Since the agent has already acquired some high-performance data in the initial data collection phase, the value function will converge faster than if it is trained from scratch.
The success rate improved significantly to $0.71$ at the evaluation of 3000 episodes, and the puck's speed remains comparable at $0.97$m/s. The maximum violation throughout the evaluation is close to zero, as shown in the lower-right plot of Fig.~\ref{fig:air_hockey_real_learning}. Although the planning baseline is able to hit the puck at a higher speed, building a reactive and adaptive planner for high-speed dynamic motion is still challenging. The goal of the learning agent is focused primarily on scoring, which sacrifices hitting speed for better accuracy. We can also observe it from the last two steps in the learning curve. The success rate keeps increasing while the hitting velocity decreases slightly. 

We show the success rate when the puck is initialized at different table regions in Fig.~\ref{fig:air_hockey_real_success}. The planning/optimization-based solution is shown in the leftmost plot. The three plots on the right show the success rates at Episodes $0, 1500$, and $3000$. Due to the uneven air flow, the puck drifts heavier on the right side of the table than on the left. Therefore, the baseline agent performs worse on the right side as the planner does not adapt to the changes in the puck's motion. After the training, we can observe that the \gls{rl} agent successfully adapted its behavior, and the success rate is significantly improved. However, the puck's speed is still lower than the baseline as the baseline is optimized for the maximum speed, while the \gls{rl} agent only contains a small bonus for the high-speed puck. The RL agent needs to make a trade-off between high-speed motion and accuracy. Although other specialized solutions to solve the Air Hockey hitting task exist~\cite {kicki2023fast}, to the best of our knowledge, our method is the first approach able to learn and improve the performance of the real robot while ensuring safety at every timestep. We argue that the \gls{rl} agent could, in principle, outperform these baselines using more effective learning strategies and fine-tuning the reward design.
\section{Dicussions and Conclusions}
\label{sec:conclusions}
\subsection{Connections to Control Barrier Functions}
Control Barrier Functions (CBFs)~\cite{ames2019control} is a popular technique to enforce safety in control systems. Consider the safety specification $h(\StateVar) \geq 0$, where $h: \StateSpace \rightarrow \mathbb{R}$ is a continuously differentiable function. $h$ is a CBF if there exists an extended class $\mathcal{K}$ function $\alpha$ such that for the control system~\eqref{eq:nonlinear_affine_system}:
\begin{equation}
    \dot{h}(\StateVar, \ControlInput) \geq -\alpha(h(\StateVar)), \quad \exists \ControlInput \in \ControlSpace, \quad \forall \StateVar \in \StateSpace
    \label{eq:cbf_constraint}
\end{equation}
A safe controller is often obtained from a quadratic programming problem that finds the control input closest to the nominal one while satisfying the CBF constraint \eqref{eq:cbf_constraint}. \reviseRthree{Let $h(\StateVar)=-\Constr(\StateVar)=\SlackVar, \alpha(h(\StateVar))=-\ControlSlack\alpha'(h(\StateVar))$, the \gls{cbf} constraint results in a formulation similar to \gls{atacom}
\begin{equation*}
    \underbrace{\JacConstrMat\DynfVec}_{\DriftVec} + \underbrace{[\JacConstrMat \DynGMat \quad \text{diag}(\alpha'(\SlackVar)]}_{\JacInputMat}\begin{bmatrix} \ControlInput \\ \ControlSlack\end{bmatrix} \leq \vzero \quad \exists \ControlInput \in \ControlSpace
\end{equation*}
while \gls{atacom} considers the equality constraint~\eqref{eq:system_requirement}.
}

% We found a close connection between \gls{atacom} and \gls{cbf} in the following sense: consider the 1-dimensional safety constraint and  the following equality constraint
% \begin{equation*}
%     \Constr(\StateVar) + \Slack = 0
% \end{equation*}
% with $\Slack \geq 0$. We rearrange the equality constraint and get 
% \begin{align} 
%     \Slack(\StateVar) = -\Constr(\StateVar) \nonumber, \quad \text{and} \quad 
%     \dot{\Slack}(\StateVar) = -\dot{\Constr}(\StateVar) \nonumber
% \end{align}
% From the slack variable dynamics~\eqref{eq:slack_dynamics}, we have \begin{equation*}
%     \dot{\Slack}(\StateVar) = \alpha(\Slack(\StateVar)) u_\Slack \geq u_\Slack^{-} \alpha(\Slack(\StateVar)) = -\alpha'(\Slack(\StateVar))
% \end{equation*}
% with $\alpha'(\cdot) = -u_\Slack^{-} \alpha(\cdot)$. The inequality holds since $\alpha(\cdot) \geq 0$. Since $u_\Slack^{-} < 0$, $\alpha'$ is also a function of class $\mathcal{K}$. Compared to the CBF constraint~\eqref{eq:cbf_constraint}, introducing a slack dynamics function is equivalent to constructing a CBF constraint. 

\reviseRthree{
However, the methods differ in how these constraints are solved. \gls{atacom} introduces a drift compensation term $-\JacInputMat^{\dagger}\DriftVec$ that solves the least-norm problem 
\begin{equation}
    \min \Vert [\ControlInput \quad  \ControlSlack]^{\intercal} \Vert ^2 \quad \mathrm{s.t. } \; \DriftVec + \JacInputMat \begin{bmatrix} \ControlInput \\ \ControlSlack\end{bmatrix} = \vzero, \ControlInput \in \ControlSpace
    \label{eq:atacom_driftcomp_problem}
\end{equation}
The external command for \gls{atacom} is applied in the tangent space, requiring only matrix decomposition and multiplication. 
Instead, a typical \gls{cbf} solution involves formulating a Least-square problem w.r.t the reference action $\vu_r$
\begin{equation*}
    \min \Vert \ControlInput - \vu_r||^2 \quad \mathrm{s.t.} \quad \DriftVec + \JacConstrMat \ControlInput \leq \vzero
\end{equation*}
Here, the external command is modified through an implicit optimization process, often involving iterative methods, such as the Active Set Method or the Interior Point Method~\cite{luenberger1984linear}. }

\reviseRthree{
A key advantage of \gls{atacom} is that it constructs a direct mapping (12), allowing the gradient $\nabla_{\vu}\ControlInput$ to be computed straightforwardly. In contrast, \gls{cbf} must rely on differentiable solvers, which can be computationally expensive. Gradient propagation is crucial when jointly learning constraints and policy~\cite{liu2024handling} or for constraint-agnostic task learning~\cite{emam2022safe}
Additionally, when no feasible action exists to satisfy the constraint. \gls{cbf}-based solution fails to find an alternative solution while \gls{atacom} provides a least square solution to~\eqref{eq:atacom_driftcomp_problem}}.

\subsection{Limitations}
\label{sec:limitations}
\gls{atacom} also exhibits some limitations. Our approach is derived from a time-continuous perspective. This requires a high control frequency. In the planar robot environment, the control frequency is set to be $\qty{100}{\hertz}$, and in the air hockey environment, the control frequency is $\qty{50}{\hertz}$. Extension to the time-discretized setting is left for future work. 
\reviseRtwo{
Furthermore, in our analysis, we assume a feasible action exists such that Equation~\eqref{eq:system_requirement} is solvable. The assumption can only be met by carefully designing the constraints. 
Constructing a safe constraint that considers the actuation limits is still an open problem. Learning-based methods have explored this problem, such as learning \gls{cbf} with actuation limit \cite{liu2023cbfsafe, dai2023convex}.}
Finally, \gls{atacom} focuses only on single-step constraints and does not account for long-term safety. For example, the robot may get trapped between multiple objects in the 2D Moving Obstacle environment. This type of safety can be addressed by restricting a cumulative cost in \gls{saferl} algorithms.

\subsection{Conclusions}
In this paper, we provided a theoretical foundation, a set of extensions, and through hyperparameters studies for \gls{atacom} algorithm. We showed that safety constraints can be constructed as a manifold. By exploiting the geometry of the tangent space, we can generate a safe action space, allowing learning agents to sample arbitrary actions while ensuring safety. The theoretical analysis demonstrated the existence of a region of attraction around the constraint manifold, guaranteeing the system converges to the manifold. Furthermore, we also analyze the safety of the system under disturbances will ensure bounded constraint violations. 
We demonstrated in a real-world experiment of the robot air hockey task that our approach can learn a safe policy in a high-dimensional task with complex safety constraints.

\section{Acknowledgments}
\noindent This project was supported by the CSTT fund from Huawei Tech R\&D (UK). We also acknowledge the support provided by the China Scholarship Council (No. 201908080039).
Research presented in this paper has been supported by the German Federal Ministry of Education and Research (BMBF) within the subproject ``Modeling and exploration of the operational area, design of the AI assistance as well as legal aspects of the use of technology'' of the collaborative KIARA project (grant no. 13N16274).

\bibliographystyle{IEEEtran}
\bibliography{bibliography.bib}

% \newpage
\appendices
% \section{Experiments Setup}
% \label{app:parameters}

\section{Experiment setup for the Robot Air Hockey Task}
\label{app:air_hockey_hyperparameters}

The observation space for the robot air hockey task is
$$
\StateVar = [\vp_{\text{puck}}, \vv_{\text{puck}}, \vq_{\text{robot}}, \dot{\vq}_{\text{robot}}, \vp_{\text{ee-puck}}, \vv_{\text{ee-puck}}, \vu_{\text{prev}}] ^\intercal
$$
where $\vp_{\text{puck}} \in \RR^3$ and $\vv_{\text{puck}} \in \RR^3$ are the position ($x,y$ position and yaw-angle) and velocity of the puck, $\vq_{\text{robot}} \in \RR^{7}$ and $\dot{\vq}_{\text{robot}} \in \RR^{7}$ are the joint positions and velocities of the robot, $\vp_{\text{ee-puck}} \in \RR^4$ and $\vv_{\text{ee-puck}} \in \RR^3$ are relative position (angle difference represent in ) and velocities between the puck and the velocity, and $\vu_{\text{prev}} \in \RR^7$ is the previous control input. The action space is $\ControlSpace \subset \RR^7$ is the desired joint velocity of the robot. To obtain a smooth action that is transferable to the real robot, we applied a low-pass filter to the sampled actions. 
$$\vu = r_u \vu_{\text{sample}} + (1 - r_u) \vu_{\text{prev}}$$
where $r_u$ is the smoothing ratio. 
The robot is controlled by torque in $1000$Hz and the RL action is sampled in $50$Hz. We applied a trajectory interpolator to convert the desired joint velocity to a chunk of trajectories in $1000$Hz. Then a PD controller is used to track the trajectory. 
The constraints are 
\begin{align*}
    &q_{i, l} < q_i < q_{i, u}, \quad i = 1, \cdots, 7 \\
    &x_l < x_{\text{ee}} ,  \quad y_l < y_{\text{ee}} < y_u,  \quad z_l < z_{\text{ee}} < z_u \\
    &z_l' < z_{\text{wrist}} \quad z_l' < z_{\text{elbow}} 
\end{align*}
where $q_i$, $q_{i, l}$ and $q_{i, u}$ are the position, lower and upper limits of the joint $i$. $[x_{\text{ee}}, y_{\text{ee}}, z_{\text{ee}}]^\intercal = \text{FK}(\vq)$ is the end-effector position obtained from the forward kinematics. $z_{\text{wrist}}$ and $z_{\text{elbow}}$ are the $z$-axis position of the wrist and elbow. The total number of constraints is 21. 

The reward function is defined as follows
\begin{align*}
    r(\StateVar, \ControlInput) &= r_{\text{pe}} + r_{\text{pv}} + r_{\text{pg}},  \\
    r_{\text{pe}} &= 10 \max(d_{\text{pe}} - \Vert \vp_{\text{puck}} - \vp_{\text{ee}} \Vert, 0) \\
    r_{\text{pv}} &= 1.5 \dot{x}_{\text{puck}}, \text{ if } x_{\text{puck}} > 0 \\
    r_{\text{pg}} &= (1.5 - \Vert \vp_{\text{puck}} - \vp_{\text{goal}} \Vert) / (1 - \gamma), \text{ if episode ends}
\end{align*}
where $r_{\text{pe}}$ defines the reward that encourages the end-effector to approach the puck, $d_{\text{pe}}$ is the minimum distance between the puck and the end-effector in the episode. $r_{\text{pv}}$ is the reward that encourages the puck to move at a higher speed. $r_{\text{pg}}$ is the reward at the final step of the episode, encouraging the puck to reach the goal. The parameters for the \gls{rl} training are shown in Table~\ref{tab:air_hockey_hyperparameters}.

\section{Hyperparameters}
\label{app:hyperparameters}

\begin{table}[H]
    \centering
    \footnotesize
    \begin{tabular}{lc}
        \hline
        Parameter & Value \\ 
        \hline
        slack dynamics function & [exp, linear] \\
        $\beta$ & [0.3, 1.0, 3.0, 10.0]\\
        \hline
        \# epochs & 100 \\
        \# steps per epoch & 10000 \\
        \# steps per fit & 1 \\
        \# evaluation episodes & 10 \\
        actor lr & 0.0001 \\
        critic lr & 0.0003 \\
        alpha lr & 5e-06 \\
        hidden layers & 128 128 128 \\
        batch size & 64 \\
        initial replay size & 10000 \\
        \# warmup transitions & 10000 \\
        max replay size & 200000 \\
        target network update ratio & 0.001 \\
        target entropy & -2 \\
        drift clipping & true \\
        slack tolerance & 1e-06 \\  
        \hline
    \end{tabular}
    \vspace{0.5em}
    \normalsize
    \caption{Parameters for the 2D Static Environment}
    \label{tab:2d_static_hyperparameters}
    \vspace{-1em}
\end{table}

\begin{table}[H]
    \centering
    \footnotesize
    \begin{tabular}{lc}
        \hline
        Parameter & Value \\ 
        \hline
        \# obstacles & [2, 6, 10] \\
        obstacles' velocity obs. & [EXACT, NONE, FD]\\
        obstacles' velocity scales & [50\%, 100\%, 150\%] \\
        obstacles' motion pattern & [FIXED, RANDOM] \\
        \hline
        \# episodes & 1000 \\
        \# horizon & 1000 \\
        drift clipping & true \\
        slack dynamics function & exp \\
        slack $\beta$ & 4 \\
        slack tolerance & 1e-06 \\
        \hline
    \end{tabular}
    \vspace{0.5em}
    \normalsize
    \caption{Parameters for the 2D Dynamic Environment}
    \label{tab:2d_dynamic_hyperparameters}
    \vspace{-1em}
\end{table}

\begin{table}[ht]
    \centering
    \footnotesize
    \begin{tabular}{lc}
        \hline
        Parameter & Value \\ 
        \hline
        dynamic mismatch (\%) & [50, 80, 90, 100, 110, 120, 150] \\
        \hline
        \# epochs & 300 \\
        \# steps per epoch & 10000 \\
        \# steps per fit & 1 \\
        \# evaluation episodes & 1000 \\
        actor lr & 0.0003 \\
        critic lr & 0.0003 \\
        alpha lr & 5e-05 \\
        hidden layers & 128 128 128 \\
        activation function & SELU \\
        batch size & 64 \\
        initial replay size & 10000 \\
        \# warmup transitions & 10000 \\
        max replay size & 200000 \\
        target network update ratio & 0.001 \\
        target entropy & -2.0 \\
        drift clipping & true \\
        slack tolerance & 1e-06 \\  
        slack dynamics function & exp \\
        slack $\beta$ & 2.0 \\
        action filter ratio $r_u$ & 0.3 \\
        \hline
    \end{tabular}
    \vspace{0.5em}
    \normalsize
    \caption{Hyperparameters for Real Robot Air Hockey Experiment}
    \label{tab:air_hockey_hyperparameters}
\end{table}

In the real-world experiment, the pre-training in simulation is conducted with the same hyperparameters as in Table~\ref{tab:air_hockey_hyperparameters}. Here we list the hyperparameters for the real robot experiment.

\begin{table}[H]
    \centering
    \footnotesize
    \begin{tabular}{lc}
        \hline
        Parameter & Value \\ 
        \hline
        \# episodes & 3000 \\
        \# pretraining episodes & 100 \\
        \# value function pre-training & 100 \\
        \# evaluation episodes & 100 \\
        initial replay size & 4096 \\
        warmup transitions & 4096 \\
        replay buffer size & 150000 \\
        actor lr & 0.0001 \\
        critic lr & 0.0001 \\
        target entropy & linear decay from -2 to -7 \\
        value function loss & Huber Loss \\
        \hline
    \end{tabular}
    \vspace{0.5em}
    \normalsize
    \caption{Parameters for the Real Robot Air Hockey Task}
    \label{tab:air_hockey_real_hyperparameters}
\end{table}
\color{black}
\begin{IEEEbiography}[{\includegraphics[width=1in,height=1.25in,clip,keepaspectratio]{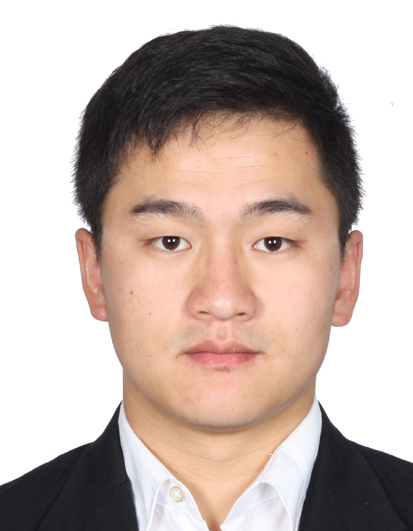}}]{Puze Liu}
is pursuing his Ph. D. degree at Intelligent Autonomous Systems Group, Technical University Darmstadt since 2019. Prior to this, Puze received his M. Sc. in Computational Engineering from Technical University Berlin and B. Sc from Tongji University, China. Puze's research interest lies in the interdisciplinary field of robot learning that tries to integrate machine learning techniques into robotics. His prior work focuses on optimization, control, reinforcement learning, and safety in robotics.
\end{IEEEbiography}

\vskip -1.9\baselineskip plus -1fil
\begin{IEEEbiography}[{\includegraphics[width=1in,height=1.25in,clip,keepaspectratio]{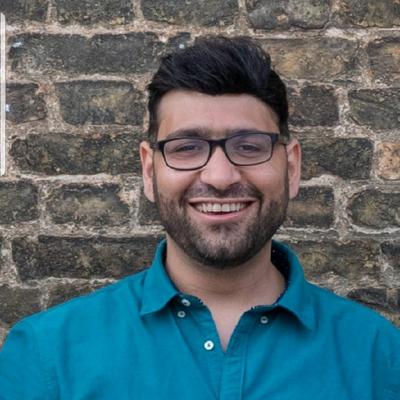}}]{Haitham Bou-Ammar}
leads the reinforcement learning team at Huawei Technologies Research \& Development UK and is an Honorary Lecturer at UCL. 
His primary research interests lie in the field of statistical machine learning and artificial intelligence, focusing on Bayesian optimization, probabilistic modeling, and reinforcement learning. He is also interested in learning using massive amounts of data over extended time horizons, a property common to "Big-Data" problems.
His research also spans different areas of control theory, nonlinear dynamical systems, social networks, and distributed optimization.
\end{IEEEbiography}

\vskip -1.9\baselineskip plus -1fil
\begin{IEEEbiography}[{\includegraphics[width=1in,height=1.25in,clip,keepaspectratio]{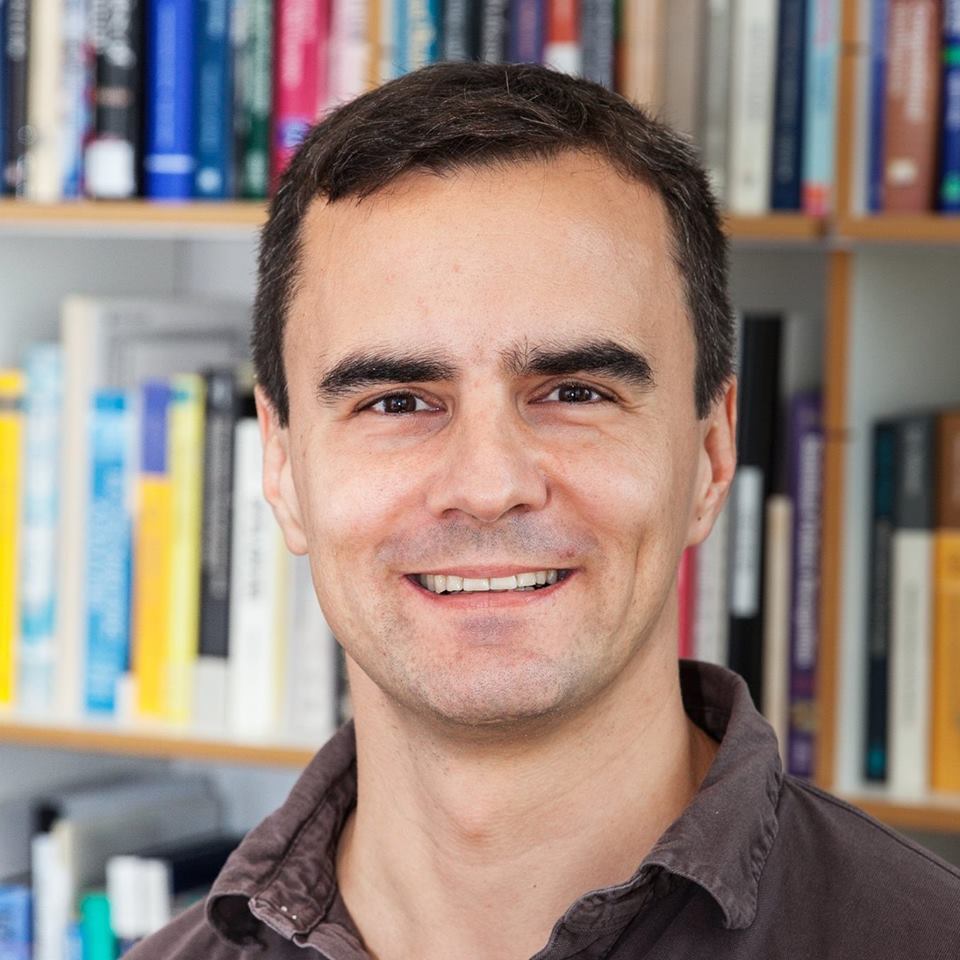}}]{Jan Peters}
is a full professor (W3) for Intelligent Autonomous Systems at the
Computer Science Department of the Technische Universitaet Darmstadt.
%and at the same time a senior research scientist and group leader at the Max-Planck Institute for Intelligent Systems, where he heads the interdepartmental Robot Learning Group.
Jan Peters has received the Dick Volz Best 2007 US Ph.D. Thesis Runner-Up Award, the Robotics: Science \& Systems - Early Career Spotlight, the INNS Young Investigator Award, and the IEEE Robotics \& Automation Society's Early Career Award as well as numerous best paper awards. In 2015, he received an ERC Starting Grant and in 2019, he was appointed as an IEEE Fellow.
\end{IEEEbiography}

\vskip -1.9\baselineskip plus -1fil
\begin{IEEEbiography}[{\includegraphics[width=1in,height=1.25in,clip,keepaspectratio]{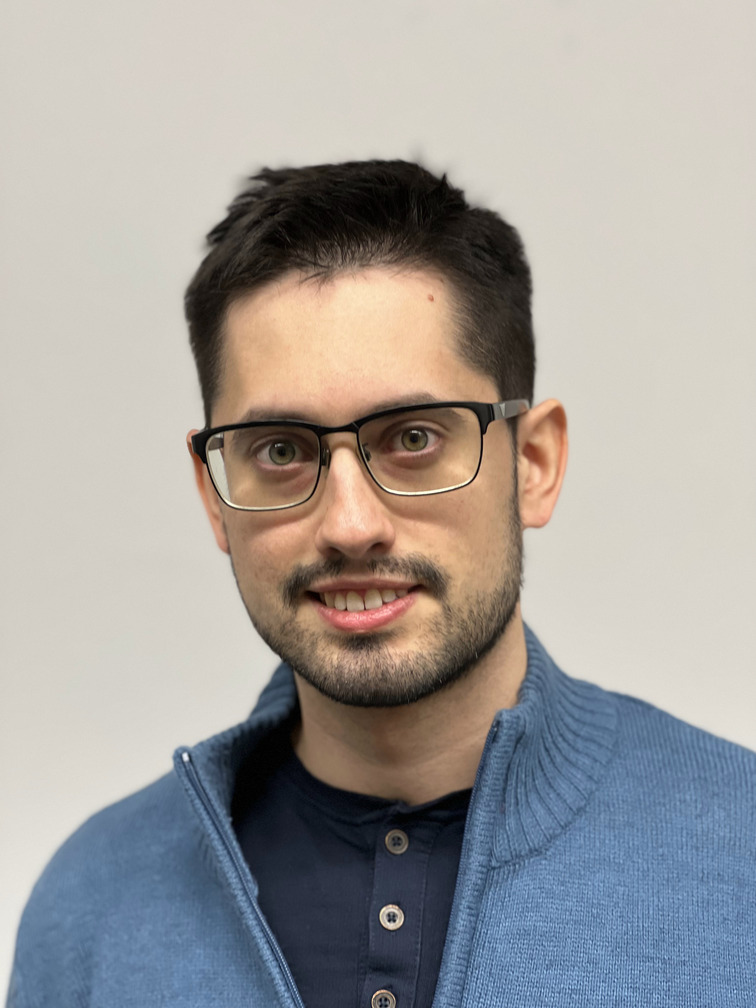}}]{Davide Tateo}
is a Research Group Leader at the Intelligent Autonomous Systems Laboratory in the Computer Science Department of the Technical University of Darmstadt. 
He received his M.Sc. degree in Computer Engineering at Politecnico di Milano in 2014 and his Ph.D. in Information Technology from the same university in 2019. 
Davide Tateo worked in many areas of Robotics and Reinforcement Learning, Planning, and Perception.
His main research interest is Robot Learning, focusing on high-speed motions, locomotion, and safety.
% He is one of the co-authors of the MushroomRL Reinforcement Learning library.
\end{IEEEbiography}

\end{document}